\documentclass{article}
\PassOptionsToPackage{numbers,sort&compress}{natbib}

\usepackage[preprint]{neurips_2026}

\usepackage[utf8]{inputenc} %
\usepackage[T1]{fontenc}    %
\usepackage{hyperref}       %
\usepackage{url}            %
\usepackage{booktabs}       %
\usepackage{amsfonts}       %
\usepackage{nicefrac}       %
\usepackage{microtype}      %
\usepackage{xcolor}         %

\usepackage{amsmath}
\usepackage{amssymb}
\usepackage{mathtools}
\usepackage{amsthm}
\usepackage{multirow}
\usepackage{makecell}
\usepackage{caption}
\usepackage{subcaption}
\usepackage{wrapfig}
\usepackage{longtable}
\usepackage{tcolorbox}
\usepackage{arydshln}
\usepackage{siunitx}
\usepackage{siunitx}
\usepackage{enumitem}
\usepackage[whole]{bxcjkjatype}
\usepackage{CJKutf8}
\newcommand{\Chinese}[1]{{\begin{CJK*}{UTF8}{gbsn}#1\end{CJK*}}}

\usepackage{amsmath,amsfonts,bm}

\def\eqref#1{equation~\ref{#1}}

\def\1{\bm{1}}

\def\vh{{\bm{h}}}

\def\vx{{\bm{x}}}

\def\mA{{\bm{A}}}

\def\mH{{\bm{H}}}

\def\mK{{\bm{K}}}
\def\mL{{\bm{L}}}

\def\mW{{\bm{W}}}

\DeclareMathAlphabet{\mathsfit}{\encodingdefault}{\sfdefault}{m}{sl}
\SetMathAlphabet{\mathsfit}{bold}{\encodingdefault}{\sfdefault}{bx}{n}

\def\gG{{\mathcal{G}}}

\def\gT{{\mathcal{T}}}

\def\sA{{\mathbb{A}}}
\def\sB{{\mathbb{B}}}
\def\sC{{\mathbb{C}}}

\def\sN{{\mathbb{N}}}

\def\sS{{\mathbb{S}}}

\def\sV{{\mathbb{V}}}

\def\emA{{A}}

\def\emW{{W}}

\newcommand{\R}{\mathbb{R}}

\newcommand{\parents}{Pa} %

\usepackage{todonotes}

\theoremstyle{plain}
\newtheorem{theorem}{Theorem}[section]
\newtheorem{proposition}[theorem]{Proposition}

\theoremstyle{definition}

\theoremstyle{remark}

\title{StructLens: A Structural Lens for Language Models via Maximum Spanning Trees}

\author{%
Haruki Sakajo
\quad
Frederikus Hudi
\quad
Yusuke Sakai
\AND
Hidetaka Kamigaito
\quad
Taro Watanabe\\\\
Nara Institute of Science and Technology (NAIST)\\
\texttt{sakajo.haruki.sd9@naist.ac.jp}
\\
\texttt{\{frederikus.hudi.fe7,sakai.yusuke.sr9,kamigaito.h,taro\}@is.naist.jp}
}

\newcommand\structlensname{\textsc{StructLens}}

\begin{document}

\maketitle

\begin{abstract}
Language exhibits inherent structures, a property that explains both language acquisition and language change.
Given this characteristic, we expect language models to manifest their own internal structures as well.
While interpretability research has investigated how models compute representations mechanistically through attention patterns and Sparse AutoEncoders, the organization of the resulting representations is overlooked.
To address this gap, we introduce \textsc{StructLens}, a framework to analyze representations through a holistic structural view.
\textsc{StructLens} constructs maximum spanning trees based on the semantic representations in residual streams, inspired by tree representation in dependency parsing, and provides summaries of token relationships in representation space.
We analyze how contiguous tokens are also nearby in representation space and find that middle layers show the strongest local-span organization.
Moreover, analysis of pre-training checkpoints reveals that smaller local units become detectable earlier in pre-training, and larger units later.
Our findings demonstrate that \textsc{StructLens} provides insights into how models organize token representations across layers and training.
Our code is available at \url{https://github.com/naist-nlp/structlens}.
\end{abstract}

\section{Introduction}
Language possesses structure.
Linguistic phenomena, such as language acquisition and language change, have been explained through underlying structural frameworks~\citep{chomsky1962ss, TOMASELLO2005, bybee2006, Bybee_2010}.
Given language's structural nature, we expect that language models (LMs), which are designed to computationally model language, should similarly exhibit their own structural properties~\citep{lee-etal-2025-geometric}.

While language exhibits such structural properties, research on LMs, e.g., interpretability and pruning, has frequently overlooked these structures.
For example, existing interpretability tools primarily analyze individual tokens or features, e.g., logit lens~\citep{logitlens} and Sparse Autoencoders (SAEs)~\citep{huben2024sparse}.
Similarly, cosine similarity that is employed for inter-layer analysis~\citep{men-etal-2025-shortgpt, jiang2025tracing} is fundamentally based on token-to-token comparisons at corresponding positions, making it challenging to capture the holistic structural pattern formed within specific layers.
To facilitate global analysis of representations at each layer, approaches that incorporate inter-token relationships and provide comprehensive structural insights are expected to make valuable contributions to LM analysis.

Several studies have utilized parsing techniques developed in Natural Language Processing (NLP) to conduct inter-layer analysis based on inter-token relationships from a linguistic, particularly generative linguistic, perspective.
These investigations have demonstrated that attention weights reflect syntactic structures~\citep{raganato-tiedemann-2018-analysis, clark-etal-2019-bert, ravishankar-etal-2021-attention, zhang2025the}, representations encode syntactic information~\citep{hewitt-manning-2019-structural, andreas2018measuring, li-eisner-2019-specializing, murty2023characterizing, hudi-etal-2024-disentangling}, and the syntactic structures emerge in a bottom-up manner~\citep{someya-etal-2025-derivational}.
Although these studies have revealed that LMs possess and utilize structures, their focus has centered on static, generative grammatical structures that presuppose a certain ground truth structure synthesized by linguists.
However, given that language exhibits dynamic structures~\citep{TOMASELLO2005, bybee2006, Bybee_2010} formed through bottom-up processes and LMs are unable to introspect their internal mechanisms, the approaches of bottom-up construction and analysis should be more appropriate to assess the LMs' own internal structure.

\begin{figure}
    \centering
    \includegraphics[width=\linewidth]{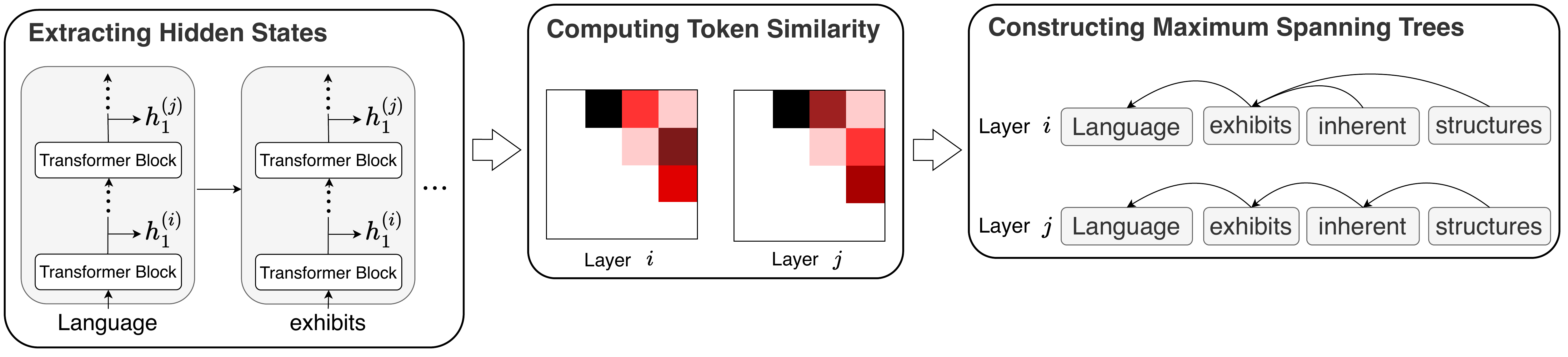}
    \caption{Overview of \structlensname.}
    \label{fig:overview}
\end{figure}

To address this gap, we propose \structlensname, a framework that constructs Maximum Spanning Trees (MSTs), i.e., a tree structure connecting all the nodes in a graph with the maximum total edge weight, using LM internal representations, analogous to those employed in dependency parsing studied in the NLP field~\citep{eisner-1996-three, yamada-matsumoto-2003-statistical, mcdonald-etal-2005-online,  mcdonald-etal-2005-non}.
Although graph structures are more general, arbitrary complex structures are often difficult to interpret.
On the other hand, tree structures provide interpretable representations of high-dimensional data~\cite{Probst2020}.
We therefore leverage tree structures to analyze LM representations.
Our approach analyzes residual streams at each layer's output, computing L2 distance between token semantic representations to construct an MST at each layer (see Figure~\ref{fig:overview}).
\structlensname\ provides an order-constrained tree summary of token relationships in representation space.
To analyze the trees, we introduce the local-span connectivity metrics that quantify whether connected tree fragments are concentrated on contiguous token intervals.
\structlensname\ finds that LMs exhibit a middle-layer local-span signature, and this signature emerged during later steps of pre-training.
We also provide a pilot study showing that structure-aware metrics computed over \structlensname\ improve layer pruning in some settings.
Our findings highlight that structure-aware perspectives are potentially useful for LM analysis and optimization.

\section{Background and Related Work}
\label{sec:background}

\paragraph{Structures in language.}
In the study of language, researchers have assumed that language possesses structures, conceived as either static, e.g., generative grammar \citep{chomsky1962ss}, or dynamic, e.g., usage-based theory \citep{TOMASELLO2005, bybee2006}.
Traditional generative grammar, i.e., transformational grammar, assumes formal rules, while usage-based approaches hypothesize that instances of use influence language representations, allowing their gradient and gradual change of language.

\paragraph{Residual stream.}
Transformer~\citep{vaswaniAttentionAllYou2017} updates internal representations gradually by utilizing residual connections.
This work assumes a variant of Transformer with pre-layer normalization architecture~\citep{xiong-etal-onlayer}, which forms a \emph{residual stream}~\citep{mathematicalframework}.
Formally, given the input features of length $n$, let $d$ be a hidden dimension and $f_\theta^{(\ell)}(\cdot): \R^{n \times d} \rightarrow \R^{n \times d}$ be an $\ell$-th layer's transformer block that comprises Multi-Head Attention and Multi-Layer Perceptron blocks. The hidden state of the input at the $\ell$-th layer is referred to as the residual stream $\mH^{(\ell)} \in \R^{n \times d}$, defined as follows:
\begin{equation}
\label{eq:residual_connection}
    \mH^{(\ell + 1)} \,=\, \mH^{(\ell)} + f_\theta^{(\ell + 1)}(\mH^{(\ell)}).
\end{equation}
Residual stream in LMs has provided insights into both interpretability work~\citep{kamigaito2025diversitytransformerlayersaspect} and layer pruning methods~\citep{yang-etal-2024-laco, men-etal-2025-shortgpt}.

\paragraph{Mechanistic interpretability for language models.}
Mechanistic interpretability is an interpretable framework, employing bottom-up methods to reveal models' computational processes and behavior~\citep{bereska2024mechanistic}.
Research on LM interpretability has examined both activations on the residual stream and the modules that transform them (e.g., Multi-Head Attention, Multi-Layer Perceptron), uncovering the nature of encoded information and functions~\citep{olsson2022incontextlearninginductionheads, kobayashi2024analyzing, rai2025practicalreviewmechanisticinterpretability, cheng2025emergence}.
Research on mechanistic interpretability has also identified models' computational circuits, which reflect underlying behaviors of LMs~\citep{ameisen-etal-circuit, NEURIPS2023_efbba771, marks2025sparse}.
Logit lens~\citep{logitlens} is a technique to analyze intermediate states by projecting intermediate representations into a vocabulary space through the final prediction layer.
Previous studies trained probes and evaluated whether targeted information (e.g., syntactic trees) is encoded in representations~\citep{tenney2018what, hewitt-manning-2019-structural, andreas2018measuring, hall-maudslay-etal-2020-tale, stanczak-etal-2022-neurons, brinkmann-etal-2025-large}. Sparse Autoencoders (SAEs) are used to identify interpretable features and causal circuits within models, addressing the challenge of superposition, where representations exhibit polysemantic properties~\citep{huben2024sparse, brinkmann-etal-2025-large, hanna-mueller-2025-incremental}.
Building on these approaches, we focus on inter-token relationships within individual layers and construct tree structures for each layer, enabling us to provide global views beyond token-level interpretations.

\section{Method}
The primary objective of \structlensname\ is to analyze Transformer layers in terms of the organization of token representations in the representation space from a holistic perspective.
Given the residual stream representations of all tokens in an input sequence, we first construct a complete graph whose edge weights reflect pairwise representation similarity.
Since arbitrary complex structures of the graph are often hard to analyze and the edge weights vary across layers (see Appendix~\ref{appendix:token-sim}), we derive a different graph, i.e., a Maximum Spanning Tree (MST), from each layer of Transformer.
The resulting MST can be interpreted as a structure of the layer-wise representation geometry and enables comparison of the structures across layers.

Formally, let $\vh^{(\ell)}_i$ denote the residual stream of the $i$-th token immediately after layer $\ell$. 
For $\ell$-th layer, given an input token sequence $\vx$ of length $n$, we first define a pairwise similarity matrix $\mW^{(\ell)} \in \mathbb{R}^{n \times n}$ as:
\begin{equation}
    \label{eq:symmetric_similarity}
    \emW^{(\ell)}_{i,j}
    =
    \begin{cases}
        \dfrac{1}{1 + \lVert \vh_i^{(\ell)} - \vh_j^{(\ell)} \rVert}, & \text{if } i \neq j \\[6pt]
        0 & \text{otherwise}.
    \end{cases}
\end{equation}
This matrix is symmetric and defines a complete graph $\gG$ without self-loops, where each node corresponds to a token in $\vx$ and each edge encodes a relation between two tokens.
We use reciprocal as a way to convert distance into similarity for the sake of numerical stability; details are provided in Appendix~\ref{appendix:token-sim}.

Given that tokens have a canonical order and autoregressive LMs have left-to-right information flow, we define a forward constrained adjacency matrix $\mA^{(\ell)} \in \mathbb{R}^{n \times n}$ as:
\begin{equation}
\label{eq:forward_projection}
    \emA^{(\ell)}_{i,j}
    =
    \begin{cases}
        \emW^{(\ell)}_{i,j} & \text{if } i < j, \\[6pt]
        0 & \text{otherwise}.
    \end{cases}
\end{equation}
The forward constraint has two useful consequences, formalized below: it preserves the undirected pairwise similarities in $\mW^{(\ell)}$, and it makes the directed MST equivalent to selecting the most similar preceding token for each non-root token under a unique-predecessor assumption.
\begin{proposition}
\label{proposition:forward-constraint}
The forward constraint is lossless with respect to the pairwise similarities.
\end{proposition}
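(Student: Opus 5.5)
The plan is to exhibit an explicit inverse map from $\mA^{(\ell)}$ back to $\mW^{(\ell)}$. That shows the map $\mW^{(\ell)} \mapsto \mA^{(\ell)}$ defined in \eqref{eq:forward_projection} is injective on the set of matrices that can arise from \eqref{eq:symmetric_similarity}, and so no pairwise similarity is lost. The only properties of $\mW^{(\ell)}$ needed are the two read off directly from \eqref{eq:symmetric_similarity}. First, it is symmetric, $\emW^{(\ell)}_{i,j} = \emW^{(\ell)}_{j,i}$, because $\lVert \vh_i^{(\ell)} - \vh_j^{(\ell)} \rVert = \lVert \vh_j^{(\ell)} - \vh_i^{(\ell)} \rVert$. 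Second, its diagonal is identically zero by definition.

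The key step is to claim
\begin{equation*}
\mW^{(\ell)} = \mA^{(\ell)} + \left(\mA^{(\ell)}\right)^{\top}
\end{equation*}
and verify it entrywise in three cases.
\begin{itemize}
\item If $i<j$: $\emA^{(\ell)}_{i,j} = \emW^{(\ell)}_{i,j}$, and $\emA^{(\ell)}_{j,i} = 0$ since $j \not< i$. The sum is therefore $\emW^{(\ell)}_{i,j}$.
\item If $i>j$: $\emA^{(\ell)}_{i,j}=0$ and $\emA^{(\ell)}_{j,i} = \emW^{(\ell)}_{j,i}$. By symmetry this equals $\emW^{(\ell)}_{i,j}$.
\item If $i=j$: both terms vanish, matching the zero diagonal of $\mW^{(\ell)}$.
\end{itemize}
Hence every unordered pair $\{i,j\}$ with $i\neq j$ has its similarity stored exactly once in $\mA^{(\ell)}$, at position $(\min(i,j),\max(i,j))$, and it can be recovered. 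Equivalently, the undirected weighted complete graph $\gG$ and the directed graph defined by $\mA^{(\ell)}$ carry the same edge-weight information.

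There is no real obstacle here; the argument is a direct check. The only point needing care is stating precisely what ``lossless'' means. I would formalize it as the existence of the left inverse $\mA \mapsto \mA + \mA^{\top}$ on the class of symmetric, zero-diagonal matrices, and note that this class contains every $\mW^{(\ell)}$ built by \eqref{eq:symmetric_similarity}. I would also remark that losslessness concerns the similarity values only, not the choice of admissible tree directions, which the forward constraint does restrict. That restriction is the subject of the second consequence mentioned before the proposition.
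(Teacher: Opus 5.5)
Your proof is correct and is essentially the paper's argument: the paper also proves losslessness by noting that $\mW^{(\ell)}$ is symmetric and decomposes as $\mW^{(\ell)} = \mA^{(\ell)} + {\mA^{(\ell)}}^\top$. Your entrywise case check, including the zero-diagonal case, simply spells out the verification that the paper leaves implicit.
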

\begin{proof}
$\mW^{(\ell)}$ is symmetric and can be decomposed as:
\begin{equation}
    \label{eq:symmetirc_matrix_decomposition}
    \mW^{(\ell)} = \mA^{(\ell)} + {\mA^{(\ell)}}^\top.
\end{equation}
Therefore, the forward constrained matrix $\mA$ preserves the pairwise similarity.
\end{proof}

Under the forward constraint, \structlensname\ has a particularly simple interpretation.
Each token selects the most similar predecessor in representation space, producing an acyclic directed graph.
This makes \structlensname\ scalable and easy to interpret.

For each layer, we build a single-root, order-respecting MST on $\mA^{(\ell)}$.
Although the optimal tree can be obtained by selecting highest-weight predecessor, we use the algorithm introduced by \cite{Tarjan1977}, which runs in $\mathcal{O}(n^2)$ time for a dense graph and is based on Chu-Liu/Edmonds' algorithm~\cite{chuliu1965shortest, edmonds1967optimum}, as it provides a unified implementation for the present setting and for future \structlensname\ variants with other edge constraints.

\structlensname\ summarizes the geometry of token residual-stream representations as an order-constrained directed tree.
Each edge connects a token to a similar preceding token, yielding an interpretable backbone of local and non-local representational relationships within a layer.
For simplicity, we refer to this tree as an MST.

\section{Analyzing Layers through \structlensname}
\label{sec: layer-analysis}

\subsection{Local-span Connectivity in Pretrained Language Models}
\label{sec:chunk_analysis}
\begin{wrapfigure}{r}{0.3\textwidth}
    \centering
    \vspace{-1.0em}
    \includegraphics[width=\linewidth]{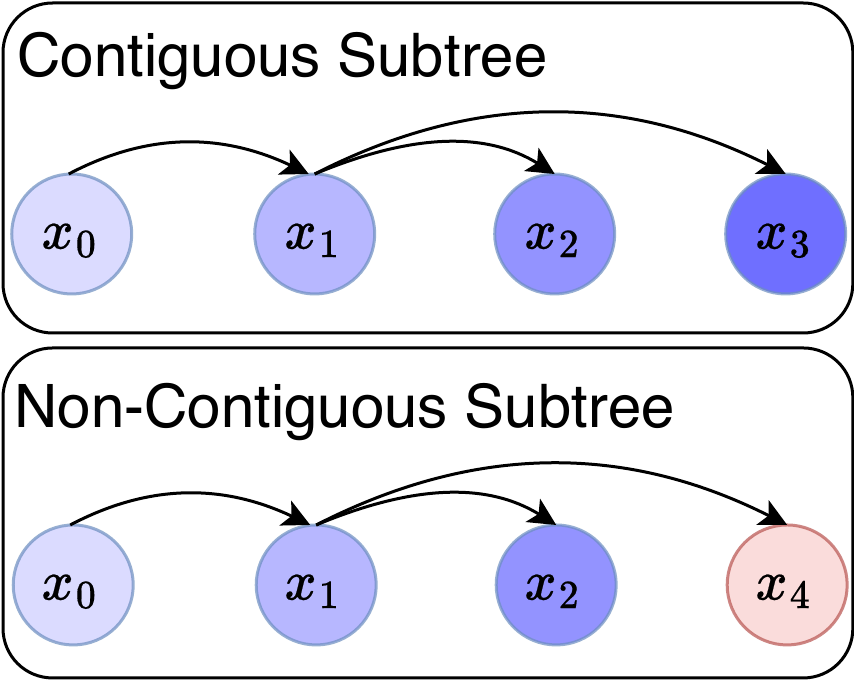}
    \vspace{-0.5em}
    \caption{Contiguous and non-contiguous 4-node subtrees.}
    \label{fig:contiguous_subtree_sample}
    \vspace{-1.0em}
\end{wrapfigure}
Inspired by context-dependent units seen in linguistic structures for humans, i.e., chunks,
we study whether \structlensname\ trees exhibit local-span connectivity, where connected subtrees whose token indices form contiguous intervals.
We use this as an operational measure of local organization in representation space.
Specifically, we focus on the subtrees of $k$ nodes spanning over contiguous position tokens without any skips, as shown in Figure~\ref{fig:contiguous_subtree_sample}, and we refer to them as \emph{contiguous subtrees}.

We measure the \emph{contiguous subtree ratio}, which quantifies the density of contiguous subtrees, indicating how strongly the $k$-node subtrees concentrate on local token spans, and the \emph{contiguous token ratio}, which measures the coverage of such tokens.
These ratios characterize token relationship trees in representation space with respect to local connectivity.

Formally, a $k$-node subtree is a connected induced subgraph of a tree $\gT$ containing $k$ nodes.
A contiguous $k$-node subtree is a $k$-node subtree whose token indices form an interval $\{a, a+1, \dots, a + k - 1\}$.
Let $C_{k}(\gT)$ and $C_{all}(\gT)$ denote the number of contiguous $k$-node subtrees of a tree $\gT$ and the total number of $k$-node subtrees of $\gT$, respectively.
The \emph{contiguous subtree ratio} of $\gT$ is defined as:
$\frac{C_{k}(\gT)}{C_{all}(\gT)}$.
We also investigate the number of tokens covered by contiguous subtrees.
Let $\sN_{\gT}$ denote the set of tokens in the contiguous subtrees of $\gT$. The \emph{contiguous token ratio} is defined as:
$\frac{|\sN_{\gT}|}{n}$.

\begin{wrapfigure}{r}{0.35\textwidth}
    \centering
    \vspace{-1.0em}
    \includegraphics[width=\linewidth]{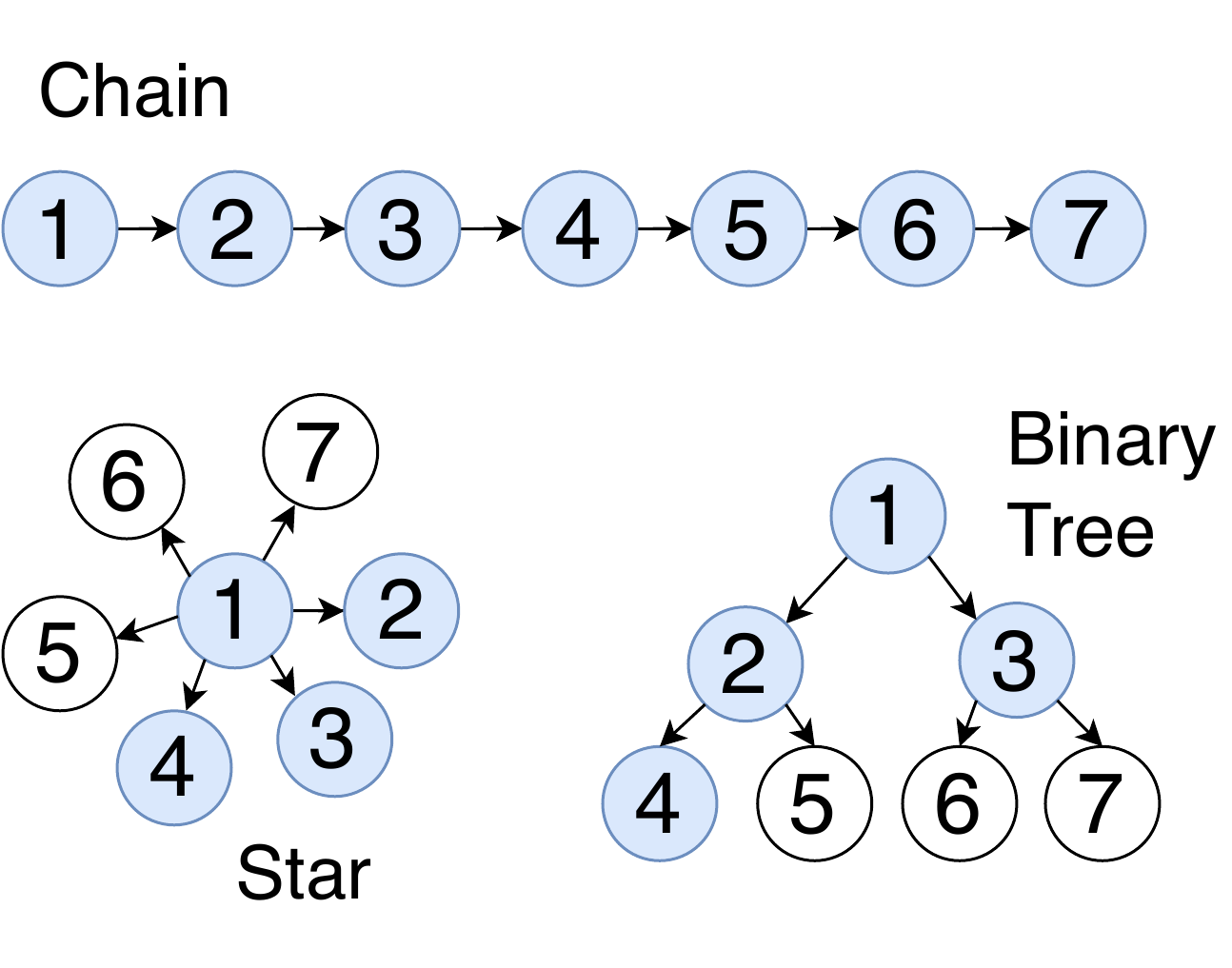}
    \vspace{-0.5em}
    \caption{Illustration of trees. Colored nodes form contiguous subtrees.}
    \label{fig:tree_sample}
    \vspace{-1.0em}
\end{wrapfigure}
To illustrate the difference between the contiguous subtree ratio and the contiguous token ratio, we consider 4-node contiguous subtrees in three simple 7-node trees, as illustrated in Figure~\ref{fig:tree_sample}.
We present rooted trees by strict S-expressions, where the first element is the parent and the following elements are its children.
For the chain tree $(1(2(3(4(5(6(7)))))))$, the contiguous subtree ratio and the contiguous token ratio are 1.
For the star tree $(1(2)(3)(4)(5)(6)(7))$, the total number of 4-node subtrees is 20, the total number of contiguous subtrees is 1, and the total number of contiguous tokens is 4.
Therefore, the contiguous subtree ratio is $\frac{1}{20}$ and the contiguous token ratio is $\frac{4}{7}$.
For the binary tree $(1(2(4)(5))(3(6)(7)))$, the total number of 4-node subtrees is 6, and the total number of contiguous subtrees is 1.
Therefore, the contiguous subtree ratio is $\frac{1}{6}$ and the contiguous token ratio is $\frac{4}{7}$.

Furthermore, to determine whether contiguous subtrees are derived from implicit positional information or patterns dependent on natural token order, we compare the original input order with the shuffled input order.
If LMs exhibit contiguous subtrees only when the input is ordered as in the original order, this indicates that they are sensitive to natural token order beyond the tested shuffled baseline.

\paragraph{Experimental settings.}
\label{sec: layer-analysis-exp-settings}
We employ Llama3.1 8B~\citep{grattafiori2024llama3herdmodels} and Qwen2.5 7B~\citep{qwen2025qwen25technicalreport} for our experiments.
The evaluation datasets are MMLU~\citep{hendrycks2021measuring}, which is an English multiple-choice Question-Answering dataset with four choices, and Multinews~\citep{fabbri-etal-2019-multi}, which is a summarization dataset in English.
We randomly sample instances from each dataset and employ prompt templates with five-shot examples from the development set of each dataset, as used in the MMLU paper and LongBench~\citep{bai-etal-2024-longbench} for Multinews.
As a case study, we investigate 4-node contiguous subtrees.
We also test 3- and 5-node contiguous subtrees on MMLU.
Detailed description of experimental settings is provided in Appendix~\ref{appendix:experimental-settings}.
When shuffling input tokens, we keep the position of the begin-of-sentence token and shuffle other tokens. 
We run the shuffling experiment with three seeds \{0, 1, 2\} and report the average.

\begin{figure}[t]
    \centering
    \begin{subfigure}[b]{0.49\linewidth}
        \includegraphics[width=\textwidth]{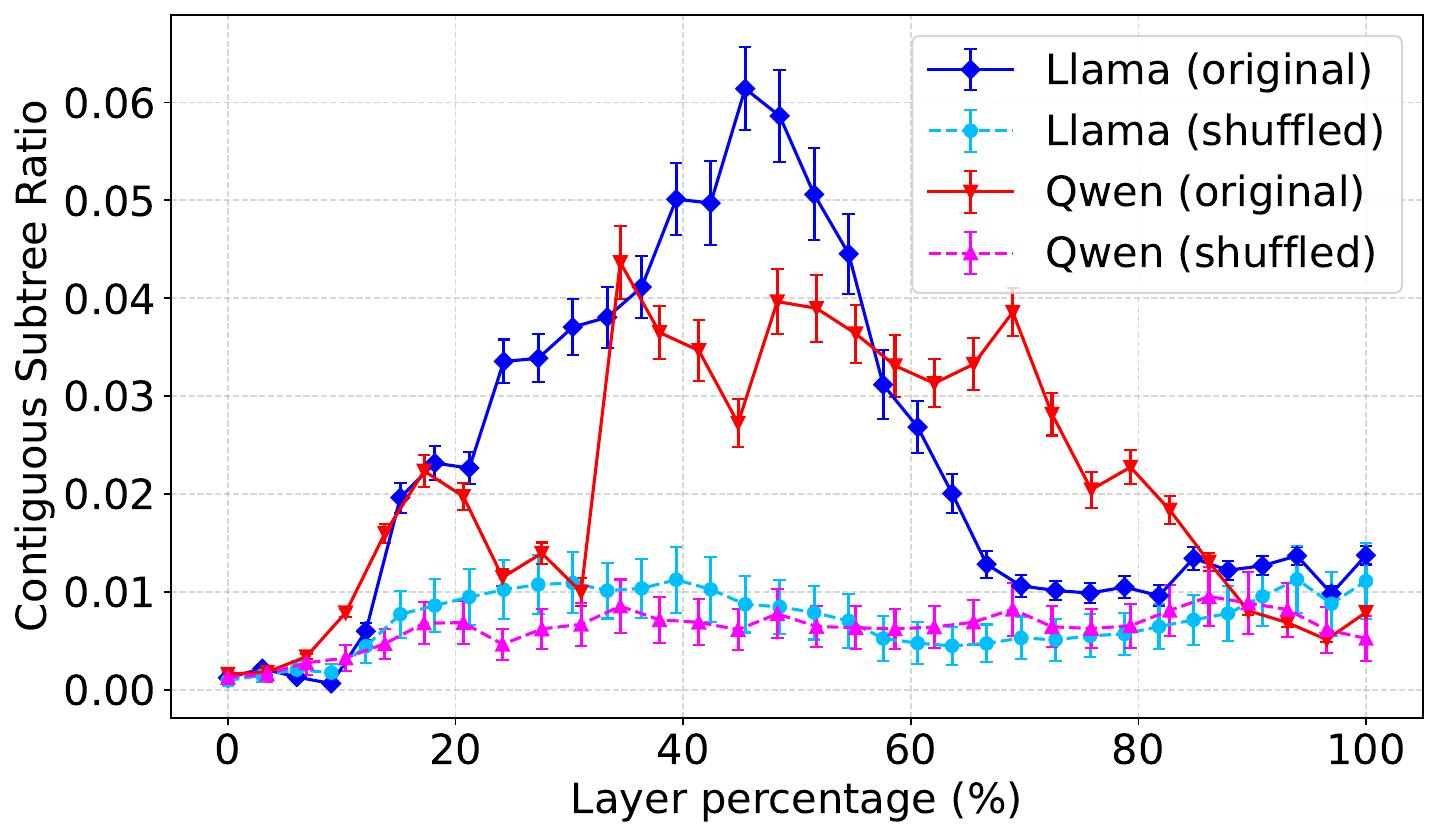}
        \caption{Subtree Ratio (MMLU)}
    \end{subfigure}
    \begin{subfigure}[b]{0.49\linewidth}
        \includegraphics[width=\textwidth]{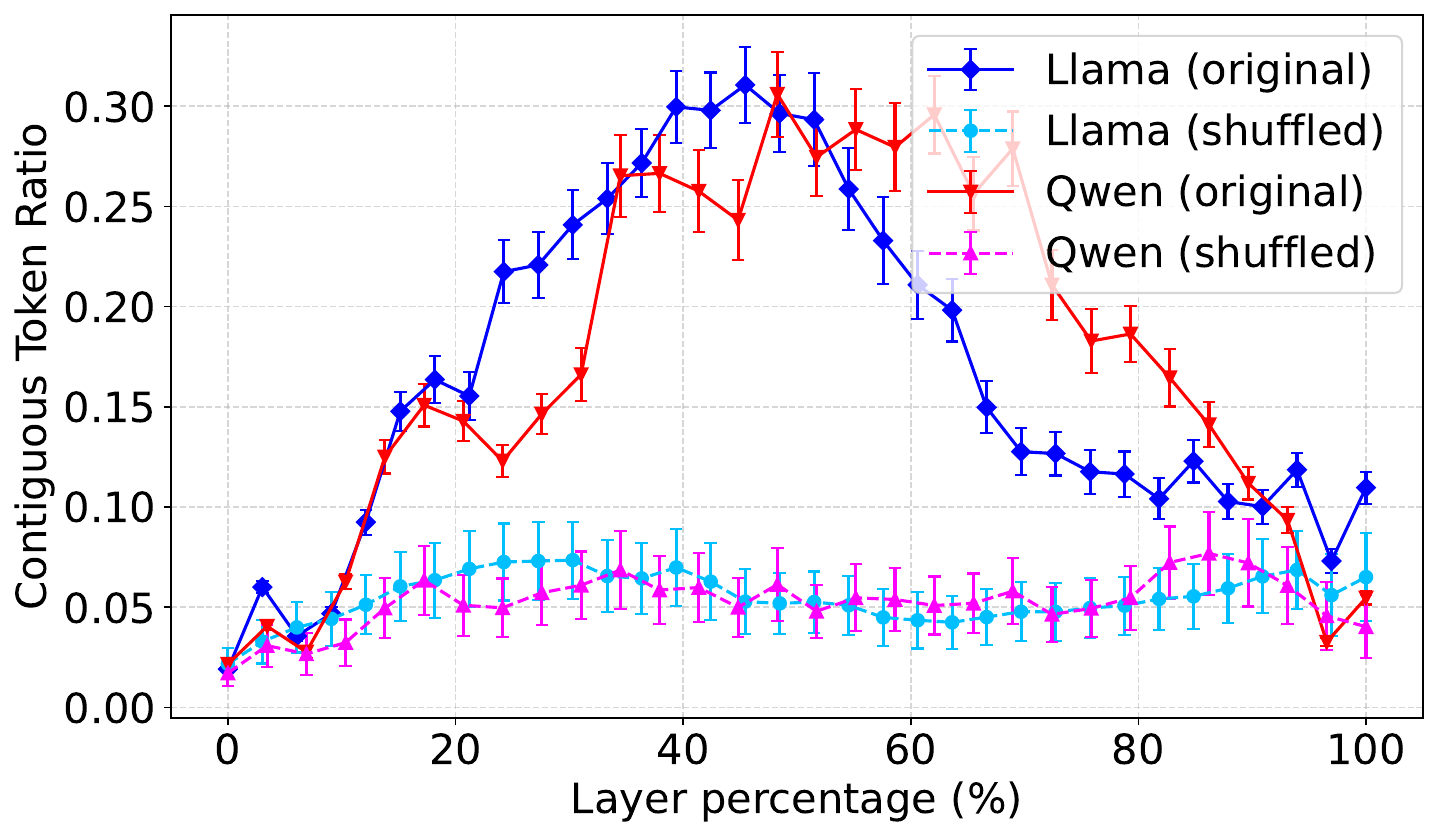}   
        \caption{Token Ratio (MMLU)}
    \end{subfigure}
    \begin{subfigure}[b]{0.49\linewidth}
        \includegraphics[width=\textwidth]{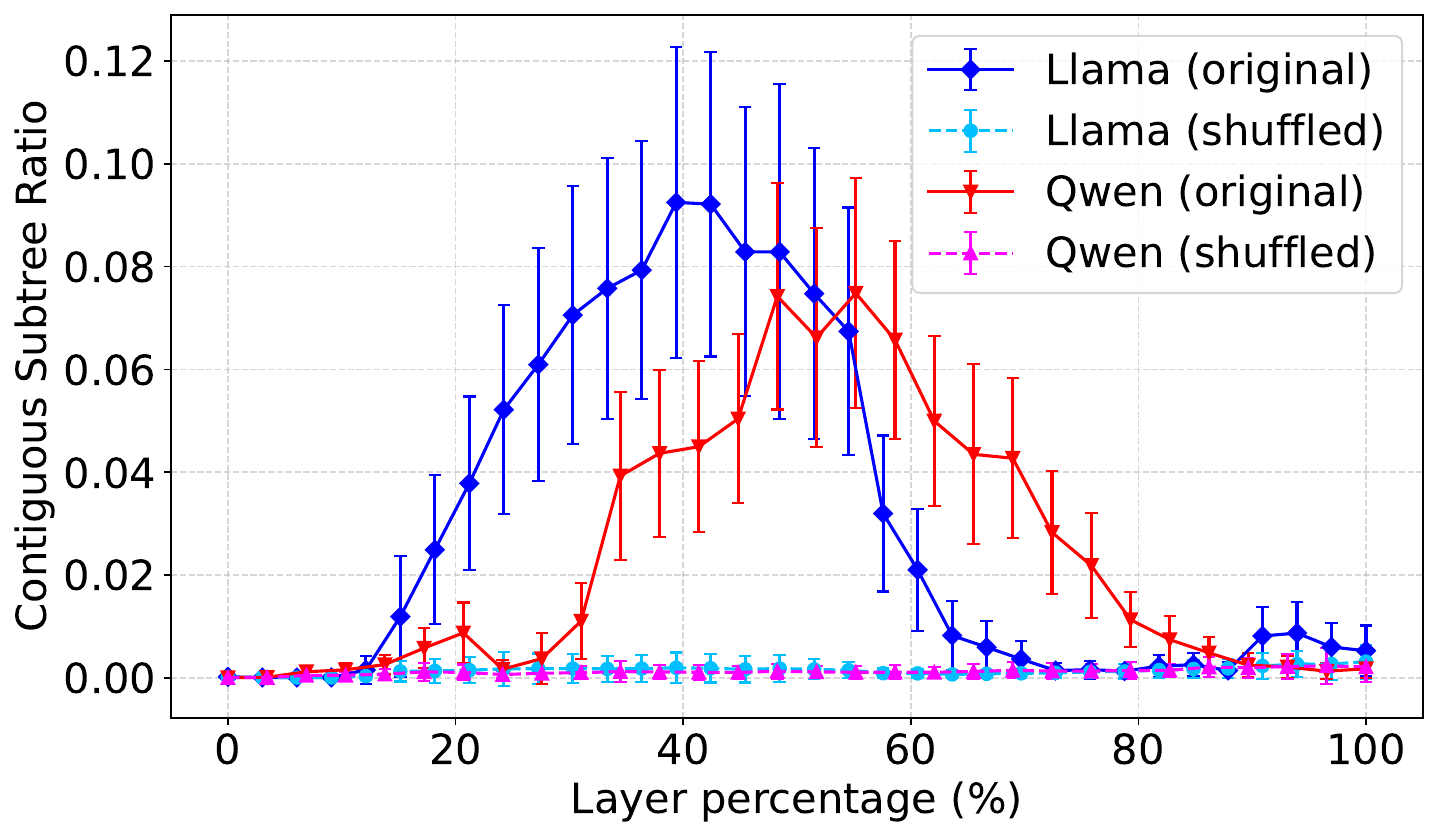}
        \caption{Subtree Ratio (Multinews)}
    \end{subfigure}
    \begin{subfigure}[b]{0.49\linewidth}
        \includegraphics[width=\textwidth]{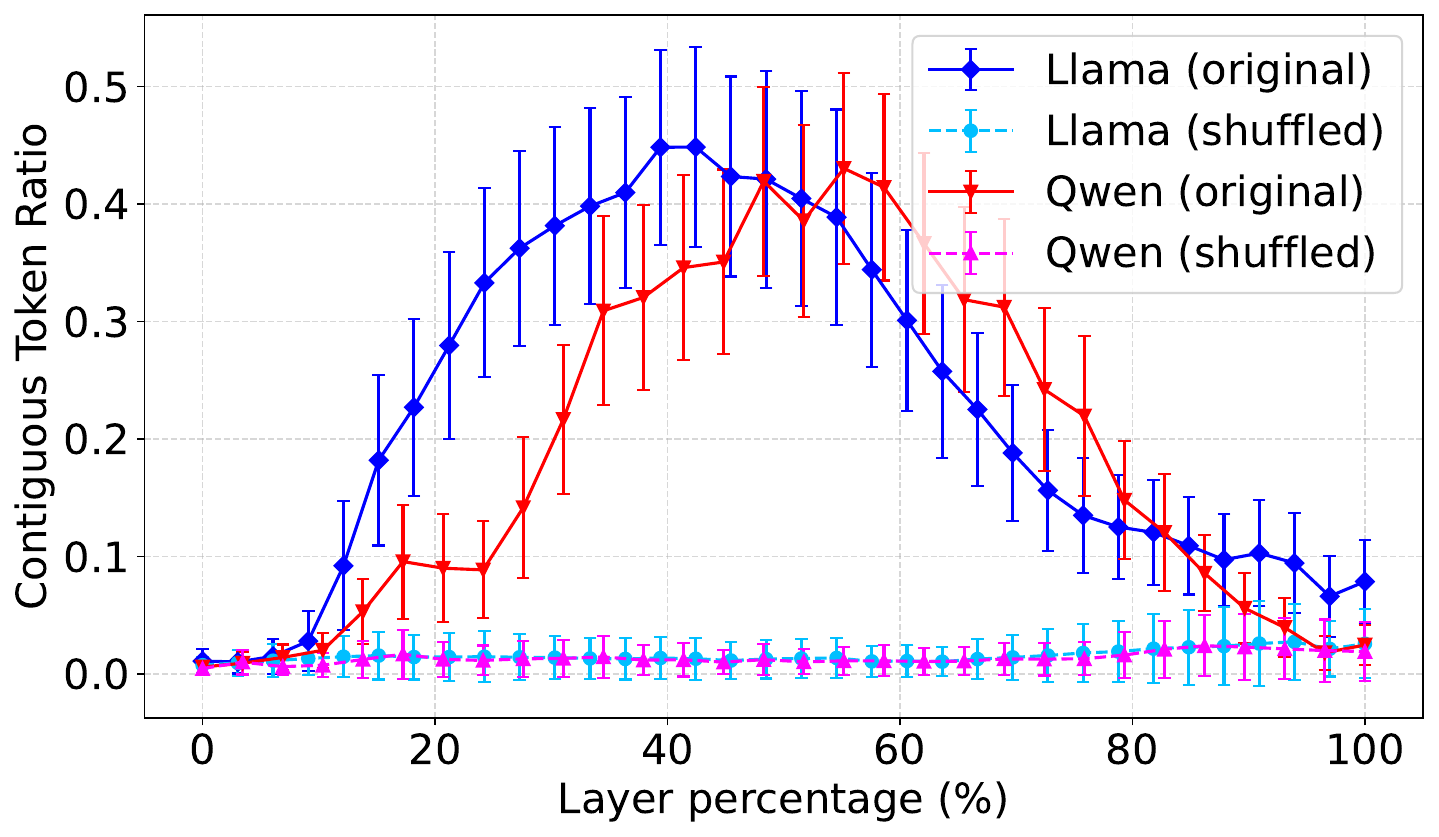}
        \caption{Token Ratio (Multinews)}
    \end{subfigure}
    \caption{Visualization of the layer-wise evolution of the contiguous subtrees and tokens in them. The x-axis denotes layer depth as a percentage, and the y-axis denotes the contiguous position subtree and the contiguous position token ratio.}
    \label{fig:contiguous_subtree}
\end{figure}

\paragraph{Results.}
Figure~\ref{fig:contiguous_subtree} shows the contiguous subtree and token ratios for all layers by averaging over the samples.
This figure reveals that, from lower to middle layers, i.e., 0\% to 50\% of the layer percentage, both ratios become larger, indicating that contiguous local structures become more frequent and more widely distributed across tokens.%
In higher layers, the ratios decrease, suggesting that this local organization is reduced.
This pattern, where the intermediate layers behave differently from other layers, has also been observed in geometric metrics and embedding tasks~\cite{hosseini2023large, skean2025layer}, supporting that the structures derived by \structlensname\ reflect meaningful properties of layer-wise representations.%
We also conducted the same experiments on MMLU with the original input order for 3-node and 5-node subtrees in Appendix~\ref{appendix:contiguous-subtree-3-5-node}, which show similar trends to those observed for 4-node subtrees.
This indicates that this middle-layer pattern is not only for 4-node subtrees but also for another local-span pattern.
Shuffling results show that models organize fewer contiguous subtrees, indicating that the contiguous subtrees observed in the original setting depend on the original token order.
These results suggest that LMs organize token representations into contiguous structures in intermediate layers before distributing them or forming non-local structures in higher layers.

\subsection{Training Dynamics through \structlensname}
\label{sec:training-dynamics}

During pre-training, downstream task performance improves gradually, while mid-training yields substantial performance gains across several tasks~\citep{blakeney2024does, walsh2025}.
We further examine when LMs acquire the behavior in the middle layers during pre-training and mid-training by measuring contiguous subtree and token ratios across multiple training checkpoints, revealing how and when they do so.

\paragraph{Experimental settings.}
We employ Olmo2 7B~\citep{walsh2025} and its checkpoints during pre-training.
We use checkpoints at steps \{1,000, 10,000, 101,000, 500,000, 750,000, 928,646\} of stage 1, and at 1,000 steps of one of the checkpoints of stage 2 called Mid-training~\cite{walsh2025}.
We refer to them as stage1-1k, stage1-10k, stage1-101k, stage1-500k, stage1-750k, stage1-928k, and stage2-1k, respectively.
The evaluation dataset is MMLU, and we compute the contiguous subtree and token ratios for the 3-, 4-, and 5-node subtrees, and test for the original input order and the shuffled order.
We use the same settings as reported in Section~\ref{sec:chunk_analysis}.

\begin{figure}[t]
    \centering
    \includegraphics[width=\linewidth]{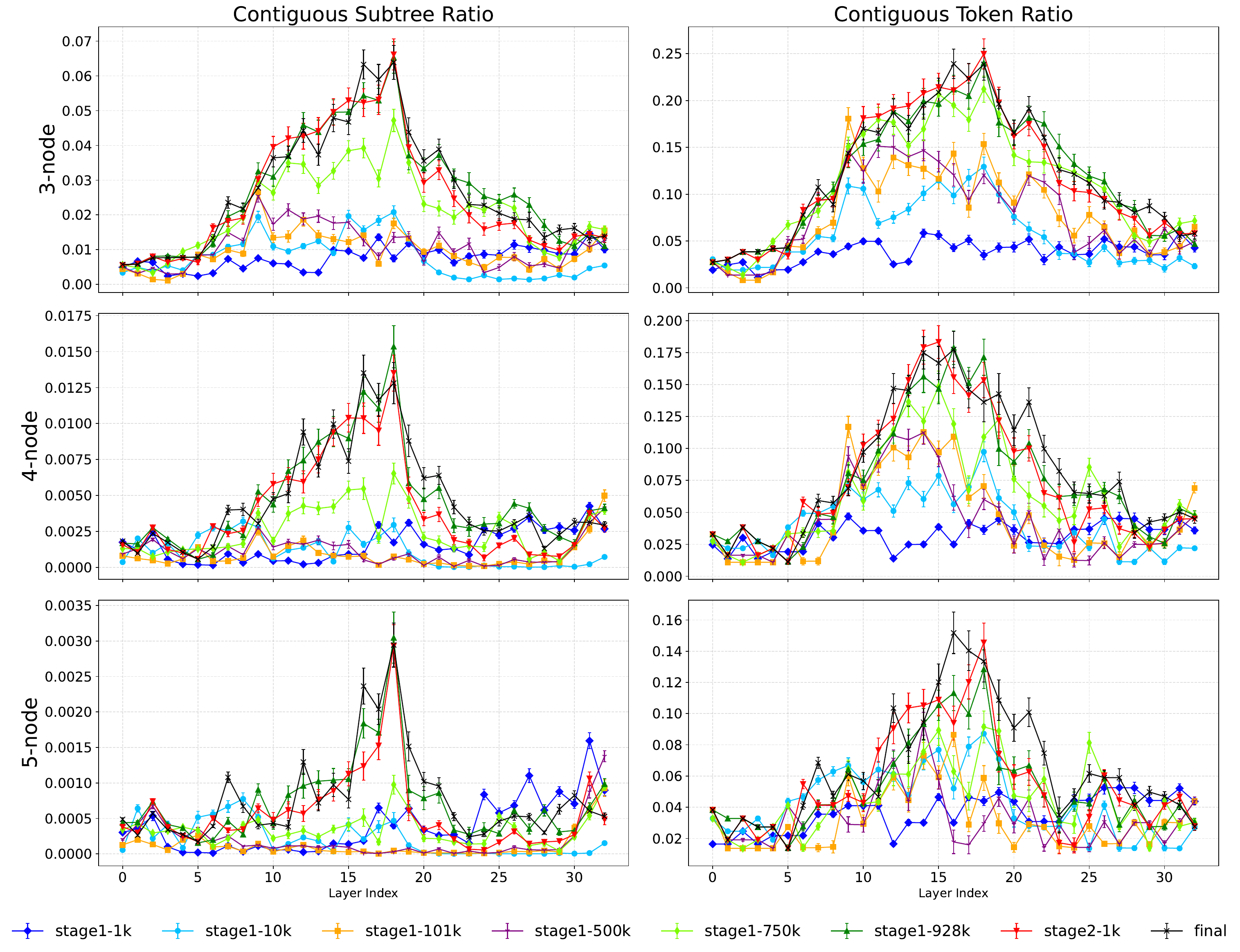}
    \caption{Visualization of the layer-wise evolution of the 3-, 4-, and 5-node contiguous subtrees and tokens in them of multiple checkpoints of Olmo2 7B for MMLU. In each plot, the x-axis denotes layer index, and the y-axis denotes the contiguous subtree and token ratio.}
    \label{fig:contiguous_subtree_olmo}
\end{figure}

\paragraph{Results.}
Figure~\ref{fig:contiguous_subtree_olmo} shows the contiguous subtree and token ratios for multiple checkpoints as reported in Section~\ref{sec:chunk_analysis}.
The shuffling results are reported in Appendix~\ref{appendix:contiguous-subtree-olmo-shuffled}.
This figure shows that the subtree ratio in the middle layers differs between later checkpoints (stage1-928k and stage2-1k) and earlier ones, while the model behaves similarly across checkpoints in the lower layers.
By analyzing different subtree size $k = \{3, 4, 5\}$, \structlensname\ reveals a scale-dependent emergence of local-span connectivity.
Smaller subtrees become detectable earlier in training, especially in the token-coverage metric, whereas larger contiguous subtrees emerge later in the middle layers.
This suggests that the model first develops short-range local connectivity, and later training consolidates this connectivity into denser, larger-span tree fragments.
The contiguous token ratio also suggests that such structures are distributed broadly across tokens at earlier steps.
Notably, this emergence is unobserved in training, optimizer, or downstream metrics\footnote{Available at\\\url{https://wandb.ai/ai2-llm/OLMo-2-1124-7B/reports/OLMo-2-7B-Nov-2024--VmlldzoxMDUzMzE1OA}.}.
Comparing the structural signal with loss, optimizer, and downstream metrics can be an important direction of future work.

\subsection{Token Organization in Language Models through \structlensname}

Layer-wise analysis shows that the contiguous local structures became more dominant in the middle layers and are reduced in the higher layers.
Training dynamics further show that this behavior emerges at later checkpoints.
The contiguous token ratio increases during earlier steps of pre-training, indicating that contiguous local structures become more broadly distributed across tokens.
In contrast, the contiguous subtree ratio exhibits a sharper increase at later checkpoints, indicating that such structures become more dominant among tree fragments.
These trends suggest a two-stage process during pre-training: first, smaller contiguous structures spread across the sequence, and later, they become more concentrated and larger within the representation geometry.
In summary, \structlensname\ reveals how token representations are organized across layers and during pre-training.

\section{Inter-Layer Similarity through \structlensname}

\subsection{Measuring Inter-Layer Similarity through \structlensname}
For analyzing layer redundancy in LMs, established methods, e.g., cosine similarity, are employed to quantify layer similarity~\citep{jiang2025tracing, men-etal-2025-shortgpt}.
These conventional approaches measure similarity between representations at corresponding positions, capturing local pairwise relationships.
However, these methods lack a global perspective encompassing intra-layer token relationships and do not provide a holistic view of layer-level interaction.
In this study, we compute layer similarity with three structure-aware similarity metrics to measure comprehensive and holistic relationships using \structlensname.

\paragraph{Cosine Similarity for \structlensname~(Cos-Struct).}

For each subtree of depth~2, we compute the average of the hidden representations of the parent and its children, yielding a flattened subtree of depth~1. This process is applied recursively until only a single representation remains at the root node. Let $\sC_i$ be the set of child nodes for $i$, defined as $\sC_i = \{\, j \in \{1,\dots,n\} \,\mid\, \parents(j) = i \,\}$. The aggregated representation by averaging at node $i$ is defined recursively as:
\begin{equation}
    \bar{\vh}_i = \frac{1}{|\sC_i|+1} \left( \frac{\vh_i}{||\vh_i||} + \sum\nolimits_{j \in \sC_i} \bar{\vh}_j \right).
\end{equation}
The aggregated representation at the root node of layer $\ell$ is denoted by $\bar{\vh}^{(\ell)}$. The structural similarity between two layers, $\ell_a$ and $\ell_b$, is then measured by the cosine similarity of their aggregated root representations:
\begin{equation}
\label{eq: cos-struct}
    \text{score}_\text{Cos-Struct}(\ell_a, \ell_b) = 
    \cos\!\left(\bar{\vh}^{(\ell_a)}, \bar{\vh}^{(\ell_b)}\right).
\end{equation}
Cos-Struct operates in $O(n)$ time.
Although Cos-Struct incorporates structural aggregation, it still does not directly measure the structural similarity between trees induced by \structlensname.

\paragraph{Tree Edit Distance~(Tree-Edit).}
The Tree Edit Distance~\cite{zhangshasha} has been widely applied and studied as a method for quantifying dissimilarity between two ordered labeled trees~\citep{benjamin2022revisitingtreeeditdistance}. Here, we explore its utility as a negative similarity score.

For a given graph $\gG$, let $\gT_a$ denote the \structlensname\ maximum spanning tree corresponding to layer $\ell_a$. Let $\mathcal{P}(\gT_a,\gT_b)$ be the set of edit scripts that transform $\gT_a$ into $\gT_b$, and let $c(o)$ denote the cost for an edit operation $o$ in an edit script of $\pi$.
The Tree-Edit score is defined as:
\begin{equation}
\label{eq: ted}
    \text{score}_\text{Tree-Edit}(\ell_a,\ell_b) =
    -\left( \min_{\pi \in \mathcal{P}(\gT_a,\gT_b)} \sum\nolimits_{o \in \pi} c(o) \right).
\end{equation}
Tree-Edit operates in $O(n^4)$ time.
Tree-Edit, however, is unable to move an entire subtree since such changes require deletion and insertion operations recursively for all nodes and edges in the subtree.

\paragraph{Edge Edit Distance~(Edge-Edit).}
We employ a more straightforward edge-based edit distance metric, which mitigates score variations caused by the subtree movement between layers, providing a direct and more stable structural comparison. %
We define the edge-set $\sS_{\gT}$ of a tree $\gT$ as the set of parent-child pairs.
Let $r$ be the root node, and let $\parents_{T}\left(i\right)$ denote the parent node of any node $i$ other than the root $r$. The edge-set is then given by:
\begin{equation}
\label{eq: edge-set}
    \sS_{\gT} \,=\, \{\, \left(i,\parents_{T}(i)\right) \;\mid\; i \in \{1,\dots,n\},\; i \neq r \,\}.
\end{equation}
Let $\gT_a$ and $\gT_b$ be the spanning trees correspond to layer $\ell_a$ and $\ell_b$, respectively, and let $\sS_{\gT_a}$ and $\sS_{\gT_b}$ be the respective edge-set as defined in Equation \ref{eq: edge-set}.
As the two trees have the same set of nodes and the same number of edges, the Edge-Edit score equals their edge~set difference:
\begin{equation}
\label{eq: edge-edit}
    \text{score}_\text{Edge-Edit}(\ell_a,\ell_b) = 
    -\left( |\sS_{\gT_a} \backslash \sS_{\gT_b}| + |\sS_{\gT_b} \backslash \sS_{\gT_a}| \right).
\end{equation}
This metric directly counts edge insertions and deletions, avoids inflated costs from subtree movements, and more stably measures structural similarity across layers with $O(n)$ time.

\subsection{Non-Structural Metrics}

To investigate how structural metrics capture inter-layer similarity, we compare them with non-structural metrics, i.e., Centered Kernel Alignment~\citep{pmlr-v97-kornblith19a} and Cosine Similarity.

\paragraph{Centered Kernel Alignment (CKA).}
Centered Kernel Alignment~(CKA) can be used to compute global inter-layer similarity.
We compute CKA using the unbiased estimator of Hilbert-Schmidt Independence Criterion~(HSIC)~\citep{song-supervised-feature}.
Formally, the inter-layer similarity with CKA is defined as:
\begin{equation}
\label{eq:cka}
    \text{score}_\text{CKA}(\ell_a, \ell_b) = \frac{\text{HSIC}(\mK, \mL)}{\sqrt{\text{HSIC}(\mK, \mK) \text{HSIC}(\mL, \mL)}},
\end{equation}
where $\mK = \mH^{(\ell_a)}{\mH^{(\ell_a)}}^\top$ and $\mL = \mH^{(\ell_b)}{\mH^{(\ell_b)}}^\top$ denote the linear Gram matrices. To mitigate the statistical bias caused by the finite sample size $n$, we employ the unbiased estimator of HSIC~\cite{song-supervised-feature}.

\paragraph{Cosine similarity~(Cos-Base).}
Cosine similarity is a widely used metric for comparing vector representations.
We measure inter-layer similarity using Cos-Base as:
\begin{equation}
\label{eq: cos-sim}
    \text{score}_\text{Cos-Base}(\ell_a, \ell_b) = \sum\nolimits_i^n \cos\! \left(\vh_i^{(\ell_a)}, \vh_i^{(\ell_b)}\right).
\end{equation}

\begin{figure}[t]
    \centering
    \begin{subfigure}[b]{0.15\textwidth}
        \includegraphics[width=\textwidth,clip,trim={0 0 24em 0}]{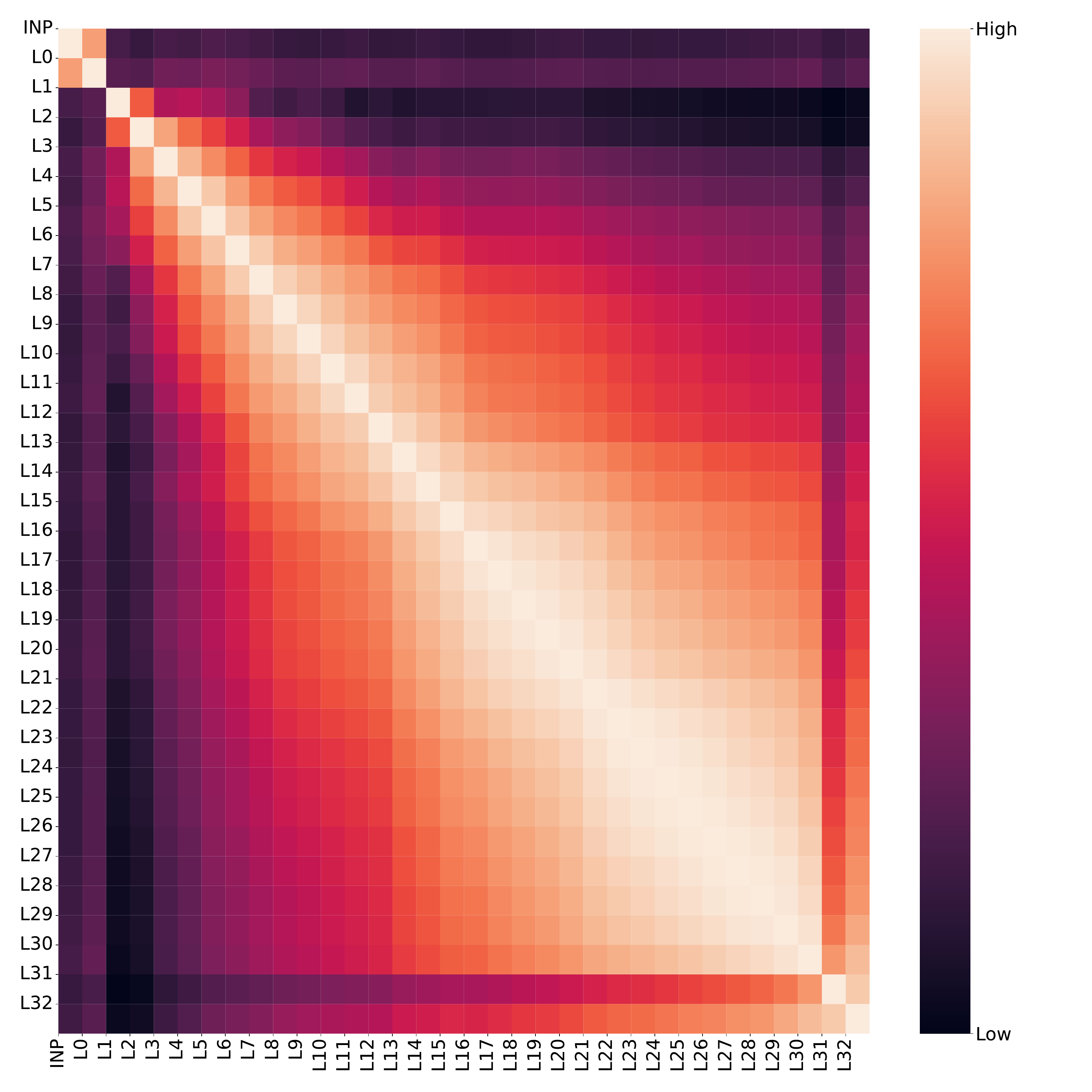}
        \caption{CKA}
    \end{subfigure}
    \begin{subfigure}[b]{0.15\textwidth}
        \includegraphics[width=\textwidth,clip,trim={0 0 24em 0}]{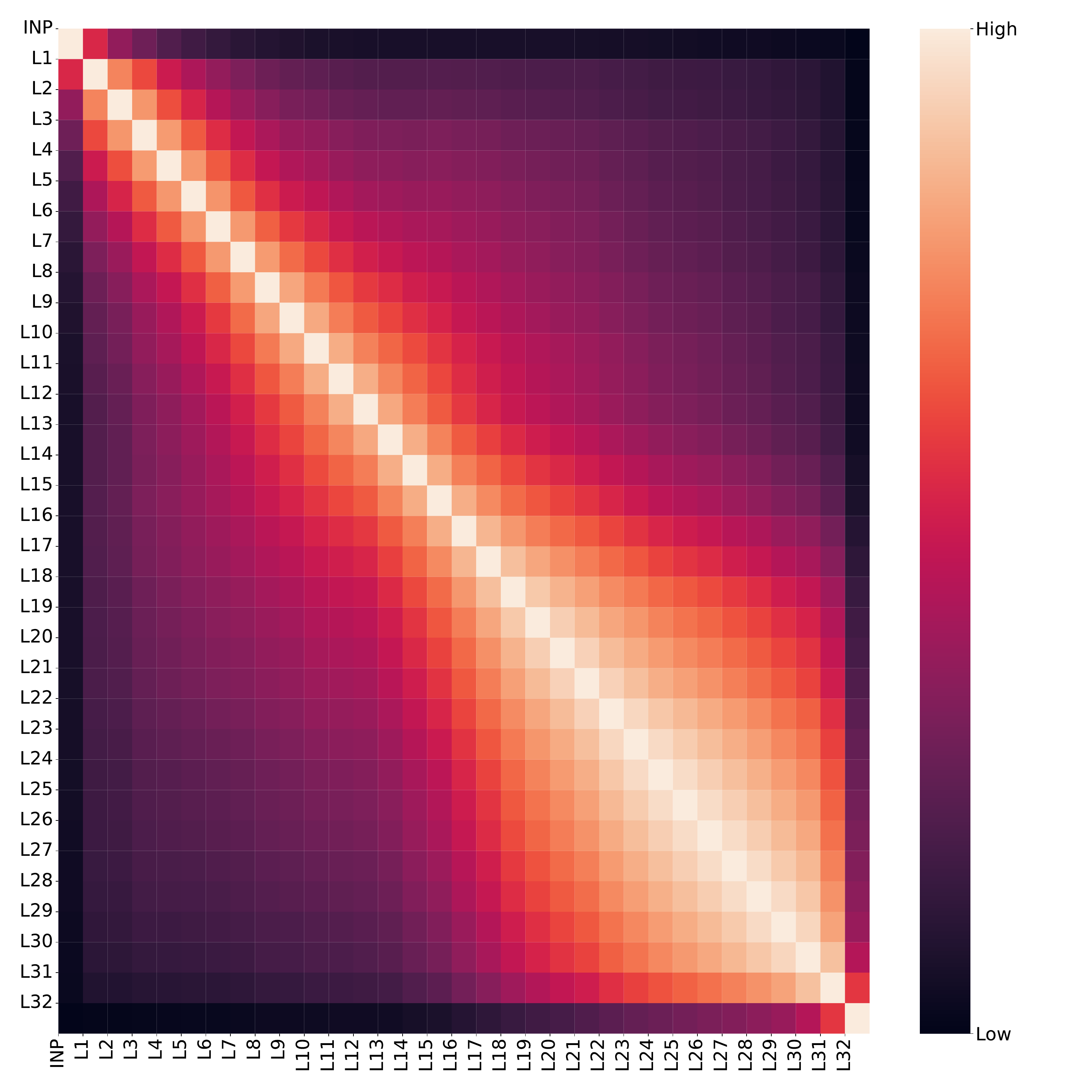}
        \caption{Cos-Base}
    \end{subfigure}
    \begin{subfigure}[b]{0.15\textwidth}
        \includegraphics[width=\textwidth,clip,trim={0 0 24em 0}]{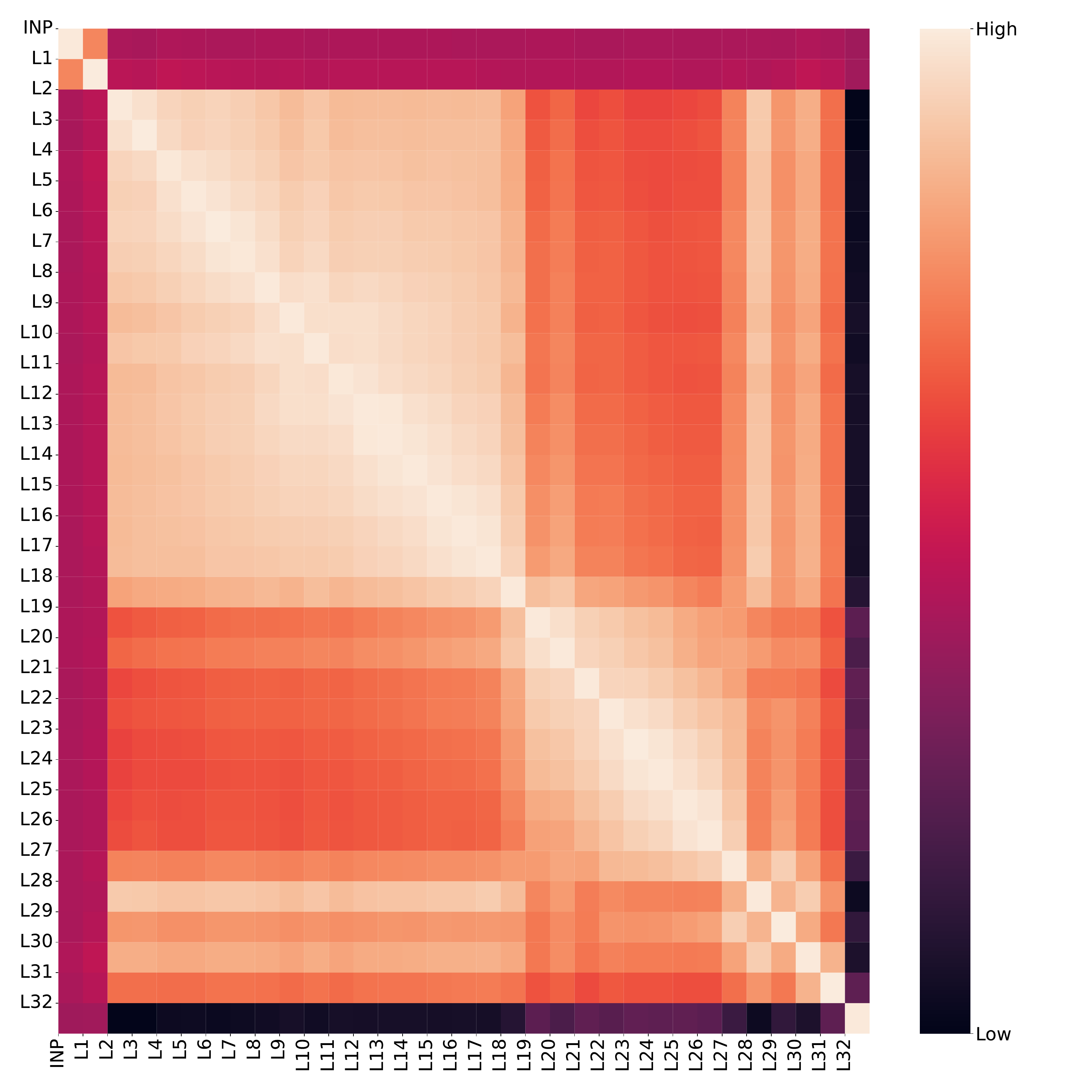}
        \caption{Cos-Struct}
    \end{subfigure}
    \begin{subfigure}[b]{0.15\textwidth}
        \includegraphics[width=\textwidth,clip,trim={0 0 24em 0}]{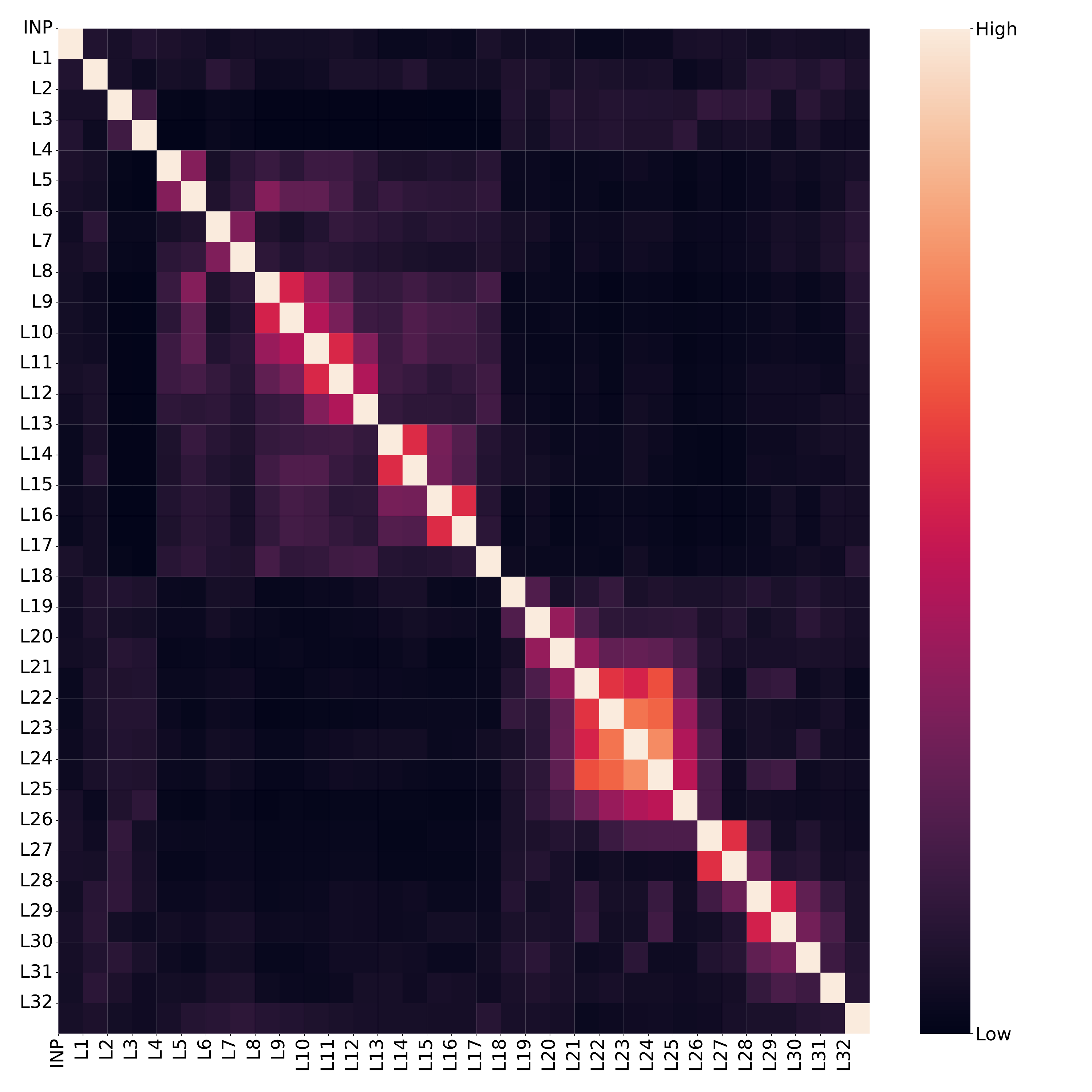}
        \caption{Tree-Edit}
    \end{subfigure}
    \begin{subfigure}[b]{0.15\textwidth}
        \includegraphics[width=\textwidth,clip,trim={0 0 24em 0}]{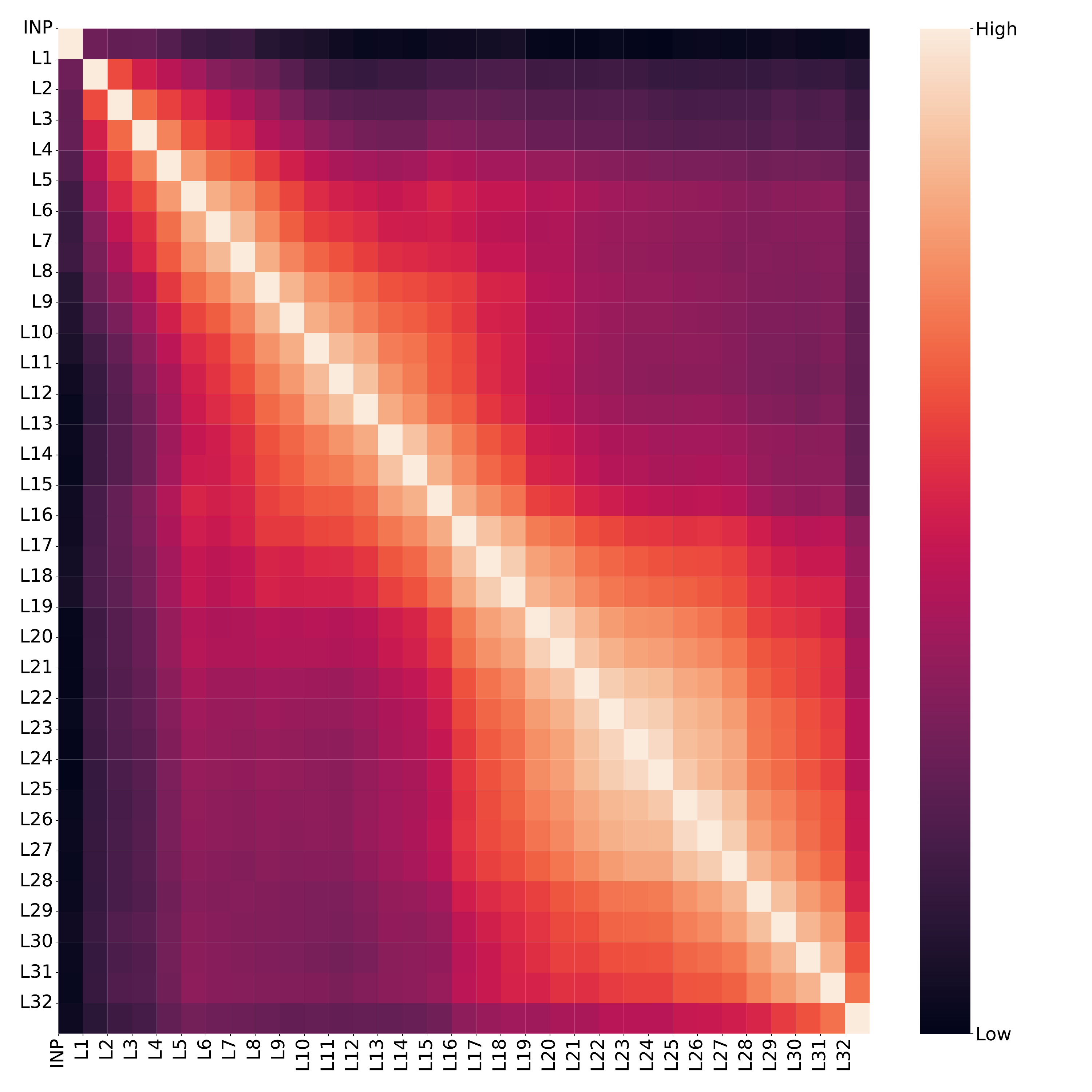}
        \caption{Edge-Edit}
    \end{subfigure}
    \caption{Inter-layer similarity samples of Llama3.1 8B for each metric on MMLU. Bright color represents high similarity, while dark color represents low similarity.}
    \label{fig:sim_mmlu_llama}
\end{figure}

\subsection{Inter-Layer Similarity Analysis}
\label{sec:layer-sim-experiment}
We analyze language models using the similarity of tree structures across layers obtained via \structlensname.
We apply \structlensname\ on representations of each sampled instance in datasets and then compute inter-layer similarity using Equations~\ref{eq:cka}, \ref{eq: cos-sim}, \ref{eq: cos-struct}, \ref{eq: ted}, and \ref{eq: edge-edit}.

\paragraph{Experimental settings.}
We employ Llama3.1 8B~\citep{grattafiori2024llama3herdmodels} and Qwen2.5 7B~\citep{qwen2025qwen25technicalreport} for our experiments.
The evaluation datasets are MMLU~\citep{hendrycks2021measuring}, which is a multiple-choice Question-Answering dataset with four choices, and its Chinese equivalent, CMMLU~\citep{li-etal-2024-cmmlu}.
We randomly sample instances and compute inter-layer similarity.
Details of experimental settings are provided in Appendix~\ref{appendix:experimental-settings}.

\paragraph{Results.}
The inter-layer similarity for Llama3.1 8B on MMLU is illustrated in Figure~\ref{fig:sim_mmlu_llama}, where the x-axis and y-axis represent the layer indices.
This figure shows that Edge-Edit exhibits diagonal clustering patterns, forming discrete groupings characterized by high inter-layer similarity, which we refer to as islands, and that these patterns are observed with the k-NN metric~\cite{wolfram2025layers}.
We report the similarity for CMMLU and Qwen2.5 7B in Appendix~\ref{appendix:layer-sim}, where Edge-Edit also exhibits diagonal clustering patterns, and these islands remain consistent across model family and size (see Appendix~\ref{appendix:layer-sim}).
We evaluate the clustering consistency across samples and the clustering quality in Appendix~\ref{appendix:clustering_consistency}.
The results show that the clustering via metrics is consistent with $k = 3$ or $k = 4$, except for Tree-Edit.
We investigate these islands through frequent subtree mining and log lens in Appendices~\ref{appendix:freqt} and ~\ref{appendix:logitlens-structlens}, which suggest that islands exhibit meaningful groups for LMs.

\subsection{Layer Pruning through \structlensname}
\label{sec: layer-pruning}

As an exploratory application of \structlensname, we test whether \structlensname-derived layer similarity can provide signals for layer pruning.

\paragraph{Layer pruning algorithm.}
Layer pruning algorithms for Transformer LMs~\citep{yang-etal-2024-laco,men-etal-2025-shortgpt, gromov2025the} identify and prune layers that produce relatively small modifications to representations based on their similarity across layers.
This approach leverages the residual connections in Transformer as described in Equation~\ref{eq:residual_connection} and quantifies layer importance to determine layers for removal.
We employ the metric introduced in ShortGPT~\cite{men-etal-2025-shortgpt} and assess layer influence through \structlensname.

\paragraph{Layer influence.}
ShortGPT~\citep{men-etal-2025-shortgpt} computes layer influence (importance) using inter-layer cosine similarity, and subsequently removes layers from the model in ascending order of importance.
In ShortGPT, the $i$-th layer influence, referred to as Block Influence (BI), is defined using Equation~\ref{eq: cos-sim} as:
\begin{align}
\label{eq:cos_bi}
    \text{CosBase-BI}_i &= 1 - \text{score}_{\text{Cos-Base}}(\ell_i, \ell_{i-1})
\end{align}
Additionally, we calculate influence using another non-structural metric, CKA (Equation~\ref{eq:cka}), and three structural-aware \structlensname\ similarity metrics, namely Cos-Struct (Equation~\ref{eq: cos-struct}), Tree-Edit (Equation~\ref{eq: ted}), and Edge-Edit (Equation~\ref{eq: edge-edit}).
CKA-BI, CosStruct-BI, TreeEdit-BI, and EdgeEdit-BI are computed similarly to Equation~\ref{eq:cos_bi} by replacing Cos-Base score with CKA, Cos-Struct, Tree-Edit, and Edge-Edit, respectively.
Since the score ranges for TreeBI and EdgeBI depend on inputs, we normalized them on a per-sample basis, taking into account the theoretical bounds of each metric.

\paragraph{Experimental settings.}
We employ Llama3.1 8B Instruct and Qwen2.5 7B Instruct.
For each model, we remove approximately 10\% of layers in ascending order of BI scores according to each metric.
For evaluation datasets, we use ARC-Easy~\citep{clark2018thinksolvedquestionanswering}, MMLU, and CMMLU as Question-Answering datasets, GSM8K~\citep{cobbe2021trainingverifierssolvemath} as a QA dataset with multi-step reasoning, and Multinews and VCSUM~\citep{wu-etal-2023-vcsum} as summarization datasets.
We measure accuracy for the QA tasks and Rouge-L F1~\citep{lin-2004-rouge} for summarization tasks.
We employ McNemar's test~\citep{McNemar1947} to assess the statistical significance of differences in accuracy, and the paired bootstrap test for ROUGE-L F1.
Layer removal calibration utilizes 10 samples from the English Wikipedia dataset~\citep{wikidump}, as performed in LaCo~\cite{yang-etal-2024-laco}.
Details of experimental settings are provided in Appendix~\ref{appendix:experimental-settings}.

\paragraph{Results.}
The removed layers are presented in Table~\ref{tab:result-pruning}, and we show the layer importance of each layer in Appendix~\ref{appendix:pruning-result-pretrained}.
The layer-pruning results, as shown in Table~\ref{tab:result-pruning}, indicate that tree-aware metrics outperform non-structural metrics for Llama3.1 8B Instruct, suggesting that structural metrics are helpful for layer pruning in this setting.
On the other hand, for Qwen2.5 7B Instruct, each metric shows inconsistent results across datasets.
We also evaluate pre-trained models on multiple QA tasks, and these models exhibit a similar tendency to instruct-tuned models, as shown in Appendix~\ref{appendix:pruning-result-pretrained}.
These results demonstrate that there is room to improve layer-pruning metrics and that structural metrics potentially provide insights into them.

\begin{table}[t]
\centering
\caption{Pruning results. Each value denotes the accuracy or the correctness of an answer in a task, with higher values indicating better performance. Values of each dataset result denoted by $\dagger$ are statistically significant ($p < 0.05$) compared to CosBase-BI.}
\label{tab:result-pruning}
\resizebox{0.75\linewidth}{!}{
\begin{tabular}{llrrrrrrr}
\toprule
Metric & Layers removed & ARC-Easy & MMLU & CMMLU & GSM8K & Multinews & VCSUM & Avg.  \\
\midrule
\multicolumn{3}{l}{Llama3.1 8B Instruct} \\
\midrule
Dense & - & 85.8\:\: & 68.4\:\: & 55.9\:\: & 70.4\:\: & 26.1\:\: & 16.6\:\: & 53.8 \\
\cmidrule{1-9} \vspace{-1em} \\
CosBase-BI & 24 25 26 27 & \underline{78.3}\:\: & 66.8\:\: & 54.4\:\: & \underline{43.0}\:\: & \underline{23.1}\:\: & 8.7\:\: & 45.7 \\
CKA-BI & 24 25 26 31 & 75.3$^{\dagger}$ & 66.9\:\: & 54.2\:\: & 5.8$^{\dagger}$ & 18.2$^{\dagger}$ & 1.9$^{\dagger}$ & 37.0 \\
\cdashline{1-9} \vspace{-.8em} \\
CosStruct-BI & 23 24 25 26 & \textbf{79.3}\:\: & \textbf{67.8$^{\dagger}$} & \underline{55.1$^{\dagger}$} & \textbf{57.1$^{\dagger}$} & \textbf{25.1$^{\dagger}$} & \textbf{16.2$^{\dagger}$} & \textbf{50.1} \\
TreeEdit-BI & 23 24 26 27 & 78.1 & \underline{66.9}\:\: & \textbf{55.8$^{\dagger}$} & 40.9$^{\dagger}$ & 22.7\:\: & \underline{10.7$^{\dagger}$} & \underline{45.8} \\
EdgeEdit-BI & 23 24 25 26 & \textbf{79.3}\:\: & \textbf{67.8$^{\dagger}$} & \underline{55.1$^{\dagger}$} & \textbf{57.1$^{\dagger}$} & \textbf{25.1$^{\dagger}$} & \textbf{16.2$^{\dagger}$} & \textbf{50.1} \\
\midrule
\multicolumn{3}{l}{Qwen2.5 7B Instruct} \\
\midrule
Dense & - & 87.6\:\: & 74.3\:\: & 81.1\:\: & 80.4\:\: & 23.6\:\: & 17.7\:\: & 60.8 \\
\cmidrule{1-9} \vspace{-1em} \\
CosBase-BI & 15 16 17 & \underline{83.7}\:\: & 56.1\:\: & 56.8\:\: & \underline{35.9}\:\: & 21.6\:\: & \textbf{17.5}\:\: & \underline{45.3} \\
CKA-BI & 4 25 27 & 80.1$^{\dagger}$ & \textbf{70.0$^{\dagger}$} & \textbf{75.7$^{\dagger}$} & 26.3$^{\dagger}$ & 12.2$^{\dagger}$ & 10.4$^{\dagger}$ & \textbf{45.8} \\
\cdashline{1-9} \vspace{-.8em} \\
CosStruct-BI & 13 14 15 & 82.8\:\: & 55.7\:\: & 55.9\:\: & \textbf{37.5}\:\: & \textbf{23.1$^{\dagger}$} & 16.2\:\: & 45.2 \\
TreeEdit-BI & 13 16 17 & 82.5$^{\dagger}$ & \underline{59.7$^{\dagger}$} & \underline{63.1$^{\dagger}$} & 17.1$^{\dagger}$ & 20.0$^{\dagger}$ & 15.7$^{\dagger}$ & 43.0 \\
EdgeEdit-BI & 14 15 16 & \textbf{84.2}\:\: & 53.4$^{\dagger}$ & 54.0$^{\dagger}$ & 31.1$^{\dagger}$ & \underline{22.8$^{\dagger}$} & \underline{16.9}\:\: & 43.7 \\
\bottomrule
\end{tabular}
}
\end{table}

\section{Conclusion}
\label{sec:conclusion}
Our experimental results reveal that \structlensname\ trees exhibit a middle-layer increase in local-span connectivity under natural token order.
This pattern is reduced under token shuffling, suggesting that the observed connectivity is sensitive to natural sequence organization rather than arising solely from the tree construction.
Pre-training checkpoint analysis reveals the evolution of models' token representation organization in the representation space.
Although, as discussed in Appendix~\ref{sec:limitation}, there are still limitations, e.g., layer pruning, our findings demonstrate that \structlensname\ provides valuable insights into interpretability and model analysis and has the potential to expand research in this field. 

\section*{Acknowledgement}
This work was supported by JST BOOST, Grant Number JPMJBS2423.

\clearpage
{
\small
\bibliographystyle{unsrt}
\bibliography{ref}
}

\newpage

\appendix

\section{Limitation}
\label{sec:limitation}
\paragraph{Comparison with linguistic chunks.}
In Section~\ref{sec:chunk_analysis}, while we analyze subtrees inspired by chunks, which are properties of language, the analysis is limited to observing the subtree features rather than comparing them with linguistic chunks because chunks in language depend on language users.
However, our primary goal is to analyze how LMs organize input tokens in the representation space at each layer; our findings provide insights into this.

\paragraph{Token similarity metrics.}
This study measures inter-token similarity using reciprocal L2 distance.
While other metrics, e.g., cosine similarity, can be used to measure it, our objective is to analyze the organization of token representations in Euclidean space, where the magnitude carries meaningful information.
Therefore, we employ the L2 distance-based metric for \structlensname\ as a first step.
Investigating other metrics can be a future direction for expanding \structlensname.

\paragraph{Forward constraint.}
Given the left-to-right information flow of autoregressive LMs, we apply the forward constraint before constructing MSTs, which may limit the analysis of tree features, e.g., the root node.
However, as we demonstrate in Proposition~\ref{proposition:forward-constraint}, this constraint preserves pairwise token similarity.
Rather, this constraint yields a simple tree, making the tree causal order-respecting and easier to analyze.

\paragraph{Layer pruning results.}
The layer pruning results show that the structural metrics are inconsistently robust across datasets and models, limiting their applicability to model optimization.
These results also indicate that there is room to improve layer pruning metrics and that structural metrics may provide insights into them.

\section{Inter-Token Similarity}
\label{appendix:token-sim}

We compute inter-token L2 distance for Llama3.1 8B and Qwen2.5 7B using 10 samples in the Wikipedia dataset as a preliminary study.
Figure~\ref{fig:avg_token_dist} shows that the average distance in a higher layer of Qwen2.5 7B reaches approximately 600.
Given this result, we convert distance into similarity by reciprocal transformation rather than exponentiation to avoid numerical underflow, as used in Equation~\ref{eq:symmetric_similarity}.

\begin{figure}[ht]
    \centering
    \begin{subfigure}[b]{0.45\linewidth}
        \includegraphics[width=\textwidth]{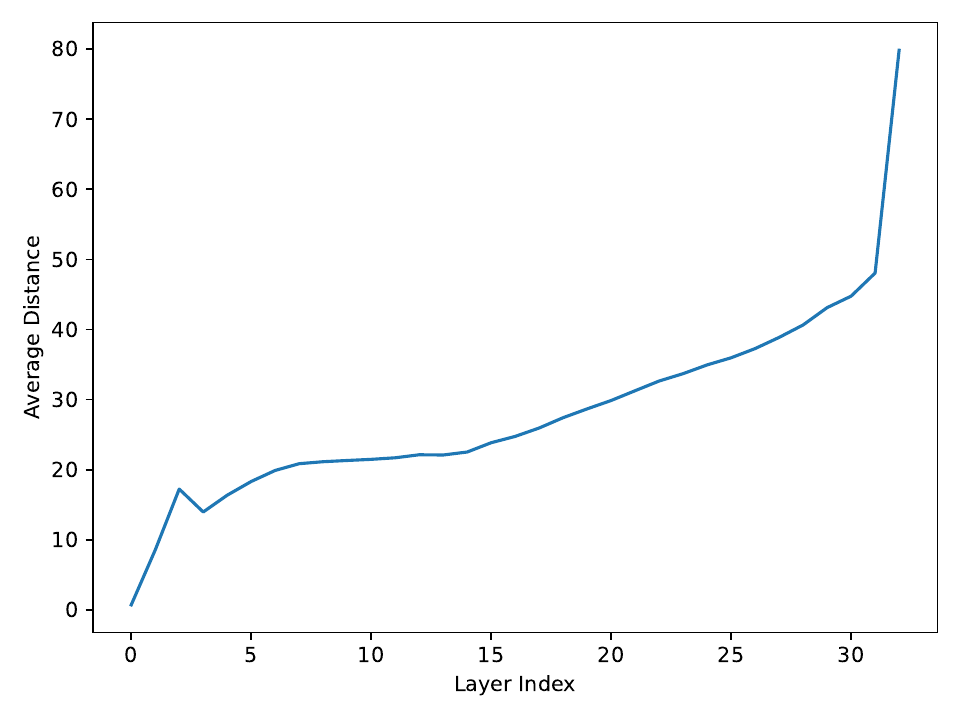}
    \caption{Llama3.1 8B}
    \end{subfigure}
    \begin{subfigure}[b]{0.45\linewidth}
        \includegraphics[width=\textwidth]{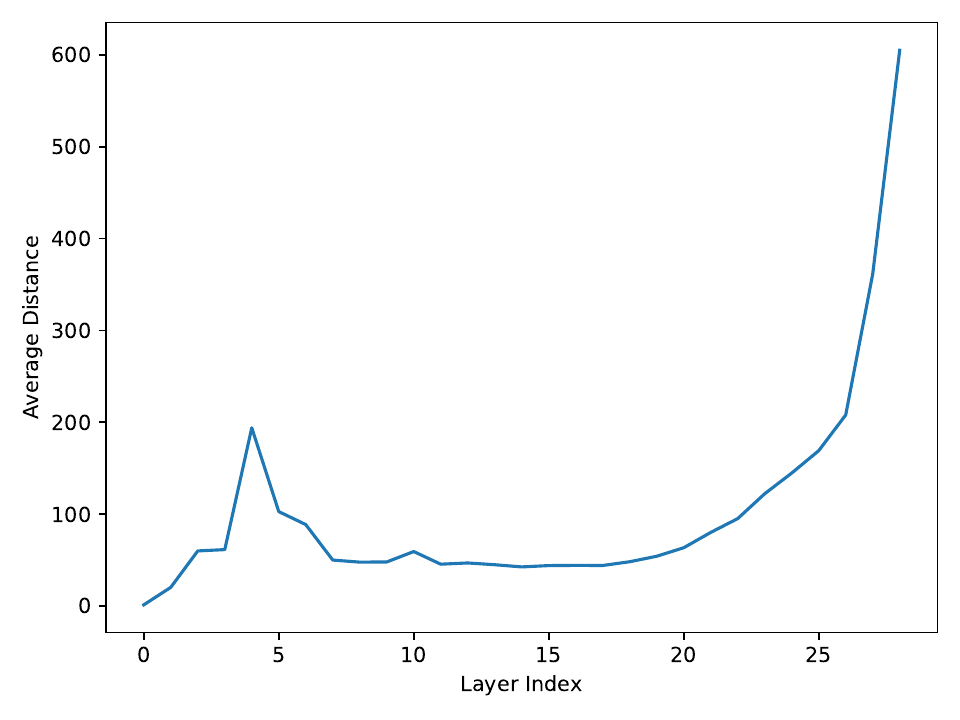}
    \caption{Qwen2.5 7B}
    \end{subfigure}
    \caption{Avarage L2 distance between tokens.}
    \label{fig:avg_token_dist}
\end{figure}

\section{Experimental Settings (Detail)}
\label{appendix:experimental-settings}

\paragraph{Datasets and Prompts.}
We randomly sample 100 instances from each dataset and use the following prompt template with five-shot examples for each dataset using the development set samples:

\begin{tcolorbox}[title=MMLU, boxrule=1pt]
The following are multiple choice questions about \{subject\}. Respond with either A, B, C, or D as your answer.

\{Question of Example1\}

(A) \{Choice A of Example1\}

(B) \{Choice B of Example1\}

(C) \{Choice C of Example1\}

(D) \{Choice D of Example1\}

Answer: \{Answer of Example1\}

...

\{Question\}

(A) \{Choice A\}

(B) \{Choice B\}

(C) \{Choice C\}

(D) \{Choice D\}

Answer:
\end{tcolorbox} 

\begin{tcolorbox}[title=CMMLU, boxrule=1pt]
\Chinese{以下是关于（" \{subject\} "）的单项选择题，请直接给出正确答案的选项。}

\{Question of Example1\}

(A) \{Choice A of Example1\}

(B) \{Choice B of Example1\}

(C) \{Choice C of Example1\}

(D) \{Choice D of Example1\}

Answer: \{Answer of Example1\}

...

\{Question\}

(A) \{Choice A\}

(B) \{Choice B\}

(C) \{Choice C\}

(D) \{Choice D\}

Answer:
\end{tcolorbox}

\begin{tcolorbox}[title=Multinews, boxrule=1pt]
    You are given several news passages. Write a one-page summary of all news.
    
    News:
    
    \{context\}
    
    Now, write a one-page summary of all the news.
    
    Summary:
\end{tcolorbox}

\begin{tcolorbox}[title=VCSUM, boxrule=1pt]
    \Chinese{下面有一段会议记录，请你阅读后，写一段总结，总结会议的内容。}
    
    \Chinese{会议记录：}
    
    \{context\}
    
    \Chinese{会议总结：}
\end{tcolorbox}
We use the prompts for Multinews and VCSUM used in LongBench~\citep{bai-etal-2024-longbench}.

\paragraph{Implementations.}
In this study, we use models and datasets via HuggingFace, and Tables~\ref{tab:model_names} and \ref{tab:dataset_names} show the HuggingFace IDs of each model and dataset, respectively.
Table~\ref{tab:checkpoints} provides model revisions for each checkpoint of OLMo2 7B.
To run Llama3.1 70B and Qwen2.5 72B, we quantize models to 4bit with QLoRA~\citep{dettmers2023qlora} via Transformers~\citep{wolf-etal-2020-transformers}.
We use Transformers to use LMs with PyTorch~\citep{pytorch}, TensorFlow Text to build MSTs, scikit-learn~\citep{scikit-learn} for spectral clustering and computing ARI, and SciPy~\citep{2020SciPy-NMeth} to compute correlation.
To run ShortGPT~\cite{men-etal-2025-shortgpt}, we employ its official implementation.
Hyperparameters used in experiments are provided in Table~\ref{tab:hyperparameters}.
We use a single NVIDIA GeForce RTX 3090 GPU, a single NVIDIA A100-SXM4-40GB GPU, one or two NVIDIA RTX A6000 or NVIDIA RTX 6000 Ada Generation GPUs for extracting hidden states.
In the layer pruning experiments, we use lm-evaluation-harness~\citep{eval-harness} and vLLM~\cite{vllm} on a single NVIDIA H100 GPU or a single NVIDIA RTX Pro 6000 GPU.
Code is provided in the supplement materials, and we will release the \structlensname\ code as an installable Python library.

\begin{table}[t]
\centering
\caption{HuggingFace ID and Hyperparameters.}
\begin{subtable}[t]{0.45\textwidth}
    \centering 
    \caption{Models}
    \label{tab:model_names}
    \resizebox{\linewidth}{!}{
        \begin{tabular}{ll}
        \toprule
           Model  & HuggingFace ID \\
           \midrule
           Llama3.1 8B  & meta-llama/Llama-3.1-8B  \\
           Llama3.1 8B Instruct  & meta-llama/Llama-3.1-8B-Instruct  \\
           Llama3.1 70B  & meta-llama/Llama-3.1-70B  \\
           Qwen2.5 7B  & Qwen/Qwen2.5-7B \\
           Qwen2.5 7B Instruct  & Qwen/Qwen2.5-7B-Instruct \\
           Qwen2.5 72B  & Qwen/Qwen2.5-72B \\
           OLMo2 7B & allenai/OLMo-2-1124-7B \\
        \bottomrule
        \end{tabular}
        }
\end{subtable}
\hfill
\begin{subtable}[t]{0.45\textwidth}
    \centering 
    \caption{Checkpoints}
    \label{tab:checkpoints}
    \resizebox{\linewidth}{!}{
        \begin{tabular}{ll}
        \toprule
            Checkpoint & Revision (Huggingface) \\
             \midrule
            stage1-1k & stage1-step1000-tokens5B \\
            stage1-10k & stage1-step10000-tokens42B \\
            stage1-101k & stage1-step101000-tokens424B \\
            stage1-500k & stage1-step500000-tokens2098B \\
            stage1-750k & stage1-step750000-tokens3146B \\
            stage1-928k & stage1-step928646-tokens3896B \\
            stage2-1k & stage2-ingredient1-step1000-tokens5B \\
        \bottomrule
        \end{tabular}
        }
\end{subtable}
\hfill
\begin{subtable}[t]{0.49\textwidth}
    \centering 
    \small
    \caption{Datasets}
    \label{tab:dataset_names}
        \begin{tabular}{ll}
        \toprule
           Dataset  & HuggingFace ID \\
           \midrule
           MMLU  & cais/mmlu \\
           CMMLU  & lmlmcat/cmmlu \\
           Wikipedia & wikimedia/wikipedia \\
           Multinews & zai-org/LongBench \\
           VCSUM & zai-org/LongBench \\
        \bottomrule
        \end{tabular}
\end{subtable}
\hfill
\begin{subtable}[t]{0.49\textwidth}
    \centering
    \small
    \caption{Hyperparameters}
    \label{tab:hyperparameters}
        \begin{tabular}{lr}
        \toprule
            Parameter & Value \\
            \midrule
            Decoding & Greedy \\
            Precision & BF16 \\
            Seed & 42 \\
        \bottomrule
        \end{tabular}
\end{subtable}
\end{table}

\section{Contiguous Subtree Analysis}

\subsection{Contiguous Subtrees of 3- and 5-node}
\label{appendix:contiguous-subtree-3-5-node}

To examine whether the middle-layer signature in Section~\ref{sec:chunk_analysis} is a specific feature of 4-node subtrees, we measure the contiguous subtree ratio and the contiguous token ratio for 3- and 5-node subtrees.
Figure~\ref{fig:contiguous-subtree-3-5-node} shows the layer-wise evolution of both the contiguous subtree ratio and the contiguous token ratio for 3-, 4-, and 5-node subtrees.
The models exhibit the middle-layer signature across subtree sizes, suggesting that they show high contiguous token connectivity within local spans in the middle layers' representation space.

\begin{figure}[ht]
    \centering
    \includegraphics[width= \linewidth]{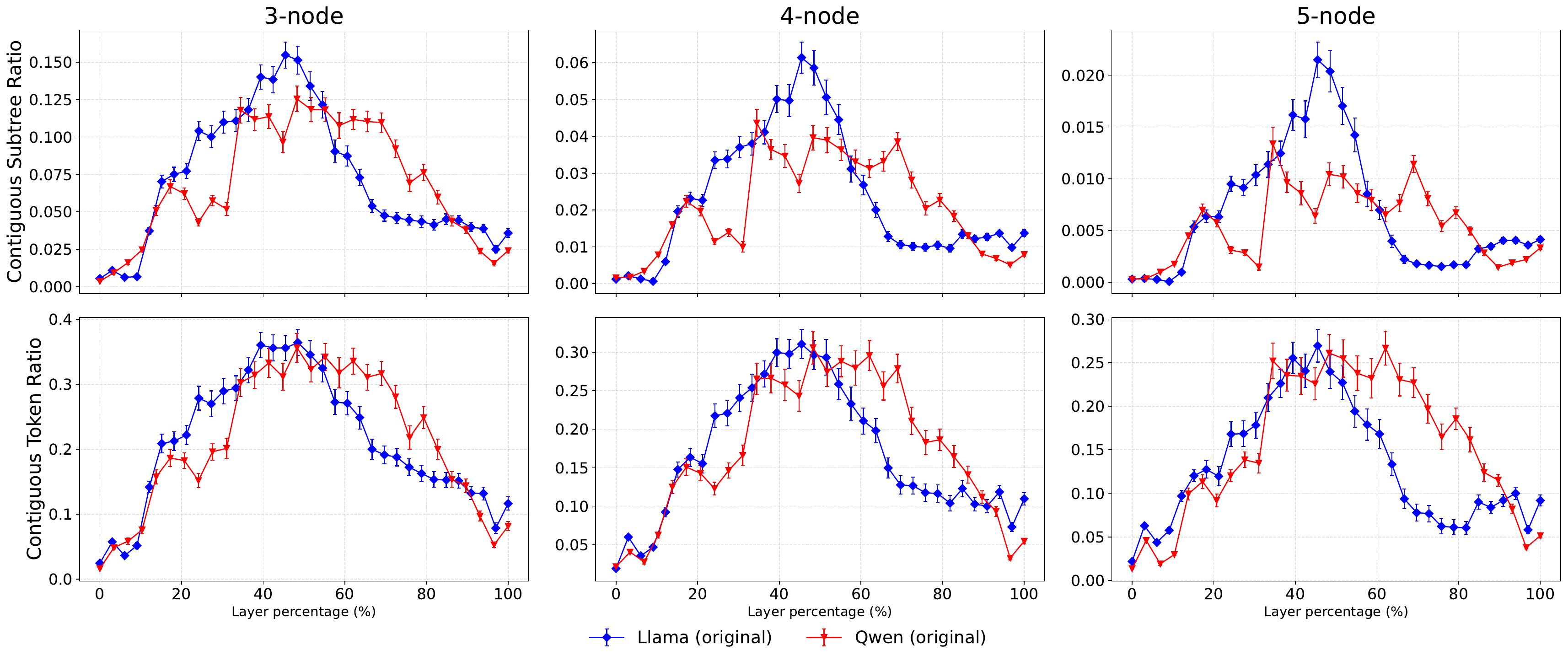}
    \caption{Contiguous subtree and token ratios for 3-, 4-, and 5-node subtrees.}
    \label{fig:contiguous-subtree-3-5-node}
\end{figure}

\subsection{Contiguous Subtrees during Pre-training}
\label{appendix:contiguous-subtree-olmo-shuffled}

\begin{figure}[t]
    \centering
    \includegraphics[width=\linewidth]{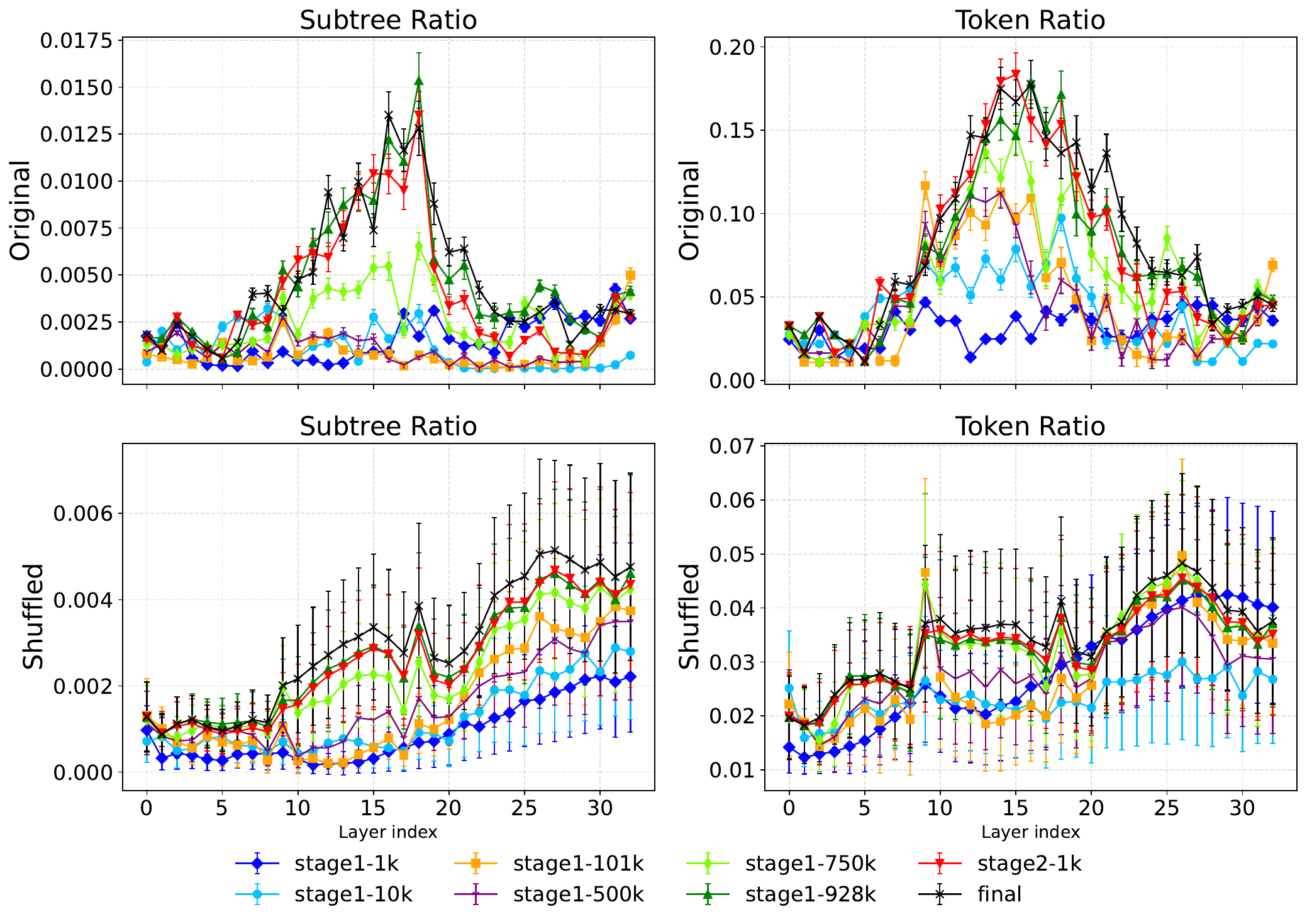}
    \caption{Visualization of the layer-wise evolution of the contiguous subtrees and tokens in them of multiple checkpoints of Olmo2 7B for MMLU. The x-axis denotes layer index, and the y-axis denotes the contiguous subtree and token ratio.}
    \label{fig:contiguous_subtree_olmo_shuffled}
\end{figure}

We observe the pre-training step-wise evolution of the local-span connectivity in Section~\ref{sec:training-dynamics}.
For further investigation, we shuffle input tokens and compute the contiguous subtree ratio and the contiguous token ratio as in Section~\ref{sec:chunk_analysis}.
Figure~\ref{fig:contiguous_subtree_olmo_shuffled} shows the shuffling results along with those of the original order shown in Figure~\ref{fig:contiguous_subtree_olmo} in Section~\ref{sec:training-dynamics}.
This figure suggests that both ratios in the middle and higher layers increase across checkpoints, indicating that the observed contiguous structures depend on the original token order rather than arising solely from implicit positional biases.

\section{Inter-Layer Similarity in Diverse Settings}
\label{appendix:layer-sim}

\begin{figure*}[t]
    \centering
    \begin{subfigure}[b]{0.17\textwidth}
        \includegraphics[width=\textwidth,clip,trim={0 0 24em 0}]{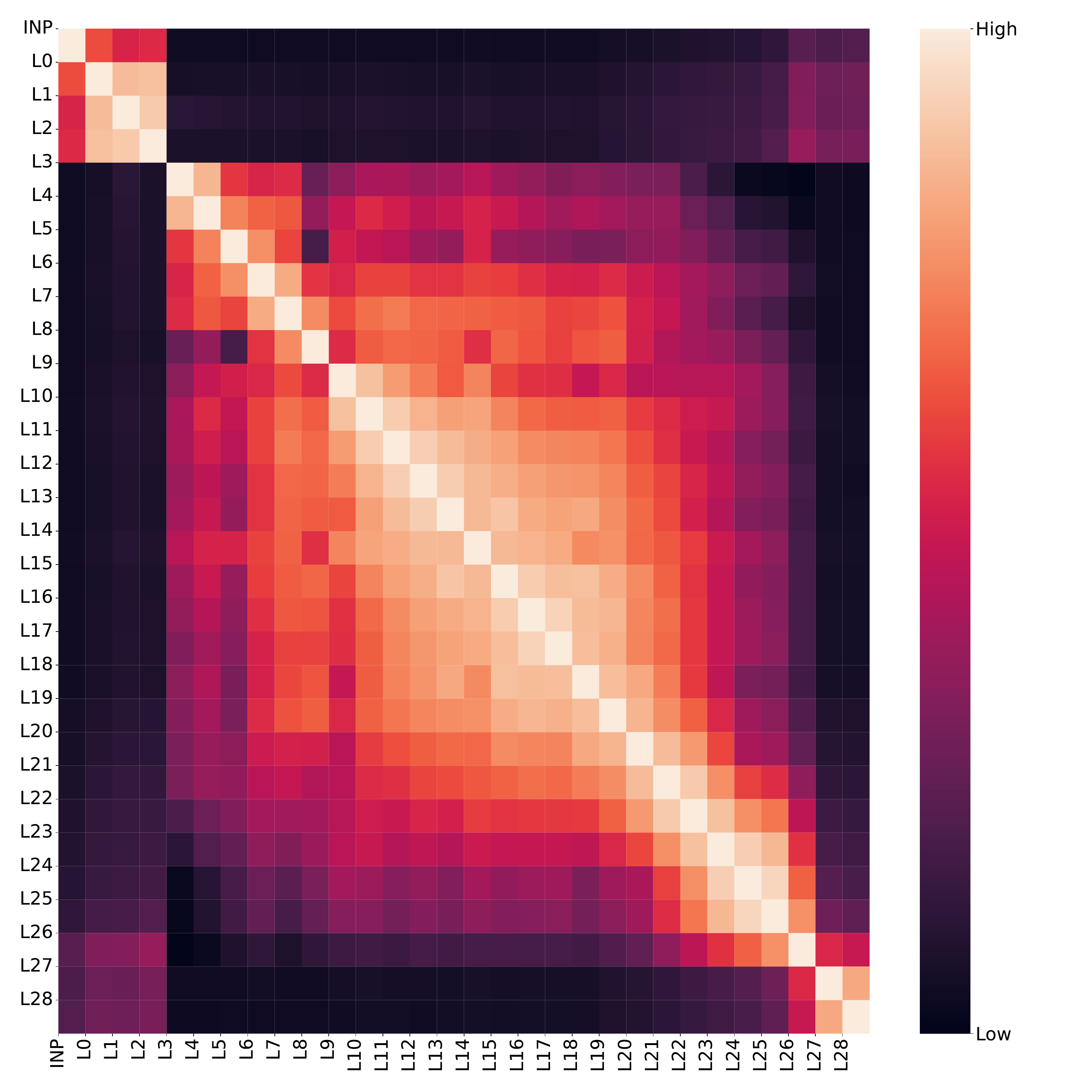}
        \caption{CKA}
    \end{subfigure}
    \begin{subfigure}[b]{0.17\textwidth}
        \includegraphics[width=\textwidth,clip,trim={0 0 24em 0}]{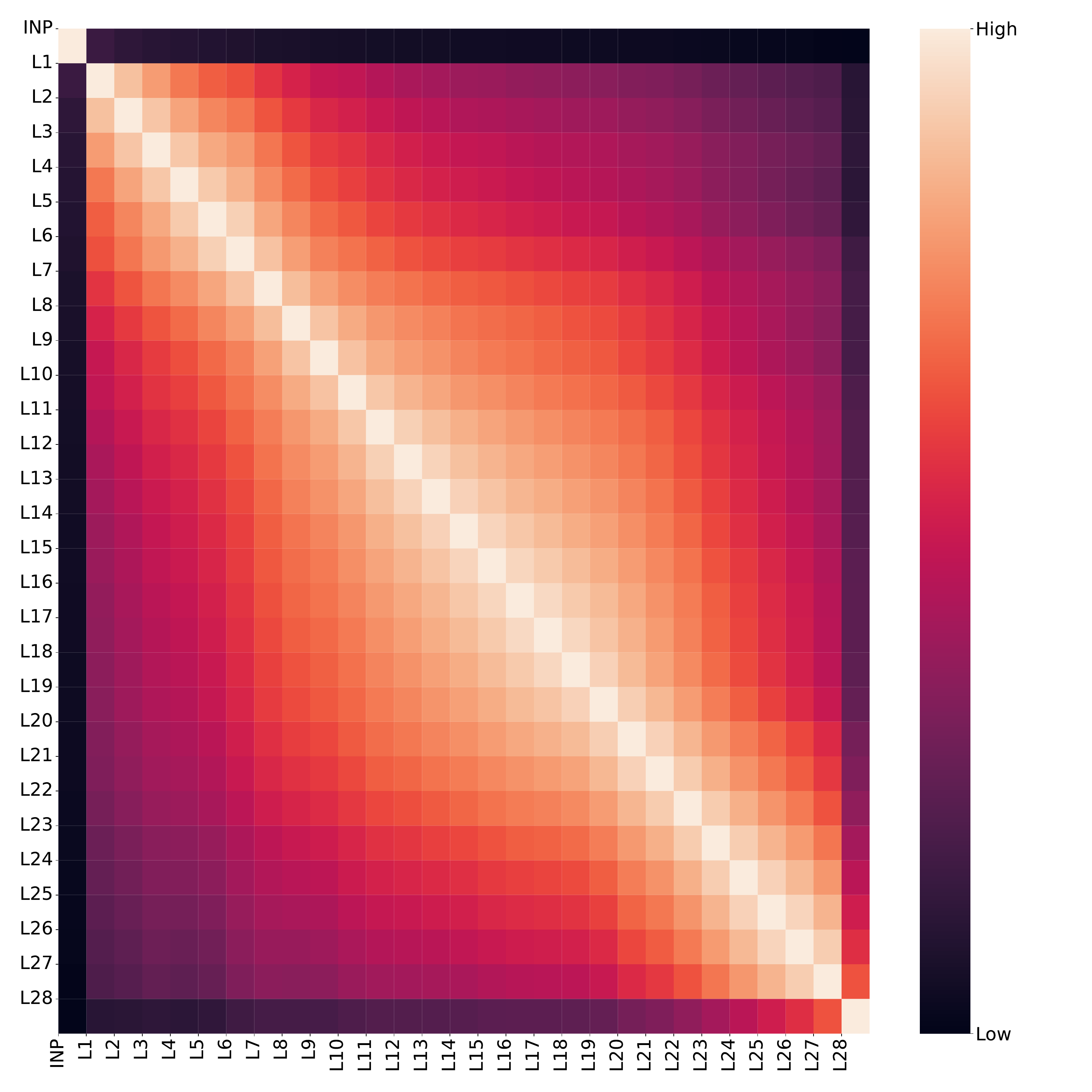}
        \caption{Cos-Base}
    \end{subfigure}
    \begin{subfigure}[b]{0.17\textwidth}
        \includegraphics[width=\textwidth,clip,trim={0 0 24em 0}]{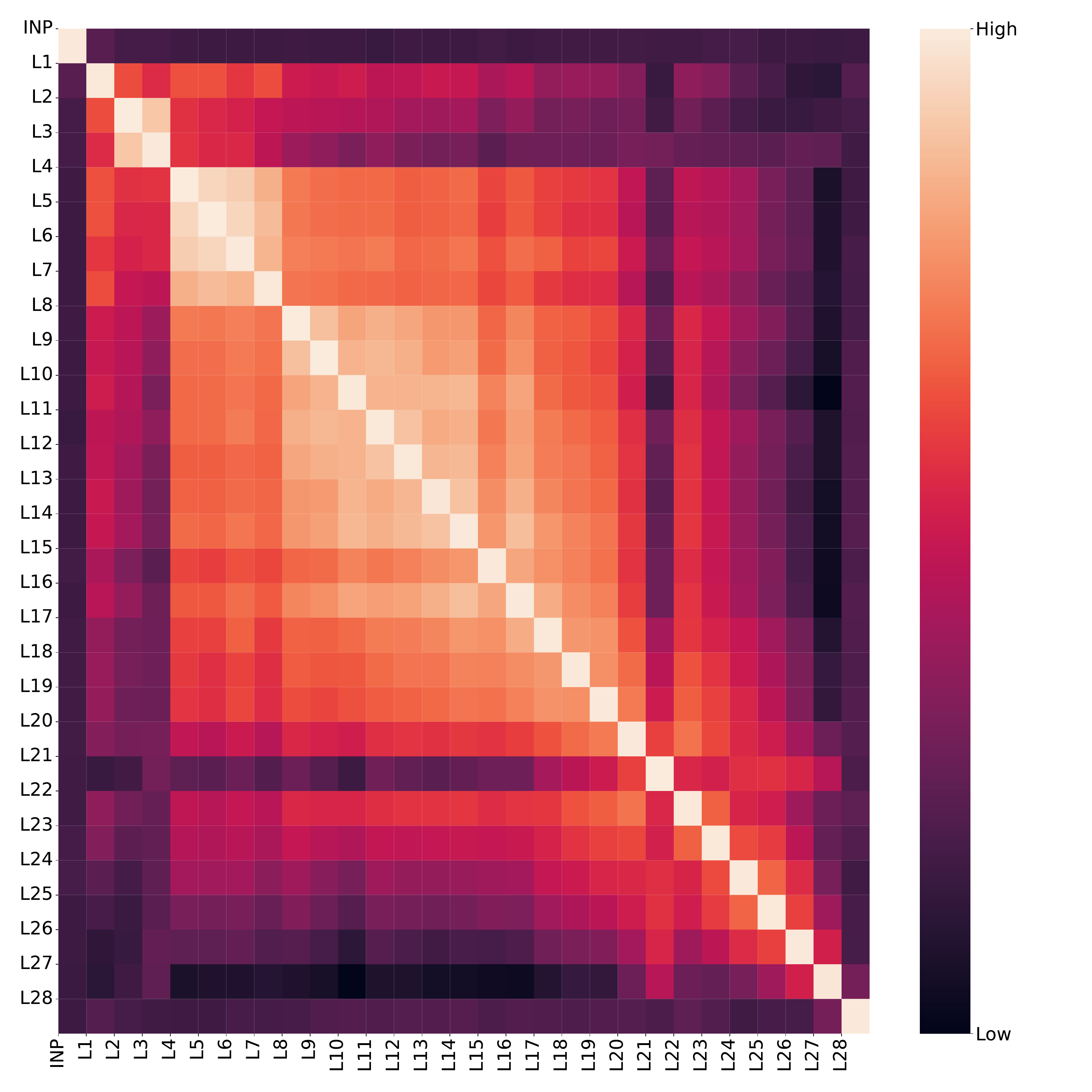}
        \caption{Cos-Struct}
    \end{subfigure}
    \begin{subfigure}[b]{0.17\textwidth}
        \includegraphics[width=\textwidth,clip,trim={0 0 24em 0}]{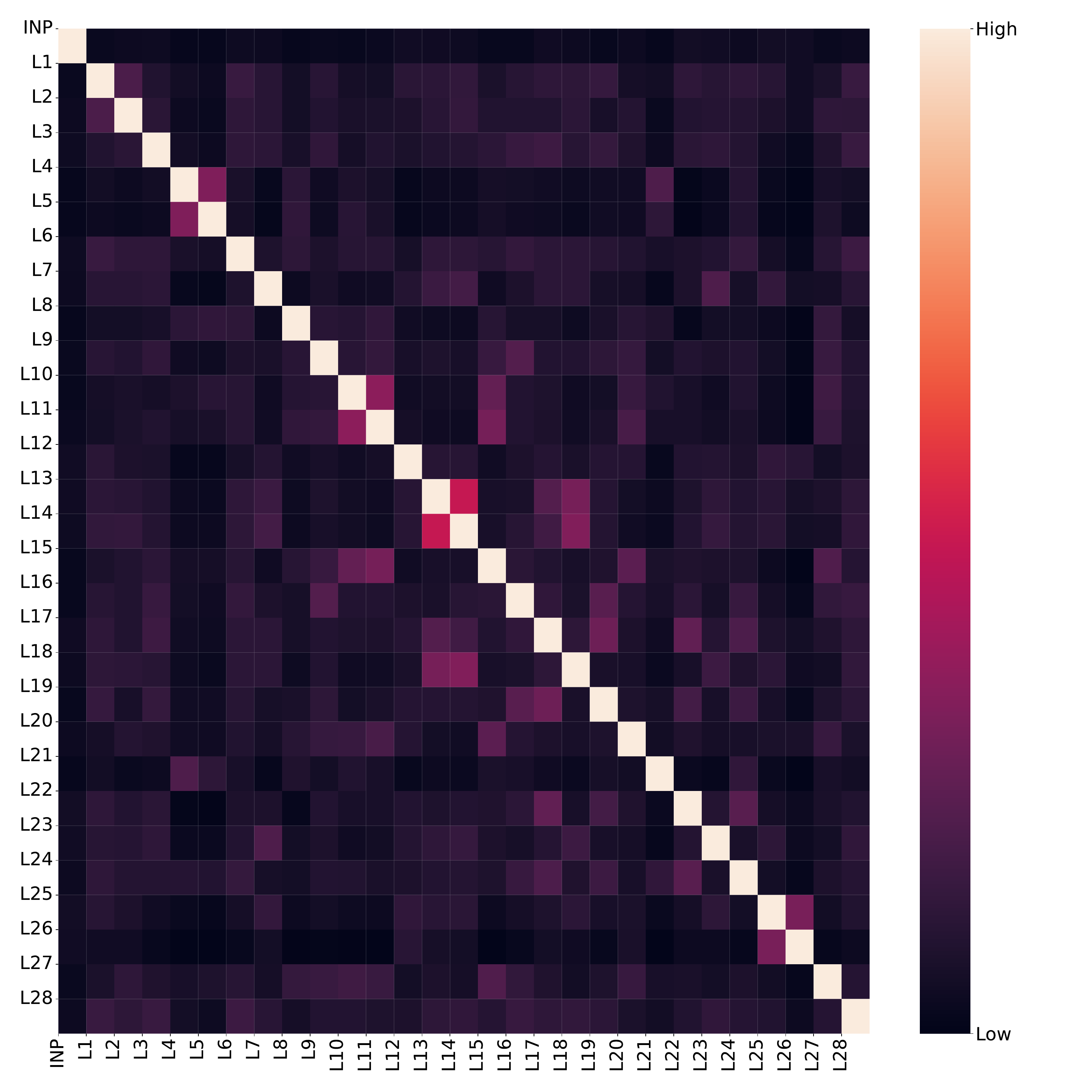}
        \caption{Tree-Edit}
    \end{subfigure}
    \begin{subfigure}[b]{0.17\textwidth}
        \includegraphics[width=\textwidth,clip,trim={0 0 24em 0}]{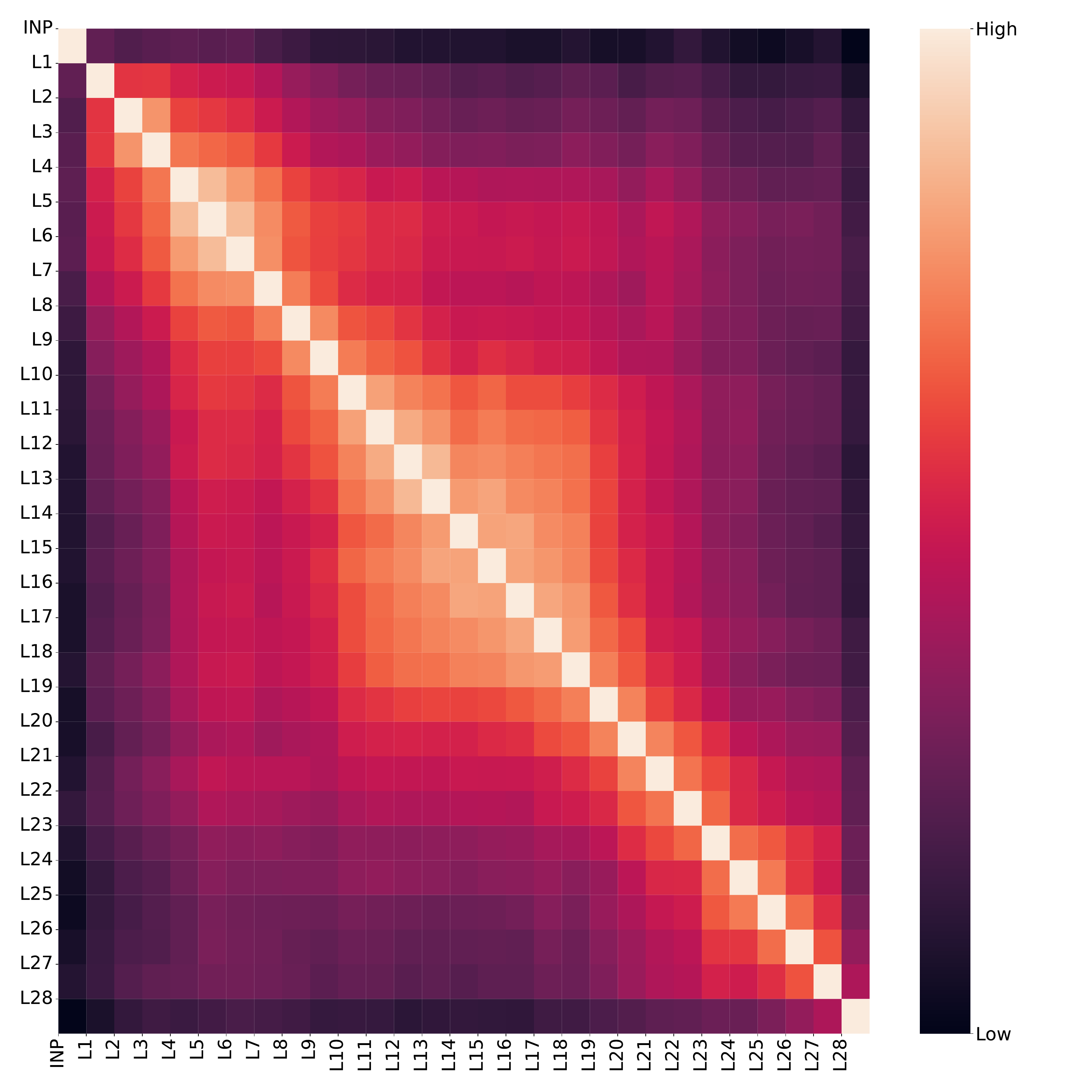}
        \caption{Edge-Edit}
    \end{subfigure}
    \caption{Inter-layer similarity samples of Qwen2.5 7B for each metric on MMLU. Bright color represents high similarity, while dark color represents low similarity.}
    \label{fig:sim_mmlu_qwen}
\end{figure*}

\paragraph{Inter-layer similarity in Qwen2.5 7B on MMLU.}
We measure inter-layer similarity of CKA, Cos-Base, Cos-Struct, Tree-Edit, and Edge-Edit for Qwen2.5 7B on MMLU and investigate if it differs from that for Llama3.1 8B reported in Section~\ref{sec:layer-sim-experiment}.
Figure~\ref{fig:sim_mmlu_qwen} shows that Edge-Edit exhibits diagonal clustering patterns that we refer to as islands, as well as Llama3.1 8B, while CKA and Cos-Struct show different patterns.
This suggests that Edge-Edit provides potentially insightful patterns of inter-layer similarity across models.

\paragraph{Edge-Edit across datasets, model families, model sizes.}
To examine whether inter-layer similarity patterns by Edge-Edit shown in Section~\ref{sec:layer-sim-experiment} vary across datasets, model families, and model sizes, we measure the similarity for Llama3.1 70B, Qwen2.5 7B, and Qwen2.5 72B on both MMLU and CMMLU.
Figures~\ref{fig:layer-sim-llama}~and~\ref{fig:layer-sim-qwen} show that the similarity patterns close within a model family, but slightly differ across model families.
This result suggests that the island patterns derived by Edge-Edit are an inherent property of a model family.

\begin{figure*}[t]
    \centering
    \begin{subfigure}[b]{0.17\textwidth}
        \includegraphics[width=\textwidth,clip,trim={0 0 24em 0}]{assets/mmlu/llama/ted_0_l2_edge_edit.pdf}
        \caption{8B, MMLU}
    \end{subfigure}
    \begin{subfigure}[b]{0.17\textwidth}
        \includegraphics[width=\textwidth,clip,trim={0 0 24em 0}]{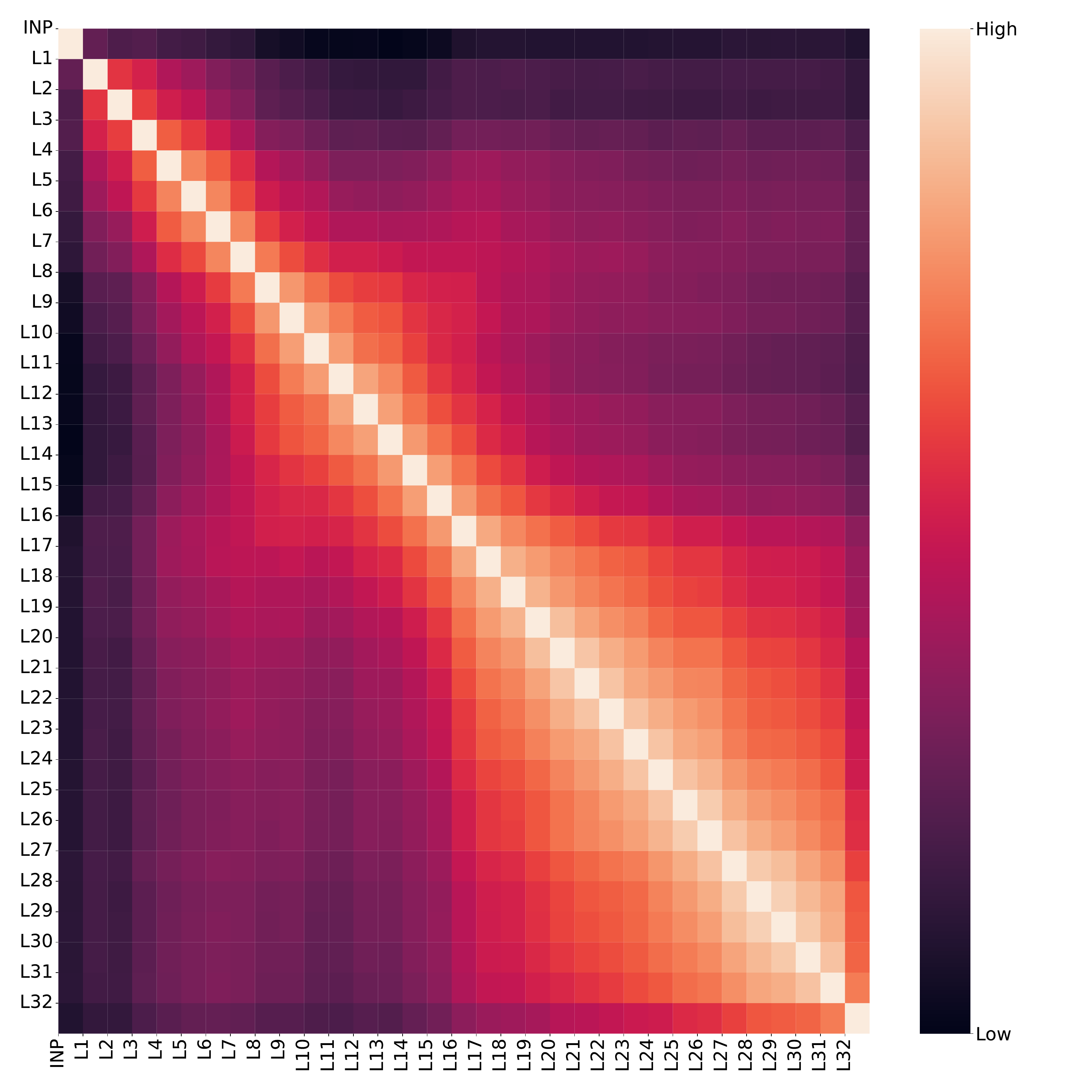}
        \caption{8B, CMMLU}
    \end{subfigure}
    \begin{subfigure}[b]{0.17\textwidth}
        \includegraphics[width=\textwidth,clip,trim={0 0 24em 0}]{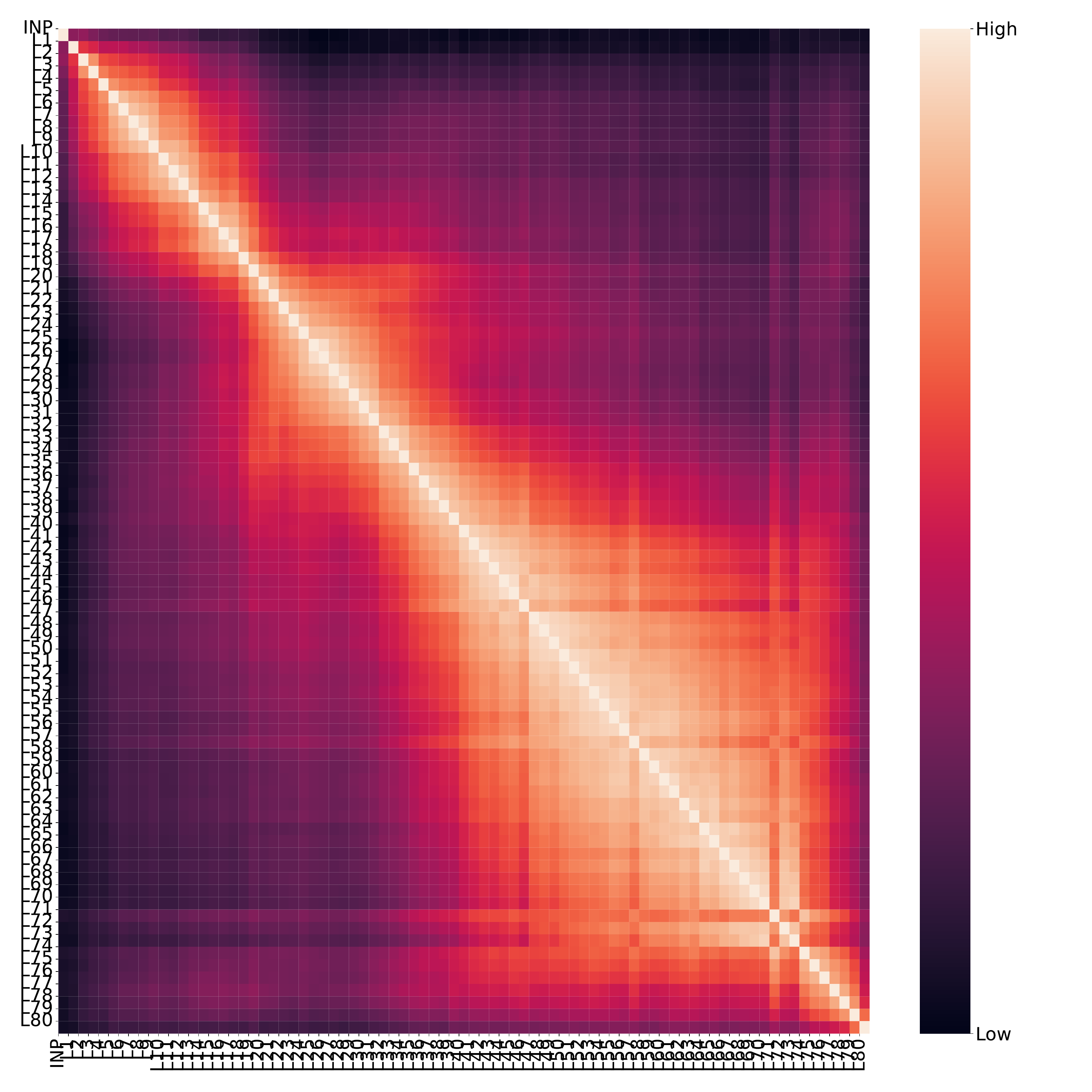}
        \caption{70B, MMLU}
    \end{subfigure}
    \begin{subfigure}[b]{0.17\textwidth}
        \includegraphics[width=\textwidth,clip,trim={0 0 24em 0}]{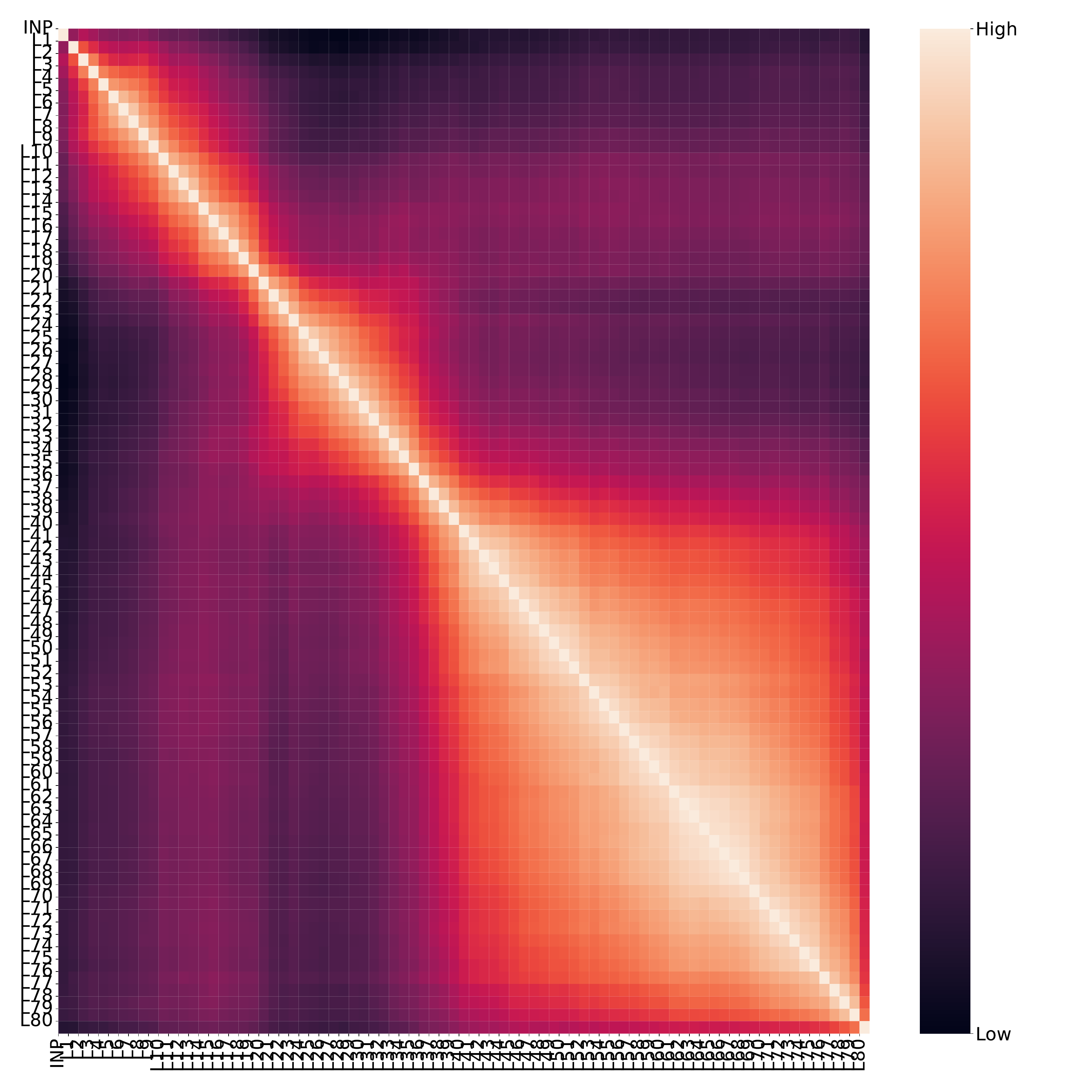}
        \caption{70B, CMMLU}
    \end{subfigure}
        \caption{Inter-layer similarity samples by Edge-Edit of Llama3.1 8B and Llama3.1 70B for MMLU and CMMLU. Bright color represents high similarity, while dark color represents low similarity.}
    \label{fig:layer-sim-llama}
\end{figure*}

\begin{figure*}[t]
    \centering
    \begin{subfigure}[b]{0.17\textwidth}
        \includegraphics[width=\textwidth,clip,trim={0 0 24em 0}]{assets/mmlu/qwen/ted_0_l2_edge_edit.pdf}
        \caption{7B, MMLU}
    \end{subfigure}
    \begin{subfigure}[b]{0.17\textwidth}
        \includegraphics[width=\textwidth,clip,trim={0 0 24em 0}]{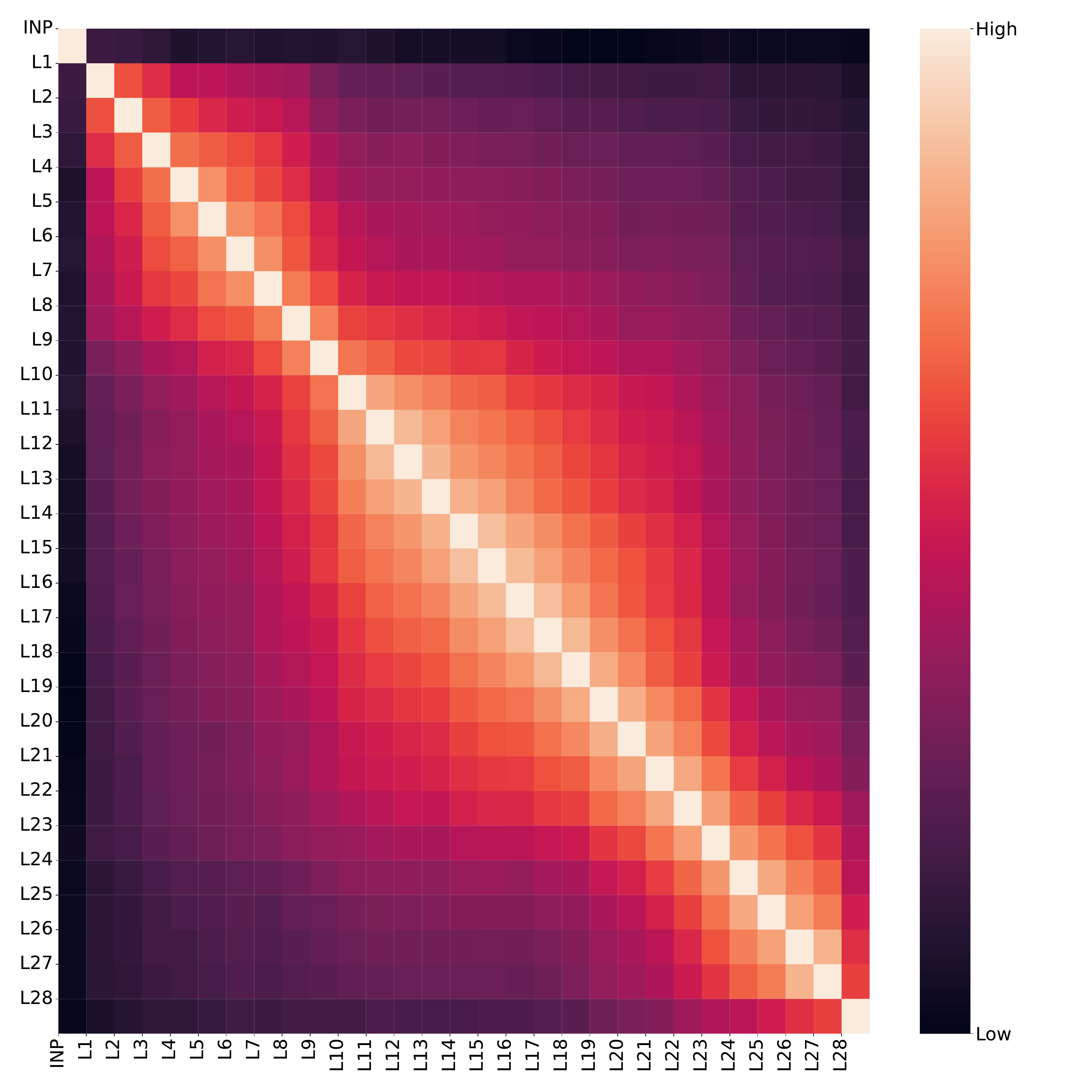}
        \caption{7B, CMMLU}
    \end{subfigure}
    \begin{subfigure}[b]{0.17\textwidth}
        \includegraphics[width=\textwidth,clip,trim={0 0 24em 0}]{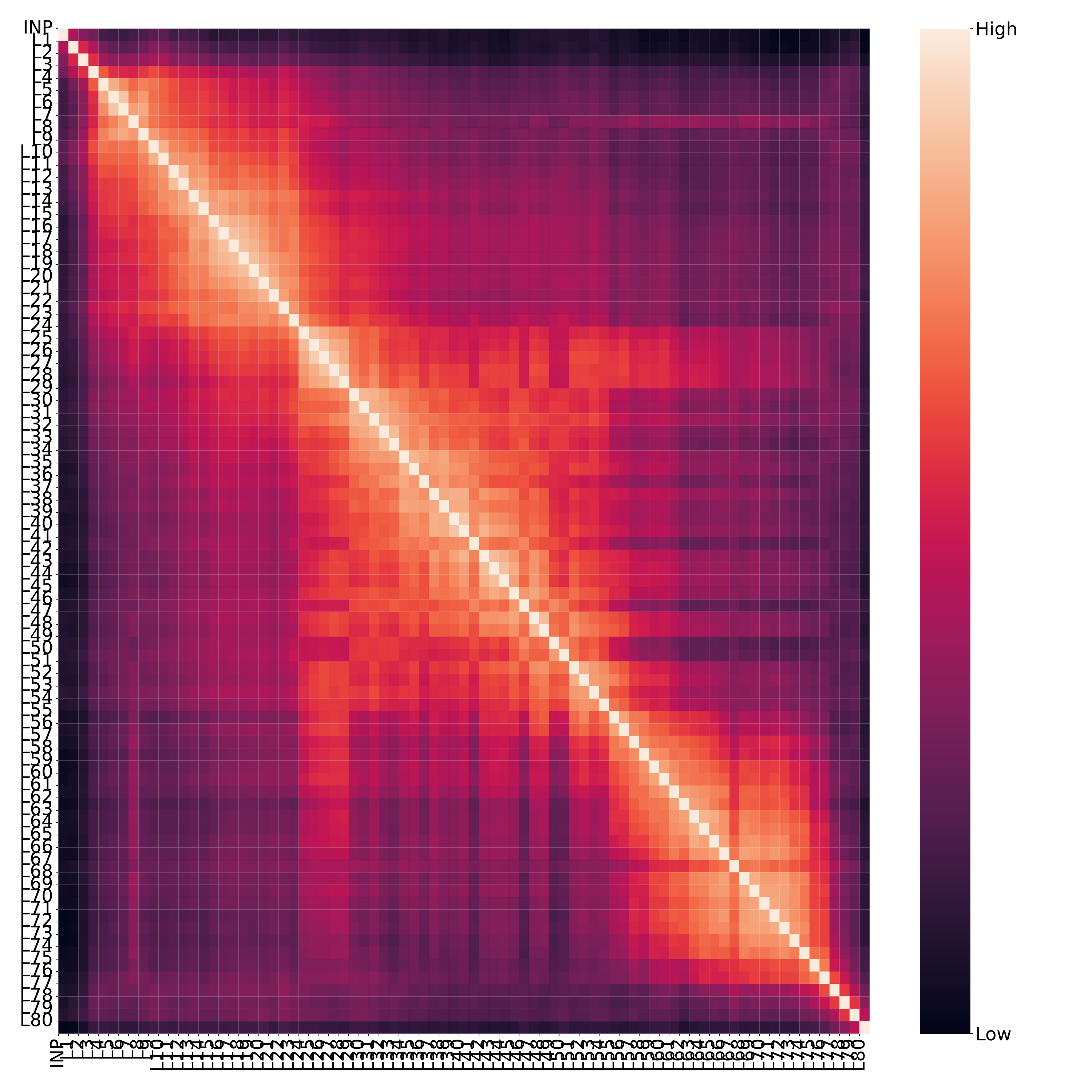}
        \caption{72B, MMLU}
    \end{subfigure}
    \begin{subfigure}[b]{0.17\textwidth}
        \includegraphics[width=\textwidth,clip,trim={0 0 24em 0}]{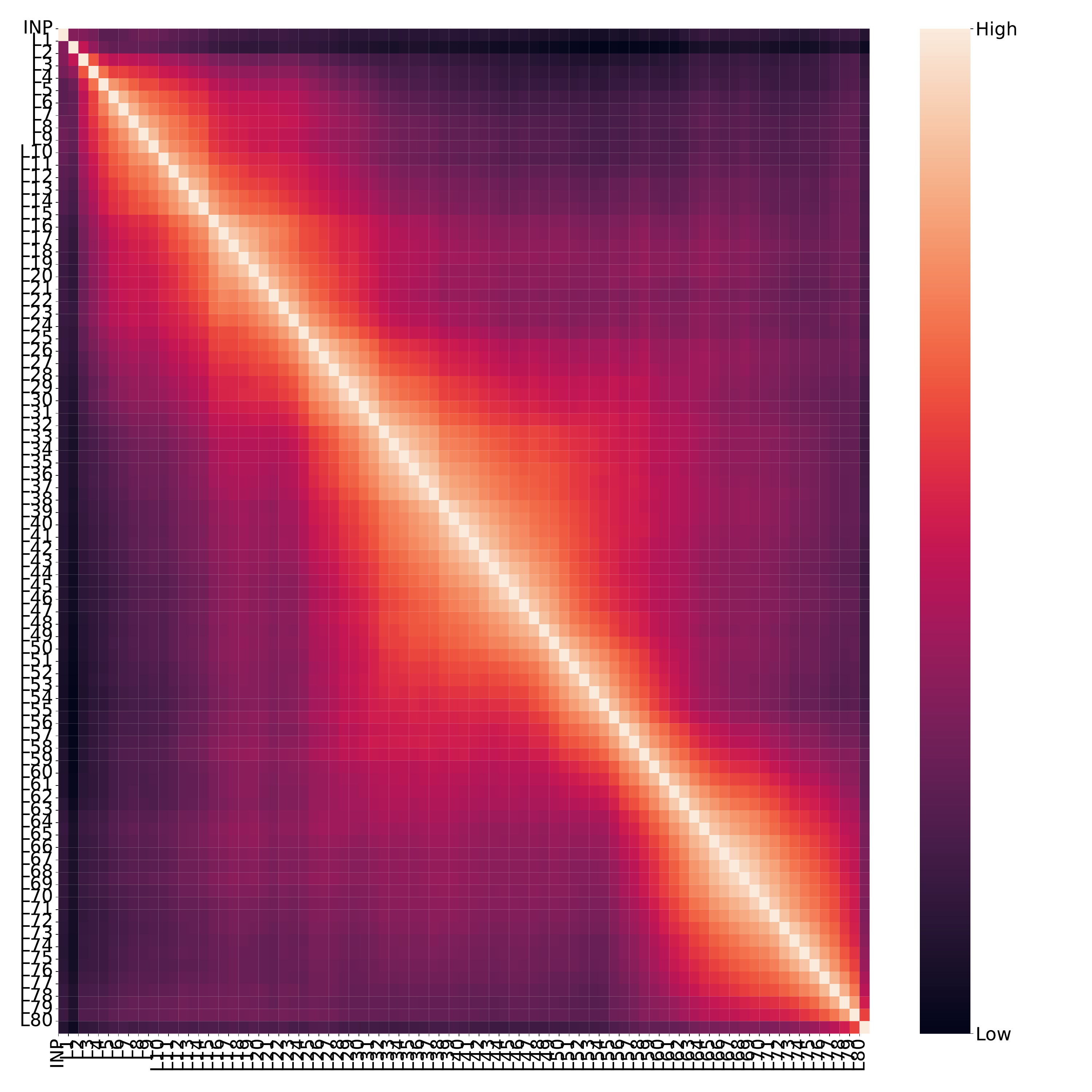}
        \caption{72B, CMMLU}
    \end{subfigure}
        \caption{Inter-layer similarity samples by Edge-Edit of Qwen2.5 7B and Qwen2.5 72B for MMLU and CMMLU. Bright color represents high similarity, while dark color represents low similarity.}
    \label{fig:layer-sim-qwen}
\end{figure*}

\paragraph{Olmo2 checkpoints.}
To investigate the evolution of the island patterns during training, we measure the Edge-Edit similarity for several checkpoints, which are illustrated in Figure~\ref{fig:sim_mmlu_olmo}.
The checkpoint stage1-1k shows relatively large islands in the higher layers, and the model reduces their size and increases similarity between adjacent layers as training progresses.
This tendency suggests that the model learns to collaborate between adjacent layers at late training steps, based on internal structures of semantic representations.

\begin{figure}[!t]
    \centering
    \begin{subfigure}[b]{0.16\linewidth}
        \includegraphics[width=\textwidth,clip,trim={0 0 24em 0}]{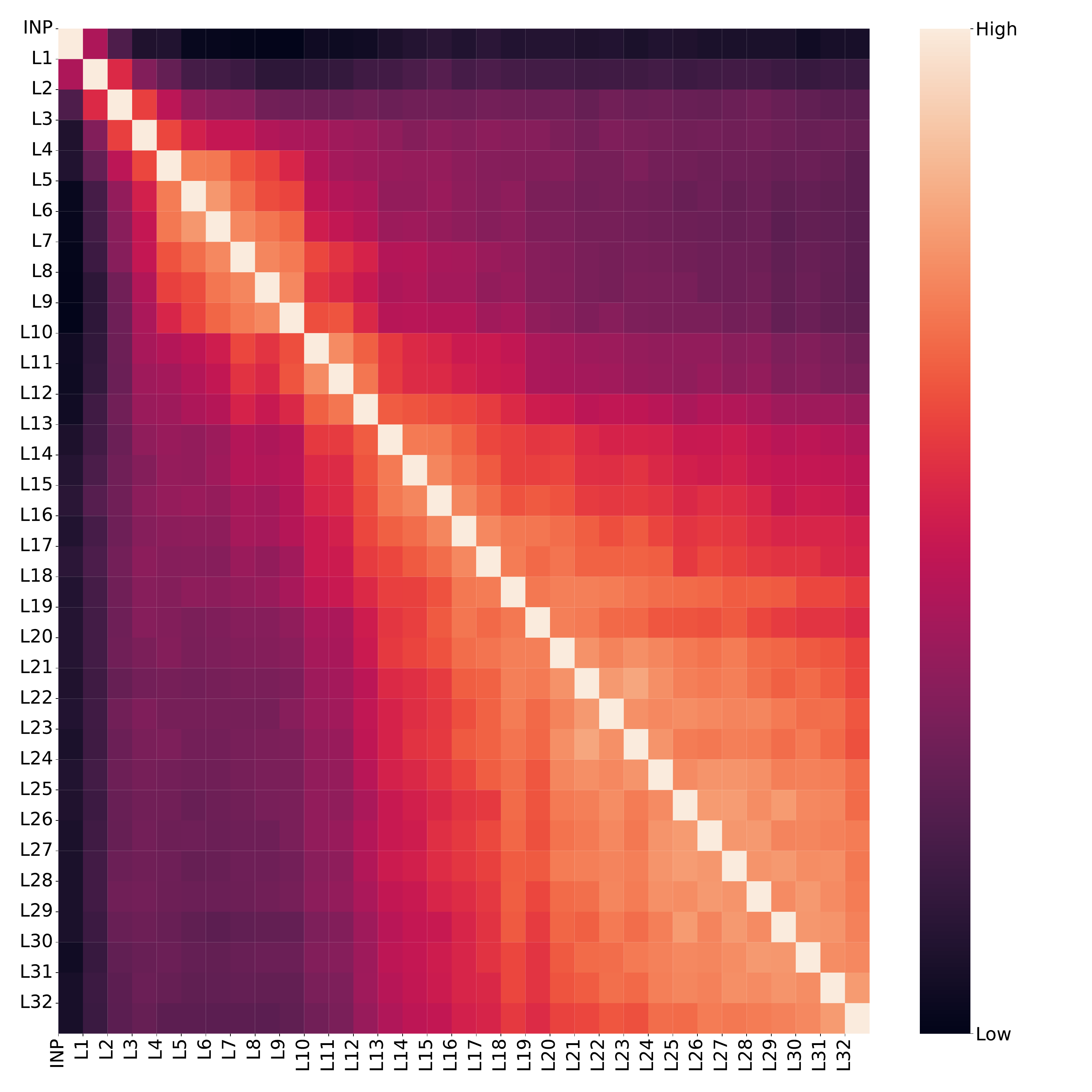}
        \caption{stage1-1k}
    \end{subfigure}
    \begin{subfigure}[b]{0.16\linewidth}
        \includegraphics[width=\textwidth,clip,trim={0 0 24em 0}]{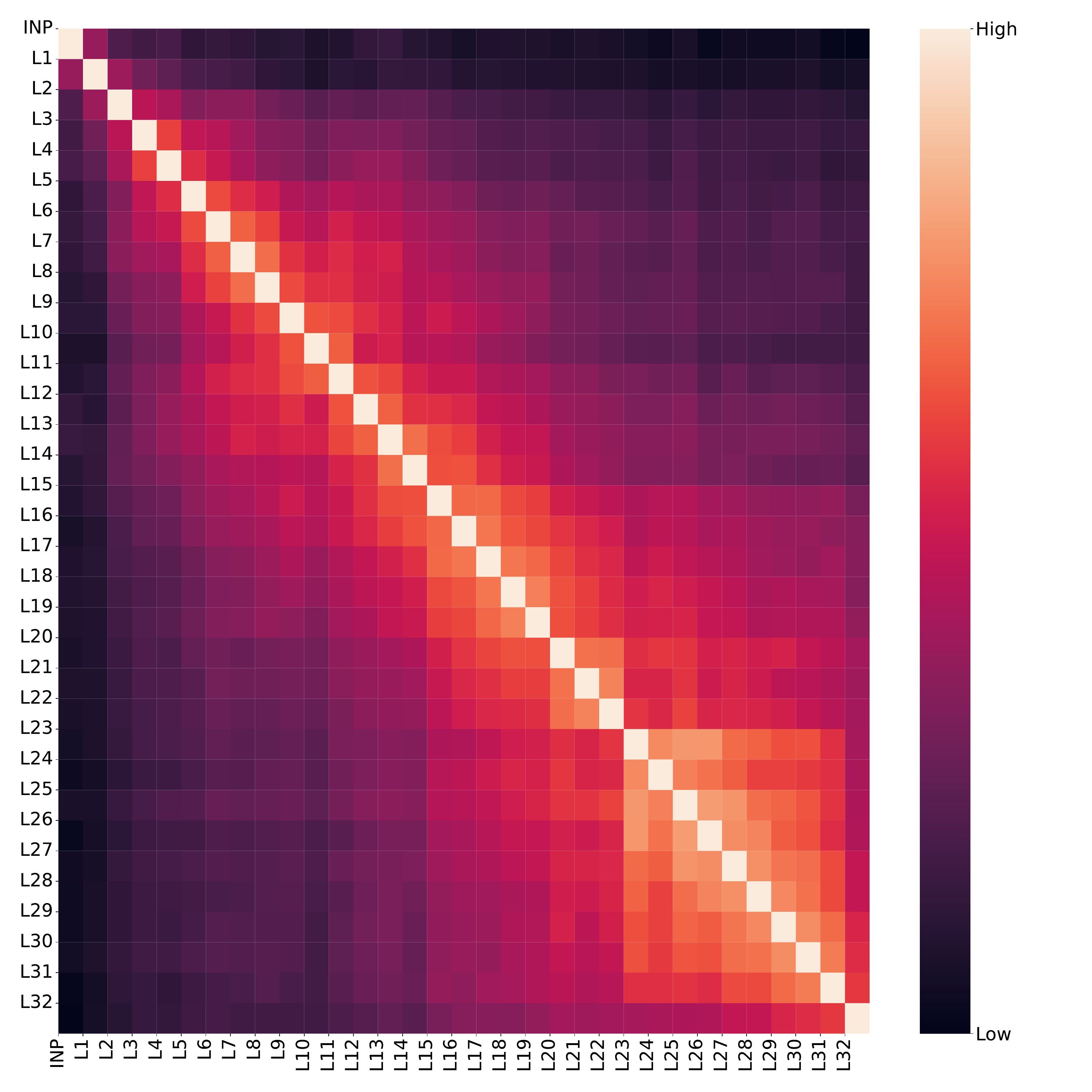}
        \caption{stage1-10k}
    \end{subfigure}
    \begin{subfigure}[b]{0.16\linewidth}
        \includegraphics[width=\textwidth,clip,trim={0 0 24em 0}]{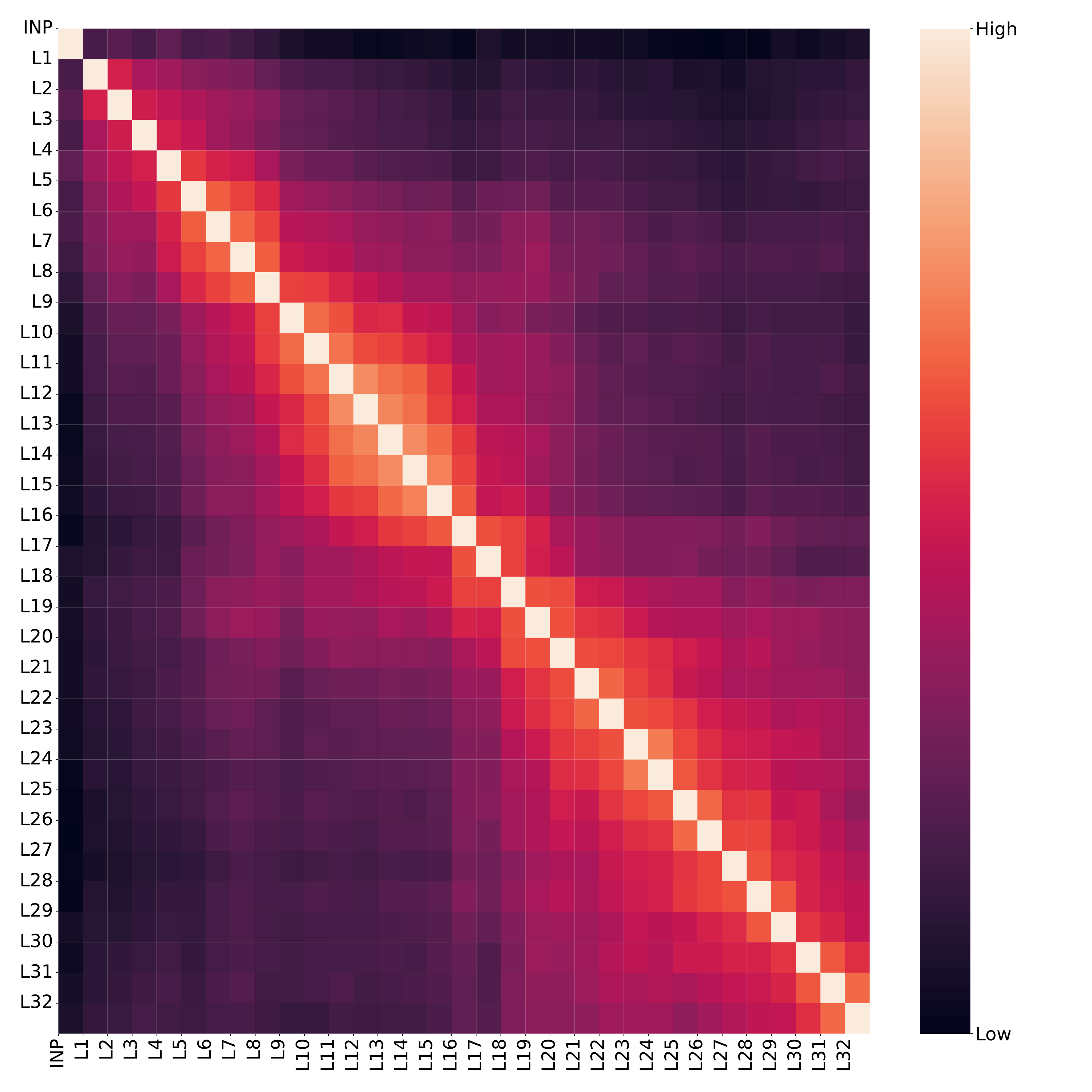}
        \caption{stage1-500k}
    \end{subfigure}
    \begin{subfigure}[b]{0.16\linewidth}
        \includegraphics[width=\textwidth,clip,trim={0 0 24em 0}]{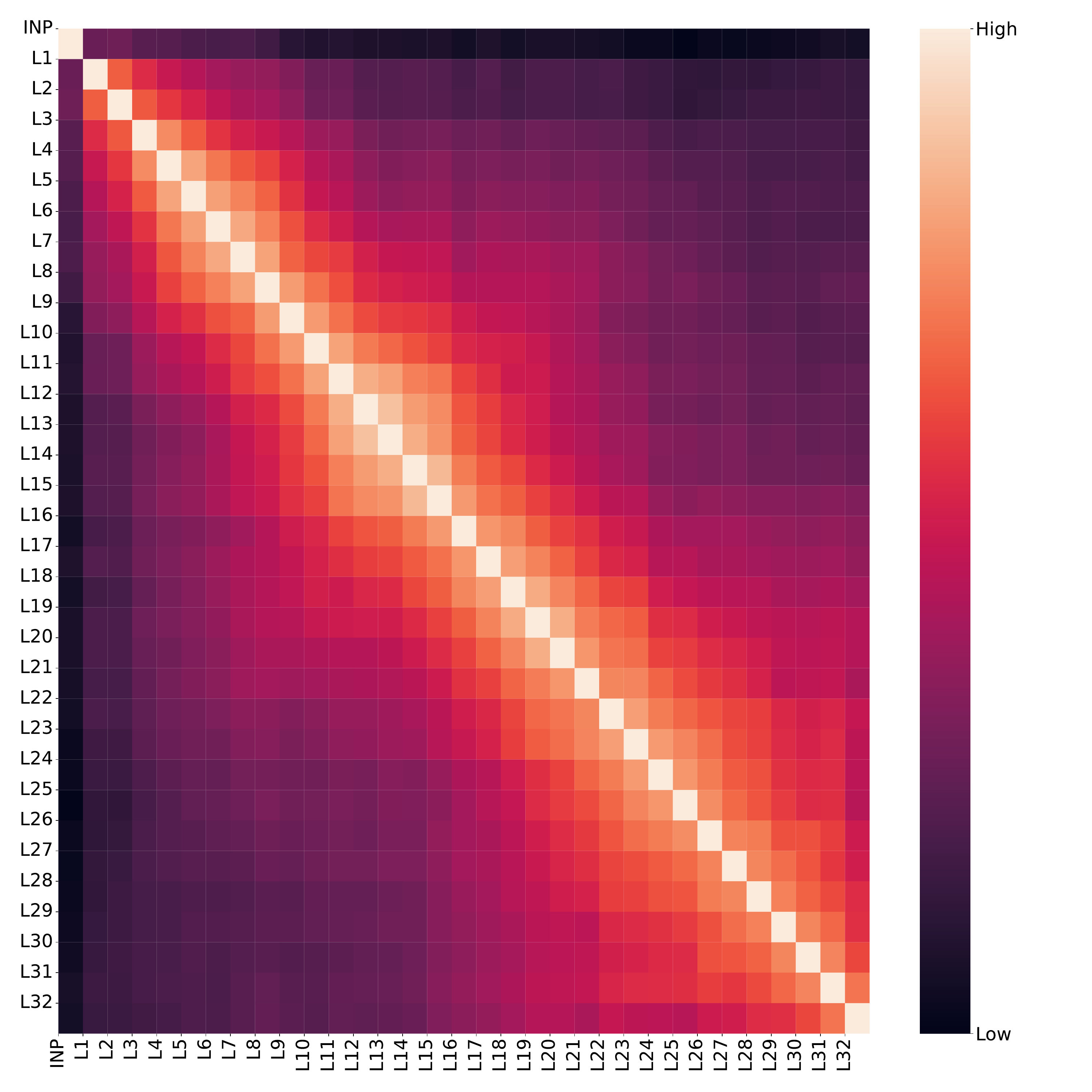}
        \caption{stage1-928k}
    \end{subfigure}
    \begin{subfigure}[b]{0.16\linewidth}
        \includegraphics[width=\textwidth,clip,trim={0 0 24em 0}]{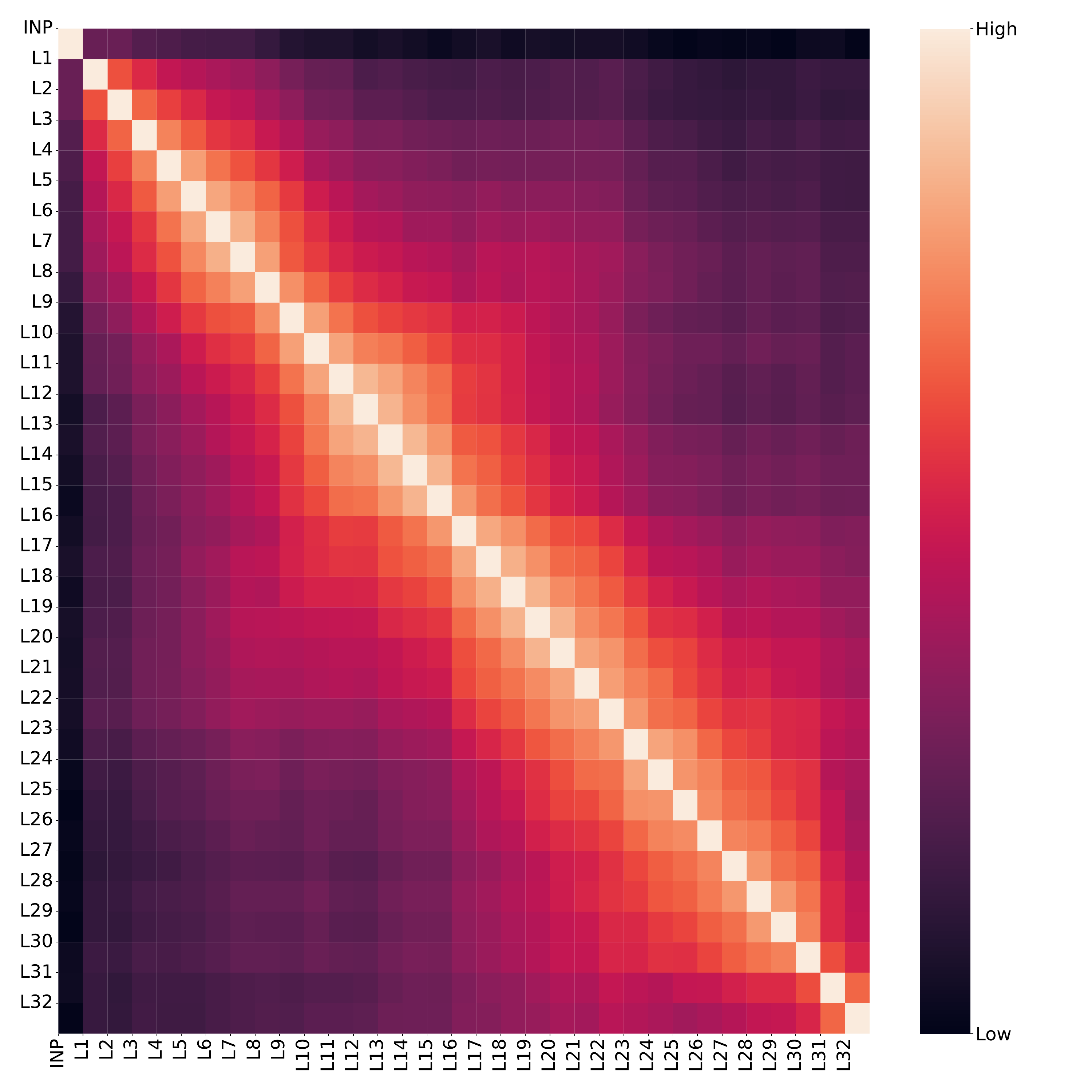}
        \caption{stage2-1k}
    \end{subfigure}
    \begin{subfigure}[b]{0.16\linewidth}
        \includegraphics[width=\textwidth,clip,trim={0 0 24em 0}]{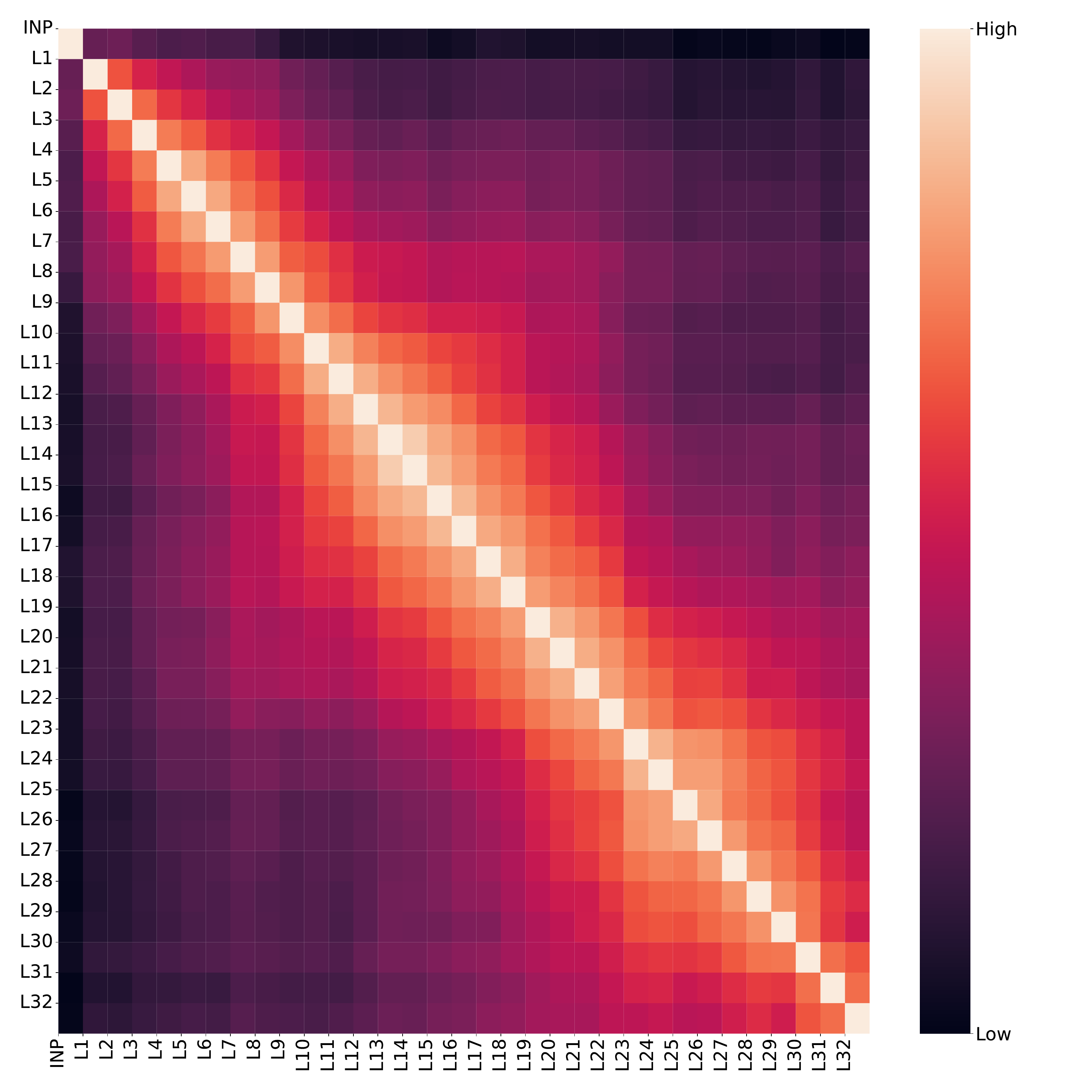}
        \caption{final}
    \end{subfigure}
    \caption{Inter-layer similarity samples of Olmo2 7B for each checkpoint on MMLU. Bright color represents high similarity, while dark color represents low similarity.}
    \label{fig:sim_mmlu_olmo}
\end{figure}

\section{Layer Similarity Pattern Consistency Across Samples}
\label{appendix:clustering_consistency}
We apply spectral clustering~\citep{shi-malik, vonLuxburg2007} to partition layers into several clusters and evaluate whether the ``islands'' patterns are consistent across samples.
For resulting clusters, we compute the Adjusted Rand Index (ARI)~\citep{Hubert1985ComparingP} to measure cluster similarity between samples and employ the conductance~\citep{SINCLAIR198993} to assess the independence of each cluster.

\paragraph{Clustering evaluation metrics}
We formally define the conductance metric as follows.
Given $l$ layers of a model,
let $\sV = \left\{\ell_1,\dots,\ell_l\right\}$ be the set of nodes, $\sC$ be the set of layers in a resulting cluster, and $\overline{\sC}$ be the complement.
The conductance $\varphi$ of the cluster is defined as:
\begin{equation}
    \varphi(\sC) = \frac{\displaystyle a(\sC,\,\overline{\sC})}{\min\left(\text{vol}(\sC),\, \text{vol}(\overline{\sC})\right)}
    \;,
\end{equation}
where
\begin{equation}
    \text{vol}(\sA) = \displaystyle a(\sA,\,\sV)\,,\quad \displaystyle a(\sA,\sB) = \sum_{i \in \sA,\, j \in \sB}\text{score}_{*}(i, j)\,.
\end{equation}
Lower conductance means a sharper border between clusters.

\paragraph{Result}
Table~\ref{tab:ari_conductance} shows that clustering is consistent across samples for a given $k$ for each metric, model, and dataset, and that $k=2$ and $k=3$ form sharp clusters for all metrics except Cos-Struct.

\begin{table*}[t]
\caption{Adjusted Rand Index (ARI) and Conductance (Cond.) on Llama 3.1 8B and Qwen2.5 7B. Bold denotes the best performance within each method.}
\label{tab:ari_conductance}
\centering
\resizebox{0.75\textwidth}{!}{
\begin{tabular}{llrrrrrrrr}
\toprule
 \multirow[*]{3}{*}{Method} & \multirow[*]{3}{*}{k} & \multicolumn{4}{c}{Llama3.1 8B} & \multicolumn{4}{c}{Qwen2.5 7B} \\
 &  & \multicolumn{2}{c}{MMLU} & \multicolumn{2}{c}{CMMLU} & \multicolumn{2}{c}{MMLU} & \multicolumn{2}{c}{CMMLU} \\
 &  & ARI $\uparrow$ & Cond. $\downarrow$ & ARI $\uparrow$ & Cond. $\downarrow$ & ARI $\uparrow$ & Cond. $\downarrow$ & ARI $\uparrow$ & Cond. $\downarrow$ \\
\midrule
\multirow[*]{3}{*}{CKA} & 2 & $.84_{\pm .171}$ & $.70_{\pm .046}$ & $.60_{\pm .256}$ & $\mathbf{.60_{\pm .075}}$ & $.92_{\pm .098}$ & $.86_{\pm .019}$ & $\mathbf{.88_{\pm .175}}$ & $\mathbf{.43_{\pm .027}}$ \\
 & 3 & $\mathbf{1.0_{\pm .016}}$ & $\mathbf{.64_{\pm .001}}$ & $\mathbf{.67_{\pm .177}}$ & $.61_{\pm .024}$ & $.92_{\pm .077}$ & $\mathbf{.81_{\pm .017}}$ & $.53_{\pm .267}$ & $.50_{\pm .057}$ \\
 & 4 & $.97_{\pm .039}$ & $.74_{\pm .001}$ & $.65_{\pm .209}$ & $.70_{\pm .017}$ & $\mathbf{.99_{\pm .060}}$ & $.87_{\pm .007}$ & $.62_{\pm .185}$ & $.57_{\pm .028}$ \\
\cmidrule(lr){1-10}
\multirow[*]{3}{*}{Cos-Base} & 2 & $1.0_{\pm .017}$ & $\mathbf{.48_{\pm .001}}$ & $\mathbf{.97_{\pm .051}}$ & $\mathbf{.47_{\pm .010}}$ & $.83_{\pm .220}$ & $\mathbf{.64_{\pm .046}}$ & $.82_{\pm .144}$ & $\mathbf{.53_{\pm .022}}$ \\
 & 3 & $\mathbf{1.0_{\pm .000}}$ & $.63_{\pm .000}$ & $.95_{\pm .054}$ & $.64_{\pm .001}$ & $\mathbf{.99_{\pm .040}}$ & $.65_{\pm .000}$ & $\mathbf{.95_{\pm .065}}$ & $.66_{\pm .003}$ \\
 & 4 & $\mathbf{1.0_{\pm .000}}$ & $.73_{\pm .000}$ & $.96_{\pm .045}$ & $.73_{\pm .000}$ & $.98_{\pm .078}$ & $.74_{\pm .000}$ & $.84_{\pm .132}$ & $.75_{\pm .001}$ \\
\cmidrule(lr){1-10}
\multirow[*]{3}{*}{Cos-Struct} & 2 & $\mathbf{1.0_{\pm .000}}$ & $1.0_{\pm .000}$ & $\mathbf{1.0_{\pm .000}}$ & $\mathbf{.92_{\pm .009}}$ & $\mathbf{1.0_{\pm .000}}$ & $.70_{\pm .000}$ & $.75_{\pm .257}$ & $\mathbf{.92_{\pm .044}}$ \\
 & 3 & $.97_{\pm .048}$ & $\mathbf{.74_{\pm .007}}$ & $.85_{\pm .175}$ & $.96_{\pm .040}$ & $.98_{\pm .074}$ & $\mathbf{.67_{\pm .013}}$ & $.75_{\pm .249}$ & $.97_{\pm .039}$ \\
 & 4 & $.75_{\pm .236}$ & $.80_{\pm .036}$ & $.97_{\pm .147}$ & $.98_{\pm .035}$ & $\mathbf{1.0_{\pm .000}}$ & $.73_{\pm .000}$ & $\mathbf{1.0_{\pm .000}}$ & $.94_{\pm .001}$ \\
\cmidrule(lr){1-10}
\multirow[*]{3}{*}{Tree-Edit} & 2 & $.92_{\pm .097}$ & $\mathbf{.21_{\pm .012}}$ & $.28_{\pm .297}$ & $\mathbf{.44_{\pm .090}}$ & $\mathbf{.90_{\pm .097}}$ & $\mathbf{.50_{\pm .015}}$ & $.36_{\pm .375}$ & $\mathbf{.26_{\pm .100}}$ \\
 & 3 & $\mathbf{.95_{\pm .066}}$ & $.36_{\pm .013}$ & $.29_{\pm .227}$ & $.47_{\pm .051}$ & $.89_{\pm .175}$ & $.60_{\pm .014}$ & $\mathbf{.54_{\pm .211}}$ & $.34_{\pm .075}$ \\
 & 4 & $.88_{\pm .115}$ & $.48_{\pm .013}$ & $\mathbf{.39_{\pm .194}}$ & $.54_{\pm .034}$ & $.87_{\pm .085}$ & $.66_{\pm .009}$ & $.45_{\pm .189}$ & $.46_{\pm .060}$ \\
\cmidrule(lr){1-10}
\multirow[*]{3}{*}{Edge-Edit} & 2 & $\mathbf{.99_{\pm .032}}$ & $\mathbf{.45_{\pm .014}}$ & $.48_{\pm .336}$ & $.57_{\pm .132}$ & $.56_{\pm .383}$ & $\mathbf{.59_{\pm .043}}$ & $.44_{\pm .426}$ & $\mathbf{.56_{\pm .039}}$ \\
 & 3 & $.91_{\pm .068}$ & $.57_{\pm .014}$ & $\mathbf{.93_{\pm .083}}$ & $\mathbf{.55_{\pm .013}}$ & $\mathbf{.93_{\pm .064}}$ & $.60_{\pm .012}$ & $\mathbf{.87_{\pm .090}}$ & $.59_{\pm .025}$ \\
 & 4 & $.86_{\pm .142}$ & $.68_{\pm .016}$ & $.80_{\pm .132}$ & $.64_{\pm .013}$ & $.79_{\pm .213}$ & $.69_{\pm .008}$ & $.65_{\pm .209}$ & $.67_{\pm .015}$ \\
\bottomrule
\end{tabular}
}
\end{table*}

\section{Frequent Subtree Mining}
\label{appendix:freqt}

To find what structures are built on the islands, we perform frequent subtree mining~\citep{abe-etal-optimized, zaki} on an instance in MMLU as a case study.
We use FREQT\footnote{\url{http://chasen.org/~taku/software/freqt/}} to run frequent subtree mining and extract ordered subtrees of eight nodes from a set of ordered trees.
We extract subtrees that appeared at least twice across the collection of trees constructed for each layer, i.e., subtrees observed in a minimum of two layers.
The input tokens and their indices are provided in the supplemental materials. %

\paragraph{Frequent subtrees in islands.}
Table~\ref{tab:freqt} shows that both models construct subtrees of depth eight that consist of continuous tokens in the middle and higher layers, and similar patterns appear in the later positions in the higher layers.
This subtree emergence pattern suggests that models construct subtrees sequentially from left to right, whereby initially formed structural representations subsequently become obsolete in the process of residual streams.
The islands are consistent with the phases observed in the intrinsic dimensionality analysis~\cite{cheng2025emergence}, which revealed that models process linguistic information (e.g., syntax and semantics) in high-intrinsic-dimensionality phases.
Moreover, for MMLU, Qwen2.5 7B relates choice tokens (e.g., ``A'') with each other in the first few layers and does not reuse these structures in later layers.

\paragraph{Frequent subtrees across non-adjacent layers.}
Frequent subtree patterns also reveal the reuse of structures in non-adjacent layers.
Table~\ref{tab:subtree_nonadjacent} shows such instances of frequent subtrees that did not appear in several layers.
The reuse of structures between adjacent layers suggests that those layers cooperate with each other during inference, as discussed in attention heads~\citep{wang2023interpretability}, and our analysis suggests that \structlensname\ reveals non-adjacent layer collaboration in terms of internal structures.

Our findings also show that the frequent subtree patterns are different between the two models, potentially influenced by training data, indicating that bottom-up analysis approaches are appropriate to assess the internal structure of LMs.

\begin{table}[t]
    \centering
    \caption{Frequent subtree samples. The tree is represented as a strict S-expression. The number before ``\_'' denotes the index of the token in the input. We replaced ``('' and ``)'' in the input tokens with ``['' and ``]'' to run subtree mining correctly.}
    \label{tab:freqt}
    \begin{subtable}[t]{0.8\linewidth}
    \centering
    \caption{Llama3.1 8B, MMLU}
    \resizebox{\linewidth}{!}{
    \begin{tabular}{l}
    \toprule
        Subtree \\
        \midrule
        (15\_,(25\_.(40\_.(47\_.(114\_.(121\_approximately(1024\_approximately))))))(37\_]))\\
        Layers: 1, 2, 3\\
        \midrule
        (1\_The(2\_following(3\_are(4\_multiple(5\_choice(6\_questions(7\_about(8\_college))))))))\\
        Layers: 4, 5, 7, 8, 9, 10, 11, 12, 13, 14, 15, 16 \\
        \midrule
        (520\_io(521\_Is(522\_chem(523\_ic(524\_Heart(525\_Disease(530\_HD)(531\_])))))))\\
        Layers: 19, 20, 21, 22, 23, 24, 25, 26, 27, 28, 29, 30, 31, 32\\
    \bottomrule
    \end{tabular}
    }
    \end{subtable}
    \begin{subtable}[t]{0.8\linewidth}
        \centering
        \caption{Qwen2.5 7B, MMLU}
        \resizebox{\linewidth}{!}{
        \begin{tabular}{l}
    \toprule
        Subtree\\
        \midrule
        (35\_[A(40\_[B(47\_[C(53\_[D(142\_[D(246\_[D(324\_[D))))))(101\_[A)) \\
        Layers: 0, 1, 2, 3, 4\\
        \midrule
        (27\_side(28\_effect(29\_of(33\_is(36\_](37\_muscle(49\_muscle))(41\_])))))) \\
        Layers: 8, 9, 10, 11, 12, 13, 14, 15, 16, 17, 18, 19, 20 \\
        \midrule
        (1013\_while(1014\_the(1015\_heart(1016\_rate(1017\_[(1018\_the(1019\_number(1020\_of)))))))) \\
        Layers: 21, 22, 23, 24, 25, 26, 27, 28\\
    \bottomrule
    \end{tabular}
    }
    \end{subtable}
    \begin{subtable}[t]{0.8\linewidth}
        \centering
        \caption{Llama3.1 8B, Multinews}
        \resizebox{\textwidth}{!}{
    \begin{tabular}{l}
    \toprule
        Subtree \\
        \midrule
        (16\.(28\_Philadelphia(324\_Philadelphia(360\_Philadelphia(792\_Philadelphia(1961\_Philadelphia (2639\_Philadelphia))))))(39\_Nov))\\
        Layers: 0, 2, 3\\
        \midrule
        (1\_You(2\_are(3\_given(4\_several(5\_news(15\_news(18\_News(19\_:))))))))\\
        Layers: 4, 5, 6, 7, 8, 9, 10, 11, 12, 13, 14, 15, 16 \\
        \midrule
        (277\_police(288\_Police(1138\_Police(1287\_Police(1628\_Police(1698\_police(1699\_ultimately)) (2202\_police))))))\\
        Layers: 18, 19, 20, 21, 22, 23, 24, 25, 26, 27, 28, 29, 30, 31, 32\\
    \bottomrule
    \end{tabular}
    }
    \end{subtable}
    \begin{subtable}[t]{0.8\linewidth}
    \centering
    \caption{Qwen2.5 7B, Multinews}
    \resizebox{\textwidth}{!}{
    \begin{tabular}{l}
    \toprule
        Subtree \\
        \midrule
        (110\_But(208\_For(242\_They(449\_The(1783\_The(2183\_The(2507\_The))))))(1899\_But))\\
        Layers: 2, 3, 4, 5, 6\\
        \midrule
        (391\_majority(392\_of(393\_people(394\_participating(395\_in(396\_this(397\_movement))))(398\_have))))\\
        Layers: 8, 9, 10, 11, 12, 13, 14, 15, 16, 17, 18, 19 \\
        \midrule
        (111\_the(112\_expected(113\_police(114\_eviction(115\_had(116\_not(117\_happened(118\_by))))))))\\
        Layers: 22, 23, 24, 25, 26, 27, 28\\
    \bottomrule
    \end{tabular}
    }
    \end{subtable}
\end{table}

\begin{table}[t!]
    \centering
    \caption{Frequent subtree patterns found in non-adjacent layers. This shows the top two patterns with the longest periods during which the structure was not used. The absence interval represents the maximum number of layers between when a structure observed at a layer disappears and when it reappears. For example, when a structure is observed in layers 3, 6, and 8, the absence interval is 3, corresponding to the number of layers from layer 3 to layer 6.}
    \label{tab:subtree_nonadjacent}
    \resizebox{0.8\linewidth}{!}{
    \begin{tabular}{l}
    \toprule
    Subtree \\
    \midrule
    Llama3.1 8B\\
    \midrule
       (72\_,(86\_.(414\_.(471\_…(570\_…(788\_…(862\_…))))(505\_.))))  \\
       Layers: 1, 5, 32 \\
       Absence interval: 27 \\
    \midrule
        (14\_A(36\_[A(41\_[B(48\_[C(54\_[D(143\_[D(1352\_[D)))(131\_[C))))) \\
        Layers: 1, 2, 3, 4, 5, 6, 7, 8, 29 \\
        Absence interval: 21\\
    \midrule
    Qwen2.5 7B \\
    \midrule
    (393\_una(477\_sauna(566\_sauna(585\_sauna(633\_sauna(752\_sauna(790\_sauna))))(674\_sauna)))) \\
    Layers: 1, 2, 17, 18, 19, 20 \\
    Absence interval: 15 \\
    \midrule
    (407\_by(408\_short(409\_-term(410\_passive(411\_exposure(412\_to(413\_extreme(414\_heat)))))))) \\
    Layers: 10, 11, 21, 22, 23, 24, 25, 26, 27, 28 \\
    Absence interval: 10\\
    \bottomrule
    \end{tabular}
    }
\end{table}

\section{Models' behavior and Structural Transformation}
\label{appendix:logitlens-structlens}
We investigate the ``islands'' phenomenon of Edge-Edit, reported in Section~\ref{sec:layer-sim-experiment}, from a model's behavioral perspective at each layer, using the logit lens.
Focusing on the final token outputs of logit lens in each layer in Figure~\ref{fig:logitlens_mmlu}, Llama3.1 8B demonstrates instruction-following behavior of selecting A/B/C/D for MMLU beginning at layer 18, and the similar trends are observed for Qwen2.5 7B at layer 22.
Table~\ref{tab:cluster-sample} examines whether this explicit transition is observed on the border of islands by spectral clustering~\citep{shi-malik, vonLuxburg2007}. %
For Llama3.1 8B, layer 18 is the critical transition point, while layer 21 is the corresponding boundary for Qwen2.5 7B.
These results demonstrate that structural transformations observed in \structlensname\ in higher layers are related to the transition of token predictions. %

\begin{figure}[t]
    \centering
    \begin{subfigure}[b]{0.49\linewidth}
        \includegraphics[width=\textwidth,clip,trim={0 0 24em 0}]{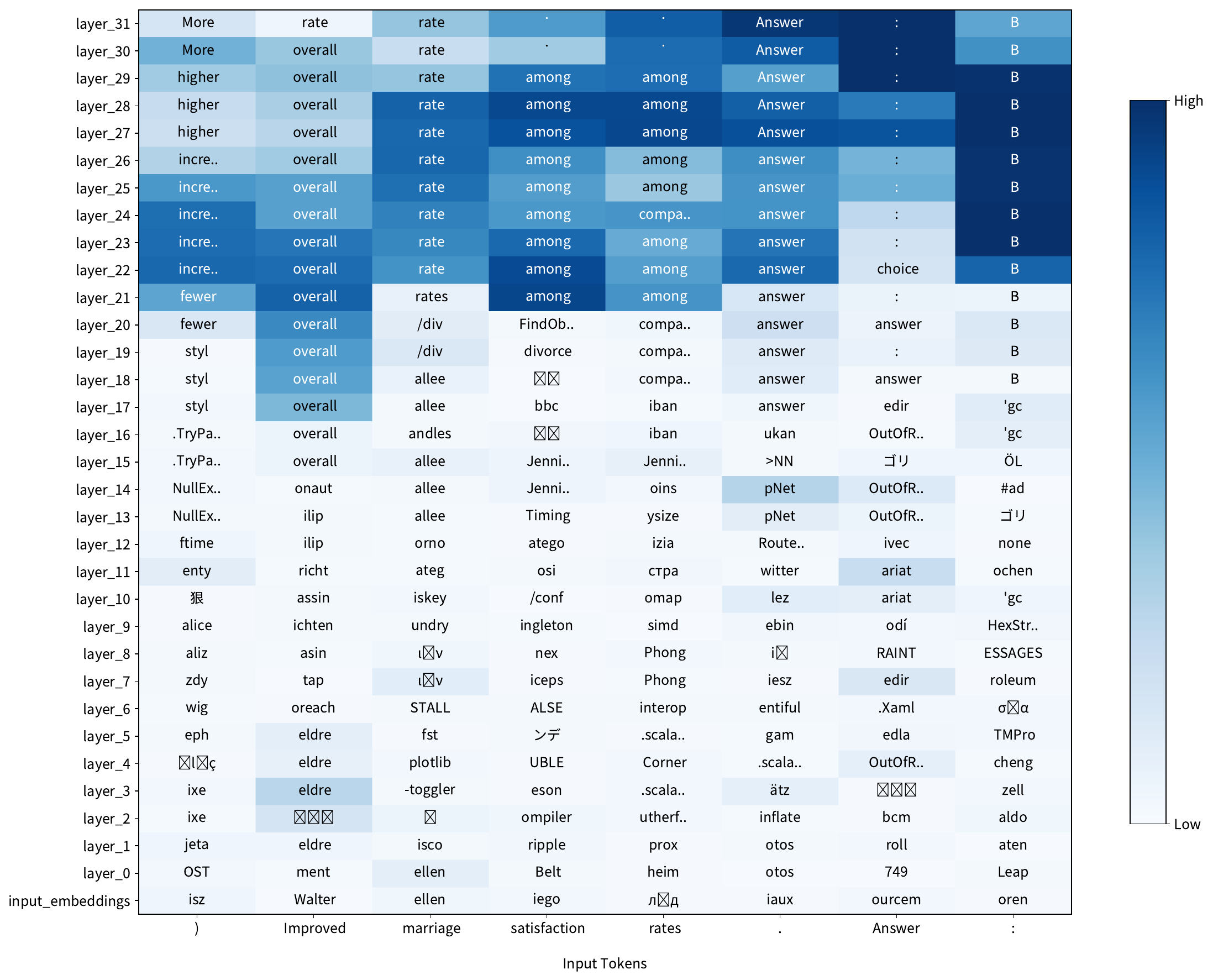}
        \caption{Llama3.1 8B}
    \end{subfigure}
    \begin{subfigure}[b]{0.49\linewidth}
        \includegraphics[width=\textwidth,clip,trim={0 0 24em 0}]{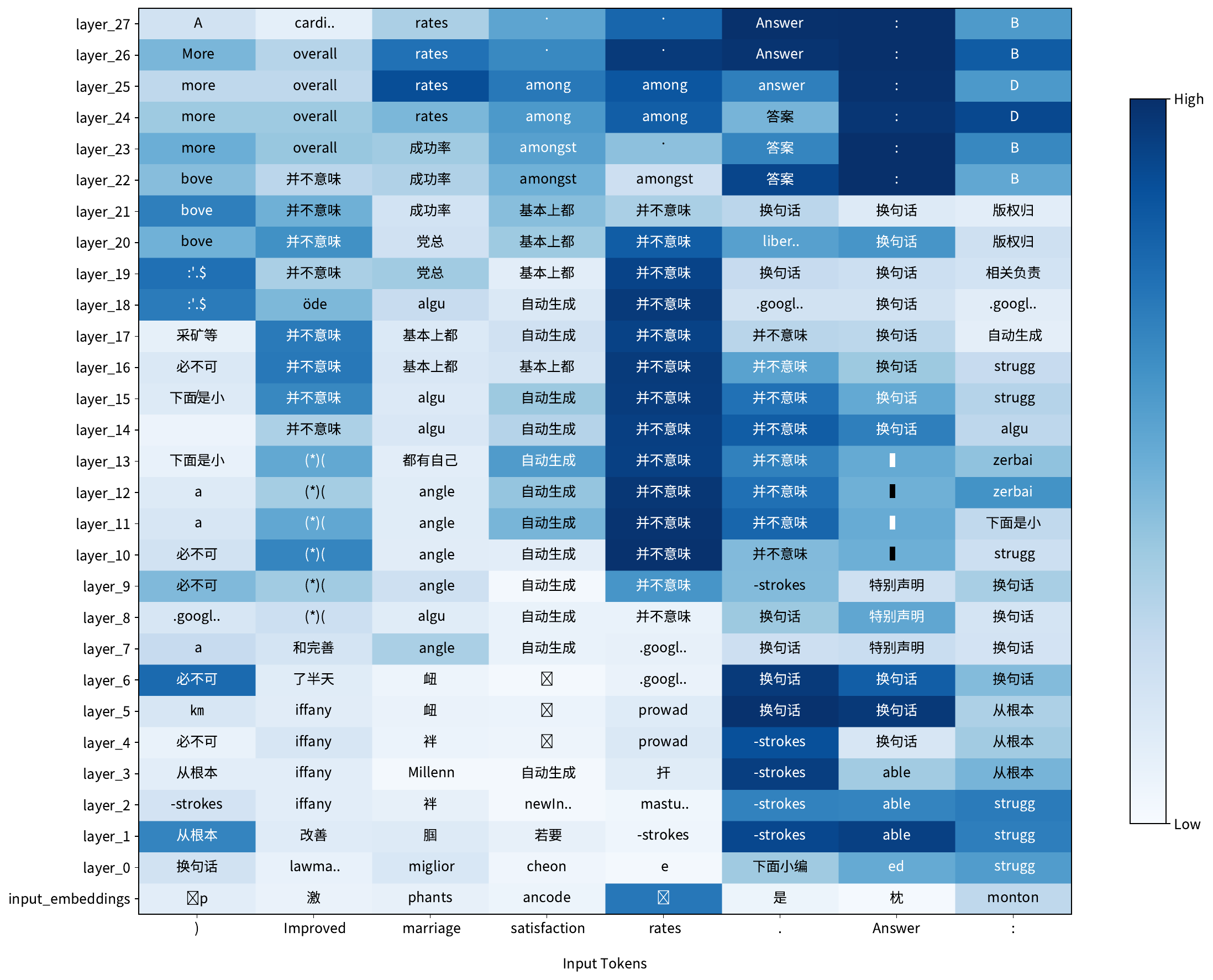}
        \caption{Qwen2.5 7B}
    \end{subfigure}
    \caption{Logit lens visualization on MMLU. We visualize the token predictions for each of the last eight tokens in the input. The lower rows represent predictions from the lower layers, while the upper rows show predictions from the higher layers. Color intensity represents prediction probability.}
    \label{fig:logitlens_mmlu}
\end{figure}

\begin{table}[!t]
    \centering
    \caption{Sample of clustering results of Edge-Edit for an instance of MMLU ($k = 3$). Layer 0 indicates the input embeddings. Cond. denotes Conductance (see Appendix~\ref{appendix:clustering_consistency}).}
    \label{tab:cluster-sample}
    \resizebox{0.8\linewidth}{!}{
    \begin{tabular}{lp{12em}rp{12em}r}
    \toprule
         & \multicolumn{2}{c}{Llama3.1 8B} & \multicolumn{2}{c}{Qwen2.5 7B} \\
         & Layers & Cond. & Layers & Cond. \\
         \midrule
        Cluster 1 & 0, 1, 2, 3. & .39 & 0, 1, 2, 3, 4, 5, 6, 7. & .64 \\
        Cluster 2 & 4, 5, 6, 7, 8, 9, 10, 11, 12, 13, 14, 15, 16, 17. & .76 & 8, 9, 10, 11, 12, 13, 14, 15, 16, 17, 18, 19, 20. & .47 \\
        Cluster 3 & 18, 19, 20, 21, 22, 23, 24, 25, 26, 27, 28, 29, 30, 31, 32. & .44 & 21, 22, 23, 24, 25, 26, 27, 28. & .54 \\
    \bottomrule
    \end{tabular}
    }
\end{table}

\begin{figure}[t]
    \centering
    \includegraphics[width=0.8\textwidth]{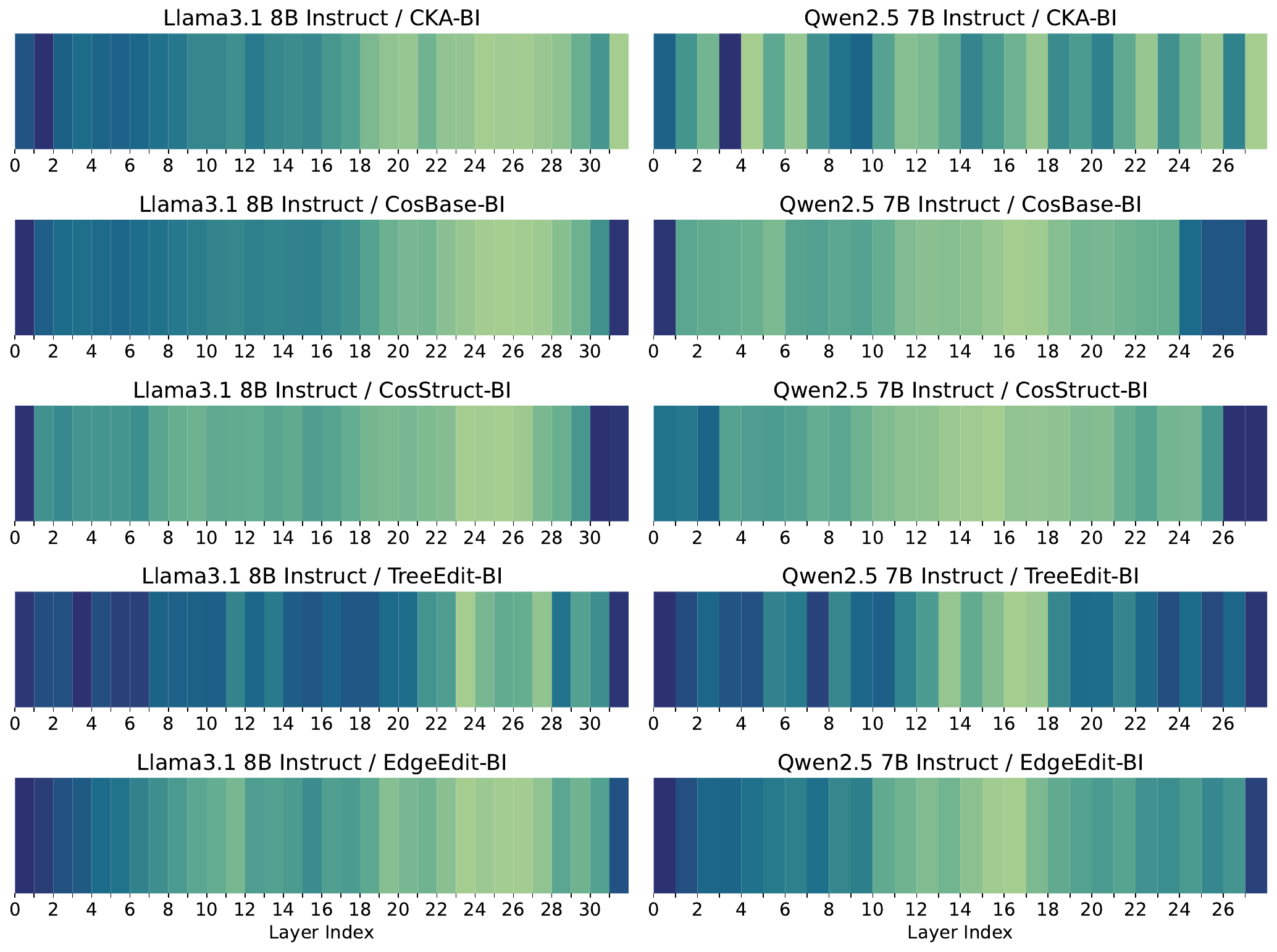}
    \caption{Visualization of layer importance. Lighter colors represent less importance, while darker colors represent greater importance.}
    \label{fig:layer_importance}
\end{figure}

\begin{table}[ht!]
\centering
\caption{Pruning results of multiple-choice QA tasks. Each value denotes the accuracy or the correctness of an answer in a particular task, higher values indicating better performance. Values denoted by $\dagger$ are statistically significant ($p < 0.05$) compared to CosBase-BI.}
\label{tab:result-pruning-pretrained}
\resizebox{0.7\linewidth}{!}{
\begin{tabular}{lllrrrr}
\toprule
Model & Metric & Layers removed & ARC-Easy & MMLU & CMMLU & Avg. \\
\midrule
\multirow[*]{7}{*}{Llama-3.1-8B} & Dense & - & 84.6\:\: & 65.3\:\: & 52.9\:\: & 67.6 \\
\cmidrule{2-7} \vspace{-1em} \\
  & CosBase-BI & 24 25 26 27 & \underline{76.8\:\:} & 62.0\:\: & 49.0\:\: & 62.6 \\
  & CKA-BI & 24 26 27 31 & 72.4$^{\dagger}$ & 62.3\:\: & 49.4\:\: & 61.4 \\
\cdashline{2-7} \vspace{-.8em} \\
  & CosStruct-BI & 23 24 25 26 & \textbf{77.8\:\:} & \underline{64.5$^{\dagger}$} & \underline{51.2$^{\dagger}$} & \textbf{64.5} \\
  & TreeEdit-BI & 23 24 26 27 & 76.8\:\: & \textbf{64.6$^{\dagger}$} & \textbf{51.9$^{\dagger}$} & \underline{64.4} \\
  & EdgeEdit-BI & 23 24 25 26 & \textbf{77.8\:\:} & \underline{64.5$^{\dagger}$} & \underline{51.2$^{\dagger}$} & \textbf{64.5} \\
\midrule
\multirow[*]{7}{*}{Llama-3.1-8B-Instruct} & Dense & - & 85.8\:\: & 68.4\:\: & 55.9\:\: & 70.0 \\
\cmidrule{2-7} \vspace{-1em} \\
  & CosBase-BI & 24 25 26 27 & \underline{78.3\:\:} & 66.8\:\: & 54.4\:\: & 66.5 \\
  & CKA-BI & 24 25 26 31 & 75.3$^{\dagger}$ & 66.9\:\: & 54.2\:\: & 65.5 \\
\cdashline{2-7} \vspace{-.8em} \\
  & CosStruct-BI & 23 24 25 26 & \textbf{79.3\:\:} & \textbf{67.8$^{\dagger}$} & \underline{55.1$^{\dagger}$} & \textbf{67.4} \\
  & TreeEdit-BI & 23 24 26 27 & 78.1\:\: & \underline{66.9\:\:} & \textbf{55.8$^{\dagger}$} & \underline{66.9} \\
  & EdgeEdit-BI & 23 24 25 26 & \textbf{79.3\:\:} & \textbf{67.8$^{\dagger}$} & \underline{55.1$^{\dagger}$} & \textbf{67.4} \\
\midrule
\multirow[*]{7}{*}{Qwen-2.5-7B} & Dense & - & 86.5\:\: & 74.3\:\: & 82.8\:\: & 81.2 \\
\cmidrule{2-7} \vspace{-1em} \\
  & CosBase-BI & 15 16 17 & \textbf{82.1\:\:} & 56.0\:\: & 56.0\:\: & 64.7 \\
  & CKA-BI & 4 22 27 & 79.2$^{\dagger}$ & \textbf{70.2$^{\dagger}$} & \textbf{77.7$^{\dagger}$} & \textbf{75.7} \\
\cdashline{2-7} \vspace{-.8em} \\
  & CosStruct-BI & 13 14 15 & \underline{81.9\:\:} & 54.7$^{\dagger}$ & 54.8$^{\dagger}$ & 63.8 \\
  & TreeEdit-BI & 12 25 26 & 71.7$^{\dagger}$ & \underline{62.9$^{\dagger}$} & \underline{72.5$^{\dagger}$} & \underline{69.0} \\
  & EdgeEdit-BI & 12 15 26 & 76.3$^{\dagger}$ & 60.8$^{\dagger}$ & 65.5$^{\dagger}$ & 67.5 \\
\midrule
\multirow[*]{7}{*}{Qwen-2.5-7B-Instruct} & Dense & - & 87.6\:\: & 74.3\:\: & 81.1\:\: & 81.0 \\
\cmidrule{2-7} \vspace{-1em} \\
  & CosBase-BI & 15 16 17 & \underline{83.7\:\:} & 56.1\:\: & 56.8\:\: & 65.5 \\
  & CKA-BI & 4 25 27 & 80.1$^{\dagger}$ & \textbf{70.0$^{\dagger}$} & \textbf{75.7$^{\dagger}$} & \textbf{75.3} \\
\cdashline{2-7} \vspace{-.8em} \\
  & CosStruct-BI & 13 14 15 & 82.8\:\: & 55.7\:\: & 55.9\:\: & 64.8 \\
  & TreeEdit-BI & 13 16 17 & 82.5$^{\dagger}$ & \underline{59.7$^{\dagger}$} & \underline{63.1$^{\dagger}$} & \underline{68.4} \\
  & EdgeEdit-BI & 14 15 16 & \textbf{84.2\:\:} & 53.4$^{\dagger}$ & 54.0$^{\dagger}$ & 63.9 \\
\bottomrule
\end{tabular}
}
\end{table}

\section{Layer Pruning Results}
\label{appendix:pruning-result-pretrained}
To evaluate the layer pruning performance of each metric, i.e., CosBase-BI, CKA-BI, CosStruct-BI, TreeEdit-BI, and EdgeEdit-BI, for pre-trained models, we evaluate them on multiple-choice QA tasks that require generating only a single choice.
Figure~\ref{fig:layer_importance} shows the layer importance of each layer.
Layer pruning results are shown in Table~\ref{tab:result-pruning-pretrained}.
These results indicate that the best metric for layer pruning is consistent within a model family.

\clearpage
\newpage
\section*{NeurIPS Paper Checklist}

\begin{enumerate}

\item {\bf Claims}
    \item[] Question: Do the main claims made in the abstract and introduction accurately reflect the paper's contributions and scope?
    \item[] Answer: \answerYes{} %
    \item[] Justification: The abstract and introduction reflect the contribution of this paper.
    \item[] Guidelines:
    \begin{itemize}
        \item The answer \answerNA{} means that the abstract and introduction do not include the claims made in the paper.
        \item The abstract and/or introduction should clearly state the claims made, including the contributions made in the paper and important assumptions and limitations. A \answerNo{} or \answerNA{} answer to this question will not be perceived well by the reviewers. 
        \item The claims made should match theoretical and experimental results, and reflect how much the results can be expected to generalize to other settings. 
        \item It is fine to include aspirational goals as motivation as long as it is clear that these goals are not attained by the paper. 
    \end{itemize}

\item {\bf Limitations}
    \item[] Question: Does the paper discuss the limitations of the work performed by the authors?
    \item[] Answer: \answerYes{} %
    \item[] Justification: We discuss the limitations in Appendix~\ref{sec:limitation}.
    \item[] Guidelines:
    \begin{itemize}
        \item The answer \answerNA{} means that the paper has no limitation while the answer \answerNo{} means that the paper has limitations, but those are not discussed in the paper. 
        \item The authors are encouraged to create a separate ``Limitations'' section in their paper.
        \item The paper should point out any strong assumptions and how robust the results are to violations of these assumptions (e.g., independence assumptions, noiseless settings, model well-specification, asymptotic approximations only holding locally). The authors should reflect on how these assumptions might be violated in practice and what the implications would be.
        \item The authors should reflect on the scope of the claims made, e.g., if the approach was only tested on a few datasets or with a few runs. In general, empirical results often depend on implicit assumptions, which should be articulated.
        \item The authors should reflect on the factors that influence the performance of the approach. For example, a facial recognition algorithm may perform poorly when image resolution is low or images are taken in low lighting. Or a speech-to-text system might not be used reliably to provide closed captions for online lectures because it fails to handle technical jargon.
        \item The authors should discuss the computational efficiency of the proposed algorithms and how they scale with dataset size.
        \item If applicable, the authors should discuss possible limitations of their approach to address problems of privacy and fairness.
        \item While the authors might fear that complete honesty about limitations might be used by reviewers as grounds for rejection, a worse outcome might be that reviewers discover limitations that aren't acknowledged in the paper. The authors should use their best judgment and recognize that individual actions in favor of transparency play an important role in developing norms that preserve the integrity of the community. Reviewers will be specifically instructed to not penalize honesty concerning limitations.
    \end{itemize}

\item {\bf Theory assumptions and proofs}
    \item[] Question: For each theoretical result, does the paper provide the full set of assumptions and a complete (and correct) proof?
    \item[] Answer: \answerNA{} %
    \item[] Justification: This paper does not include theoretical results.
    \item[] Guidelines:
    \begin{itemize}
        \item The answer \answerNA{} means that the paper does not include theoretical results. 
        \item All the theorems, formulas, and proofs in the paper should be numbered and cross-referenced.
        \item All assumptions should be clearly stated or referenced in the statement of any theorems.
        \item The proofs can either appear in the main paper or the supplemental material, but if they appear in the supplemental material, the authors are encouraged to provide a short proof sketch to provide intuition. 
        \item Inversely, any informal proof provided in the core of the paper should be complemented by formal proofs provided in appendix or supplemental material.
        \item Theorems and Lemmas that the proof relies upon should be properly referenced. 
    \end{itemize}

    \item {\bf Experimental result reproducibility}
    \item[] Question: Does the paper fully disclose all the information needed to reproduce the main experimental results of the paper to the extent that it affects the main claims and/or conclusions of the paper (regardless of whether the code and data are provided or not)?
    \item[] Answer: \answerYes{} %
    \item[] Justification: We provide detailed experimental settings to reproduce the results in Section~\ref{sec: layer-analysis}, Section~\ref{sec: layer-pruning}, and Appendix~\ref{appendix:experimental-settings}.
    \item[] Guidelines:
    \begin{itemize}
        \item The answer \answerNA{} means that the paper does not include experiments.
        \item If the paper includes experiments, a \answerNo{} answer to this question will not be perceived well by the reviewers: Making the paper reproducible is important, regardless of whether the code and data are provided or not.
        \item If the contribution is a dataset and\slash or model, the authors should describe the steps taken to make their results reproducible or verifiable. 
        \item Depending on the contribution, reproducibility can be accomplished in various ways. For example, if the contribution is a novel architecture, describing the architecture fully might suffice, or if the contribution is a specific model and empirical evaluation, it may be necessary to either make it possible for others to replicate the model with the same dataset, or provide access to the model. In general. releasing code and data is often one good way to accomplish this, but reproducibility can also be provided via detailed instructions for how to replicate the results, access to a hosted model (e.g., in the case of a large language model), releasing of a model checkpoint, or other means that are appropriate to the research performed.
        \item While NeurIPS does not require releasing code, the conference does require all submissions to provide some reasonable avenue for reproducibility, which may depend on the nature of the contribution. For example
        \begin{enumerate}
            \item If the contribution is primarily a new algorithm, the paper should make it clear how to reproduce that algorithm.
            \item If the contribution is primarily a new model architecture, the paper should describe the architecture clearly and fully.
            \item If the contribution is a new model (e.g., a large language model), then there should either be a way to access this model for reproducing the results or a way to reproduce the model (e.g., with an open-source dataset or instructions for how to construct the dataset).
            \item We recognize that reproducibility may be tricky in some cases, in which case authors are welcome to describe the particular way they provide for reproducibility. In the case of closed-source models, it may be that access to the model is limited in some way (e.g., to registered users), but it should be possible for other researchers to have some path to reproducing or verifying the results.
        \end{enumerate}
    \end{itemize}

\item {\bf Open access to data and code}
    \item[] Question: Does the paper provide open access to the data and code, with sufficient instructions to faithfully reproduce the main experimental results, as described in supplemental material?
    \item[] Answer: \answerYes{} %
    \item[] Justification: We use open-access datasets in our experiments, and code is in supplemetal material. We will release it upon acceptance.
    \item[] Guidelines:
    \begin{itemize}
        \item The answer \answerNA{} means that paper does not include experiments requiring code.
        \item Please see the NeurIPS code and data submission guidelines (\url{https://neurips.cc/public/guides/CodeSubmissionPolicy}) for more details.
        \item While we encourage the release of code and data, we understand that this might not be possible, so \answerNo{} is an acceptable answer. Papers cannot be rejected simply for not including code, unless this is central to the contribution (e.g., for a new open-source benchmark).
        \item The instructions should contain the exact command and environment needed to run to reproduce the results. See the NeurIPS code and data submission guidelines (\url{https://neurips.cc/public/guides/CodeSubmissionPolicy}) for more details.
        \item The authors should provide instructions on data access and preparation, including how to access the raw data, preprocessed data, intermediate data, and generated data, etc.
        \item The authors should provide scripts to reproduce all experimental results for the new proposed method and baselines. If only a subset of experiments are reproducible, they should state which ones are omitted from the script and why.
        \item At submission time, to preserve anonymity, the authors should release anonymized versions (if applicable).
        \item Providing as much information as possible in supplemental material (appended to the paper) is recommended, but including URLs to data and code is permitted.
    \end{itemize}

\item {\bf Experimental setting/details}
    \item[] Question: Does the paper specify all the training and test details (e.g., data splits, hyperparameters, how they were chosen, type of optimizer) necessary to understand the results?
    \item[] Answer: \answerYes{} %
    \item[] Justification: We provide detailed experimental settings to reproduce the results in Section~\ref{sec: layer-analysis}, Section~\ref{sec: layer-pruning}, and Appendix~\ref{appendix:experimental-settings}.
    \item[] Guidelines:
    \begin{itemize}
        \item The answer \answerNA{} means that the paper does not include experiments.
        \item The experimental setting should be presented in the core of the paper to a level of detail that is necessary to appreciate the results and make sense of them.
        \item The full details can be provided either with the code, in appendix, or as supplemental material.
    \end{itemize}

\item {\bf Experiment statistical significance}
    \item[] Question: Does the paper report error bars suitably and correctly defined or other appropriate information about the statistical significance of the experiments?
    \item[] Answer: \answerYes{} %
    \item[] Justification: We show error bars in Section~\ref{sec: layer-analysis} and report the statistical significance in Section~\ref{sec: layer-pruning}.
    \item[] Guidelines:
    \begin{itemize}
        \item The answer \answerNA{} means that the paper does not include experiments.
        \item The authors should answer \answerYes{} if the results are accompanied by error bars, confidence intervals, or statistical significance tests, at least for the experiments that support the main claims of the paper.
        \item The factors of variability that the error bars are capturing should be clearly stated (for example, train/test split, initialization, random drawing of some parameter, or overall run with given experimental conditions).
        \item The method for calculating the error bars should be explained (closed form formula, call to a library function, bootstrap, etc.)
        \item The assumptions made should be given (e.g., Normally distributed errors).
        \item It should be clear whether the error bar is the standard deviation or the standard error of the mean.
        \item It is OK to report 1-sigma error bars, but one should state it. The authors should preferably report a 2-sigma error bar than state that they have a 96\% CI, if the hypothesis of Normality of errors is not verified.
        \item For asymmetric distributions, the authors should be careful not to show in tables or figures symmetric error bars that would yield results that are out of range (e.g., negative error rates).
        \item If error bars are reported in tables or plots, the authors should explain in the text how they were calculated and reference the corresponding figures or tables in the text.
    \end{itemize}

\item {\bf Experiments compute resources}
    \item[] Question: For each experiment, does the paper provide sufficient information on the computer resources (type of compute workers, memory, time of execution) needed to reproduce the experiments?
    \item[] Answer: \answerYes{} %
    \item[] Justification: We report the computer resources in Appendix~\ref{appendix:experimental-settings}.
    \item[] Guidelines:
    \begin{itemize}
        \item The answer \answerNA{} means that the paper does not include experiments.
        \item The paper should indicate the type of compute workers CPU or GPU, internal cluster, or cloud provider, including relevant memory and storage.
        \item The paper should provide the amount of compute required for each of the individual experimental runs as well as estimate the total compute. 
        \item The paper should disclose whether the full research project required more compute than the experiments reported in the paper (e.g., preliminary or failed experiments that didn't make it into the paper). 
    \end{itemize}
    
\item {\bf Code of ethics}
    \item[] Question: Does the research conducted in the paper conform, in every respect, with the NeurIPS Code of Ethics \url{https://neurips.cc/public/EthicsGuidelines}?
    \item[] Answer: \answerYes{} %
    \item[] Justification: This paper confirms with the NeurIPS Code of Ethics.
    \item[] Guidelines:
    \begin{itemize}
        \item The answer \answerNA{} means that the authors have not reviewed the NeurIPS Code of Ethics.
        \item If the authors answer \answerNo, they should explain the special circumstances that require a deviation from the Code of Ethics.
        \item The authors should make sure to preserve anonymity (e.g., if there is a special consideration due to laws or regulations in their jurisdiction).
    \end{itemize}

\item {\bf Broader impacts}
    \item[] Question: Does the paper discuss both potential positive societal impacts and negative societal impacts of the work performed?
    \item[] Answer: \answerYes{} %
    \item[] Justification: This paper discusses potential impacts in Section~\ref{sec:conclusion}
    \item[] Guidelines:
    \begin{itemize}
        \item The answer \answerNA{} means that there is no societal impact of the work performed.
        \item If the authors answer \answerNA{} or \answerNo, they should explain why their work has no societal impact or why the paper does not address societal impact.
        \item Examples of negative societal impacts include potential malicious or unintended uses (e.g., disinformation, generating fake profiles, surveillance), fairness considerations (e.g., deployment of technologies that could make decisions that unfairly impact specific groups), privacy considerations, and security considerations.
        \item The conference expects that many papers will be foundational research and not tied to particular applications, let alone deployments. However, if there is a direct path to any negative applications, the authors should point it out. For example, it is legitimate to point out that an improvement in the quality of generative models could be used to generate Deepfakes for disinformation. On the other hand, it is not needed to point out that a generic algorithm for optimizing neural networks could enable people to train models that generate Deepfakes faster.
        \item The authors should consider possible harms that could arise when the technology is being used as intended and functioning correctly, harms that could arise when the technology is being used as intended but gives incorrect results, and harms following from (intentional or unintentional) misuse of the technology.
        \item If there are negative societal impacts, the authors could also discuss possible mitigation strategies (e.g., gated release of models, providing defenses in addition to attacks, mechanisms for monitoring misuse, mechanisms to monitor how a system learns from feedback over time, improving the efficiency and accessibility of ML).
    \end{itemize}
    
\item {\bf Safeguards}
    \item[] Question: Does the paper describe safeguards that have been put in place for responsible release of data or models that have a high risk for misuse (e.g., pre-trained language models, image generators, or scraped datasets)?
    \item[] Answer: \answerNA{} %
    \item[] Justification: This paper does not pose such risks.
    \item[] Guidelines:
    \begin{itemize}
        \item The answer \answerNA{} means that the paper poses no such risks.
        \item Released models that have a high risk for misuse or dual-use should be released with necessary safeguards to allow for controlled use of the model, for example by requiring that users adhere to usage guidelines or restrictions to access the model or implementing safety filters. 
        \item Datasets that have been scraped from the Internet could pose safety risks. The authors should describe how they avoided releasing unsafe images.
        \item We recognize that providing effective safeguards is challenging, and many papers do not require this, but we encourage authors to take this into account and make a best faith effort.
    \end{itemize}

\item {\bf Licenses for existing assets}
    \item[] Question: Are the creators or original owners of assets (e.g., code, data, models), used in the paper, properly credited and are the license and terms of use explicitly mentioned and properly respected?
    \item[] Answer: \answerYes{} %
    \item[] Justification: This paper mentions existing assets properly in Sections~\ref{sec: layer-analysis}~and~\ref{sec: layer-pruning} and Appendix~\ref{appendix:experimental-settings}.
    \item[] Guidelines:
    \begin{itemize}
        \item The answer \answerNA{} means that the paper does not use existing assets.
        \item The authors should cite the original paper that produced the code package or dataset.
        \item The authors should state which version of the asset is used and, if possible, include a URL.
        \item The name of the license (e.g., CC-BY 4.0) should be included for each asset.
        \item For scraped data from a particular source (e.g., website), the copyright and terms of service of that source should be provided.
        \item If assets are released, the license, copyright information, and terms of use in the package should be provided. For popular datasets, \url{paperswithcode.com/datasets} has curated licenses for some datasets. Their licensing guide can help determine the license of a dataset.
        \item For existing datasets that are re-packaged, both the original license and the license of the derived asset (if it has changed) should be provided.
        \item If this information is not available online, the authors are encouraged to reach out to the asset's creators.
    \end{itemize}

\item {\bf New assets}
    \item[] Question: Are new assets introduced in the paper well documented and is the documentation provided alongside the assets?
    \item[] Answer: \answerNA{} %
    \item[] Justification: This paper does not release new assets.
    \item[] Guidelines:
    \begin{itemize}
        \item The answer \answerNA{} means that the paper does not release new assets.
        \item Researchers should communicate the details of the dataset\slash code\slash model as part of their submissions via structured templates. This includes details about training, license, limitations, etc. 
        \item The paper should discuss whether and how consent was obtained from people whose asset is used.
        \item At submission time, remember to anonymize your assets (if applicable). You can either create an anonymized URL or include an anonymized zip file.
    \end{itemize}

\item {\bf Crowdsourcing and research with human subjects}
    \item[] Question: For crowdsourcing experiments and research with human subjects, does the paper include the full text of instructions given to participants and screenshots, if applicable, as well as details about compensation (if any)? 
    \item[] Answer: \answerNA{} %
    \item[] Justification: This paper does not involve crowdsourcing nor human subjects.
    \item[] Guidelines:
    \begin{itemize}
        \item The answer \answerNA{} means that the paper does not involve crowdsourcing nor research with human subjects.
        \item Including this information in the supplemental material is fine, but if the main contribution of the paper involves human subjects, then as much detail as possible should be included in the main paper. 
        \item According to the NeurIPS Code of Ethics, workers involved in data collection, curation, or other labor should be paid at least the minimum wage in the country of the data collector. 
    \end{itemize}

\item {\bf Institutional review board (IRB) approvals or equivalent for research with human subjects}
    \item[] Question: Does the paper describe potential risks incurred by study participants, whether such risks were disclosed to the subjects, and whether Institutional Review Board (IRB) approvals (or an equivalent approval/review based on the requirements of your country or institution) were obtained?
    \item[] Answer: \answerNA{} %
    \item[] Justification: This paper does not involve crowdsourcing nor human subjects.
    \item[] Guidelines:
    \begin{itemize}
        \item The answer \answerNA{} means that the paper does not involve crowdsourcing nor research with human subjects.
        \item Depending on the country in which research is conducted, IRB approval (or equivalent) may be required for any human subjects research. If you obtained IRB approval, you should clearly state this in the paper. 
        \item We recognize that the procedures for this may vary significantly between institutions and locations, and we expect authors to adhere to the NeurIPS Code of Ethics and the guidelines for their institution. 
        \item For initial submissions, do not include any information that would break anonymity (if applicable), such as the institution conducting the review.
    \end{itemize}

\item {\bf Declaration of LLM usage}
    \item[] Question: Does the paper describe the usage of LLMs if it is an important, original, or non-standard component of the core methods in this research? Note that if the LLM is used only for writing, editing, or formatting purposes and does \emph{not} impact the core methodology, scientific rigor, or originality of the research, declaration is not required.
    \item[] Answer: \answerNA{} %
    \item[] Justification: The core method development in this research does not involve LLMs as any important, original, or non-standard components.
    \item[] Guidelines:
    \begin{itemize}
        \item The answer \answerNA{} means that the core method development in this research does not involve LLMs as any important, original, or non-standard components.
        \item Please refer to our LLM policy in the NeurIPS handbook for what should or should not be described.
    \end{itemize}

\end{enumerate}

\end{document}